\def\PP{{\mathbb P}}
\def\Prob{{\mathbb P}}
\def\E{{\mathbb E}}
\def\R{{\mathbb R}}
\def\Vhat{\widehat{V}}
\def\rbar{{\bar{r}}}
\def\fea{\varphi}
\def\pfea{\phi}
\def\var{\text{var}}
\def\rfea{\zeta}
\def\ghat{\hat{g}}
\def\phat{\hat{p}}
\def\calC{\mathcal{C}}
\def\calS{\mathcal{S}}
\def\bfP{\mathbf{P}}
\newtheorem{theorem}{Theorem}
\newtheorem{corollary}{Corollary}
\newtheorem{definition}{Definition}
\newtheorem{lemma}{Lemma}
\newtheorem{proposition}{Proposition}
\newtheorem{assumption}{Assumption}
\begin{document}
\title{Stochastic Gradient Descent with Dependent Data for Offline Reinforcement Learning}
\author{Jing Dong and Xin T. Tong}
\date{} 
\maketitle
\begin{abstract}
In reinforcement learning (RL), offline learning decoupled learning from data collection 
and is useful in dealing with exploration-exploitation tradeoff and enables data reuse in many applications. In this work, we study two offline learning tasks: policy evaluation and policy learning. 
For policy evaluation, we formulate it as a stochastic optimization problem and show that it can be solved using  approximate stochastic gradient descent (aSGD) with time-dependent data.
We show aSGD
achieves $\tilde O(1/t)$ convergence when the loss function is strongly convex and the rate is independent of the discount factor $\gamma$. 
This result can be extended to include algorithms making approximately contractive iterations such as TD(0).
The policy evaluation algorithm is then combined with the policy iteration algorithm to learn the optimal policy. To achieve an $\epsilon$ accuracy, the complexity of the algorithm is $\tilde O(\epsilon^{-2}(1-\gamma)^{-5})$, which matches the complexity bound for classic online RL algorithms such as Q-learning.
\end{abstract}

\section{Introduction}

Reinforcement learning (RL) is a fast developing area and has seen remarkable success in a wide range of applications.
There are various RL algorithms tailored to different application contexts. 
When designing RL algorithms, we need to distinguish whether the data is collected online or offline.
In the online setting, one can iteratively collect data by interacting
with the environment, and then use the data to improve the policy, which can be applied to collect more data in the follow-up iterations. 
However, in some important applications such as autonomous driving and healthcare, online interaction can be infeasible, expensive, or undesirable \cite{levine2020offline}.
In this context, it is more practical to use previously collected data, which is often referred to as the offline or complete off-policy setting. Offline RL is comparatively more challenging than online RL, and some of its theoretical properties are less understood than their online counterparts \cite{sutton2009fast,thomas2016data,yin2021near}. In this work, we focus on offline RL and show that under appropriate conditions, its complexity is similar to that of online RL.

We also distinguish between two RL tasks: policy evaluation and policy learning.
Policy evaluation refers to the task of estimating the value function for a given policy. While this task can be of independent interests in 
many applications, it also serves as an intermediate step in many policy learning algorithms such as least-squares policy iteration \cite{lagoudakis2003least} and proximal policy optimization \cite{schulman2017proximal}.
In this work, we study the policy evaluation problem in the offline setting where data is generated by the underlying Markov chain following some prefixed behavior policy. Utilizing a stochastic optimization framework, we develop a stochastic gradient based learning algorithm and establish the corresponding finite-sample performance bound. 
Our development allows linear parameterization of the value function via feature functions which can be applied to problems with infinite states. 
We further establish complexity bound for the policy learning task based on policy iteration with approximated policy evaluation.  

Lastly, we distinguish between two commonly used reward formulations in the non-episodic setting: long-run average reward and discounted reward. 
The long-run average reward is a more appropriate formulation in many applications \cite{marbach2000call,tangamchit2002necessity}. However, it has received much less attention than its discounted counterpart due to 
the lack  contraction properties \cite{Sutton:2018}. Our policy evaluation analysis provides a unified framework to both reward formulations, and
the convergence rate can be independent of the discount factor.   

\subsection{Related literature and our contribution}

\paragraph{Finite-time analysis}
Many RL algorithms, especially policy evaluation algorithms, can be viewed as gradient based updates \cite{sutton2009fast}.
In this context, tools from stochastic approximation has been applied to study the algorithm performance.
For example, ordinary differential equation based methods have been applied to establish convergence of different algorithms \cite{tsitsiklis1997analysis, borkar2000ode}.  
Recent works further develop finite-time performance bounds using drift based methods with properly constructed Lyapunov functions \cite{srikant2019finite, chen2021lyapunov}.
See also \cite{dalal2018finite,gupta2019finite,xu2019two} for finite-time analysis of two-time-scale algorithms.

Our first contribution is that we formulate the offline policy evaluation as a stochastic optimization problem and establish finite-time performance bound of a gradient descent based method
that is independent of the discount factor. There are two challenges when applying the gradient descent framework.
First, the data are not independent and identically distributed (i.i.d.). We overcome this challenge by utilizing suitable ergodicity
properties of the underlying Markov processes. In particular, we extend the convergence analysis of stochastic gradient descent (SGD) to accommodate time-dependent data.
Second, there are further gradient estimation errors beyond the stochastic noise. 
We explicitly characterize how the estimation error affects the convergence of the algorithm.

Several recent papers also study RL algorithms via the lens of SGD.
The paper \cite{liu2020finite} shows how the gradient-based temporal difference (TD) algorithm can be viewed
as a true gradient descent algorithm with a prima-dual saddle point objective function. Its performance
analysis focuses on i.i.d. observations. The work \cite{zhang2021average} extends the algorithm and analysis 
from discounted reward formulation to long-run average reward, but still assumes i.i.d. observations.
The paper \cite{wang2018finite} extends the discounted-reward analysis to time-dependent data.
Meanwhile, the work \cite{bhandari2018finite} establishes finite-time performance bounds for online TD algorithms for discounted-reward using SGD-based arguments. 
Compared with these works, 1) our analysis can deal with long-run average
reward formulation, 2) allows time-dependent data, 3) works in the offline setting, and 4) in the strongly convex or contraction case, achieves $\tilde O(1/T)$ convergence, which is better than the rate in \cite{wang2018finite, zhang2021average} and matches that in \cite{bhandari2018finite}.

\paragraph{SGD with time-dependent data}
As an important intermediate step, our analysis also provides new complexity bounds for SGD running on time-dependent data.
While this problem has been studied before by  \cite{agarwal2012generalization,sun2018markov}, our results are different in the following sense.
First, our results allow a more general notion of stochastic gradient or stochastic update direction, which is necessary for the analysis of the TD-SGD iteration and TD(0) iteration. 
Second, the earlier result assumes ``oracle" (exact) stochastic gradient. We extend the result to allow further approximate errors of the stochastic gradient.
Third, the bound in \cite{agarwal2012generalization} requires some prior knowledge of the ``regret", i.e., equation (6) in \cite{agarwal2012generalization}, which may not be readily available in practice. In comparison, we have more explicit conditions and step-size requirements.

\paragraph{Online optimization algorithm}
 Most existing algorithms that deal with long-run average reward in the offline setting operate on a fixed batch of data 
\cite{liu2018breaking, mousavi2020black, zhang2020gradientdice}. They estimate the ratio of the steady-state occupancy measure under the behavior policy and the target policy.
From the data processing perspective, our proposed optimization algorithm is online. In each iteration, we read one more data point and only update quantities related to the corresponding data point. The closest development to ours is \cite{wan2021learning}, but it only establishes asymptotic convergence of the algorithm, while we also provide non-asymptotic performance bounds.  Note that ``online/offline" can have different meanings in RL and optimization.
We study the convergence of an online optimization algorithm in offline RL. 

\paragraph{Policy learning} A direct application of our policy evaluation algorithm is to be combined with policy iteration to learn the optimal policy.
For the discounted reward formulation, we show that when the loss function under the linear parametrization is convex, to achieve an $\epsilon$-accuracy,
the total complexity is $\tilde O(\epsilon^{-4}(1-\gamma)^{-9})$. When the loss function is strongly convex or the stochastic update direction satisfies a contraction property on average, the total complexity is $\tilde O(\epsilon^{-5}(1-\gamma)^{-2})$. The later complexity bound matches that of online Q-learning algorithms \cite{qu2020finite}.

\subsection{Problem Setting}
We consider a Markov decision process (MDP), $\mathcal{M}=(\mathcal{S}, \mathcal{A}, r, P, \gamma)$
where $\mathcal{S}$ is the state space, $\mathcal{A}$ is the action space, $r(s,a)$ is the random reward
when the agent is in state $s\in \mathcal{S}$ and selects action $a\in \mathcal{A}$, $P(s^{\prime}|s,a)$ is the probability of transition to state $s^{\prime}$ in the next epoch given the current state $s$ and taken action $a$, $\gamma\in(0,1]$ is the discount factor. With a little abuse of notation, we also denote $P(r|s,a)$ as the probability of receiving reward $r$ given the current state $s$ and taken action $a$. We assume that given $(s,a)$, $s'$ and $r$ are independent.

A policy $\pi$ is a mapping from each state $s\in \mathcal{S}$ to a probability measure over the actions in $\mathcal{A}$.
Specifically, we write $\pi(a|s)$ as the probability of taking action $a$ when the agent is in state $s$. We denote $\E_s^{\pi}$ as the expectation of functionals of the underlying Markov process under policy $\pi$ starting from state $s$ and $\text{var}^\pi_s$ as the corresponding variance. 

For policy evaluation, given a target policy $\pi$, we want to compute the cumulative discounted reward starting from an initial state $s$, which is given by 
\begin{equation}
\label{eqn:Vgamma}
V_{\gamma}^{\pi}(s)=\E_s^{\pi} \left[\sum_{t=0}^{\infty}\gamma^tr(s_t,a_t)\right],
\end{equation}
where $\gamma<1$. For some applications, it is more reasonable to consider the long-run average reward and  the
average value bias function, which take the form
\begin{equation}
\label{eqn:rpi}
\bar r^{\pi} = \lim_{T\to \infty}\frac{1}{T}\E_s^{\pi}\left[\sum_{t=1}^T r(s_t,a_t)\right],
\quad \bar V^{\pi}(s)=\E_s^{\pi}\left[\sum_{t=0}^{\infty}r(s_t,a_t)-\bar r^{\pi}\right]. 
\end{equation}
Note that $\bar r^{\pi}$ does not depend on the initial state. 
%
It is well known that the long-run average reward can be viewed as a singular limit of the discounted reward as $\gamma\to 1$ \cite{blackwell1962discrete}, i.e., 
$\lim_{\gamma\rightarrow 1} \left(V_{\gamma}^{\pi}(s) - \bar r^{\pi}/(1-\gamma)\right)=\bar V^{\pi}(s)$.
Thus, we also refer to the long-run average case as the $\gamma=1$ case.

We consider the offline setting where we have access to a sequence of previously collected data: $(s_t, a_t, r_t, s_t^{\prime})_{0\leq t\leq T}$.
The data is generated by the underlying Markov chain under a behavior policy $b$. In particular, $a_t\sim b(a|s_t)$ and $s_{t+1}=s_t^{\prime}$. 
Note that $b$ can be different from the target policy $\pi$. 

We also introduce the following notations that are used throughout our subsequent development. For a given policy $\pi$, let $P^{\pi}$ denote the transition kernel of the Markov chain under policy $\pi$, i.e., $P^{\pi}(s,s')=\sum_{a}\pi(a|s)P(s'|s,a)$, $s,s'\in\mathcal{S}$. Under suitable ergodicity condition, this Markov chain has a stationary distribution, which we denote as $\mu^{\pi}$.
For a given measure $\mu$ on $\mathcal{S}$, we denote $\E_{\mu} f=\sum_{s\in \mathcal{S}} \mu(s)f(s)$ and $\var_{\mu}f=\sum \mu(s)(f(s)-\E_{\mu} f)^2$.
For two probability measures $\mu$ and $\nu$, we denote $\|\mu-\nu\|_{TV}$ as the total variation distance 
between $\mu$ and $\nu$. 

We use two notions to quantify the convergence rate of the Markov chain to stationarity: mixing time and spectral gap. Consider a generic Markov chain $s_t$ with transition kernel $P$ and stationary distribution $\mu$.
\begin{definition} 
The Markov chain has a mixing time $\tau<\infty$ if for any $s_0\in \mathcal{S}$ ,
\[\|P(s_{\tau}\in\,\cdot\,|s_0)-\mu(\cdot)\|_{TV}\leq 1/4.\] 
\end{definition}
The mixing time exists for irreducible and aperiodic Markov chains \cite{levin2017markov}.
\begin{definition}
\label{def:spectral}
The Markov chain has a spectral gap $\lambda$ if for any $\mu$-$l_2$ measurable function $f$, 
$\var_{\mu}(P f)\leq (1-\lambda)^2 \var_{\mu} f$.
\end{definition}
For a finite-state Markov chain, if the largest two eigenvalues of $P$ are $1$ and $\rho\in \mathbb{R}^+$, then 
it has a spectral gap $\lambda=1-\rho$. 

For a vector $\theta \in \mathbb{R}^d$, we denote $\|\theta\|$ as its Euclidean norm and
$\|\theta\|_{\infty}$ as its supremum norm. For a matrix $A$, we define $\|A\|=\sup_{x\neq 0}\|Ax\|/\|x\|$.
We also define $e_i$ as a unit vector whose $i$-th element is equal to $1$, i.e, the $i$-th basis vector, $I$ as the identity matrix, and ${\bf 1}$ and ${\bf 0}$
as vectors with all elements equal to $1$ and $0$ respectively.

Lastly, given two sequences of nonnegative real numbers $\{a_n\}_{n\ge 1}$ and $\{b_n\}_{n\ge 1}$, we define $b_n=O(a_n)$ and  if there exist some constants $C$ such that $ b_n \le C a_n$. We use $\tilde O$ when we ignore the logarithmic factors in $O$.

\section{Main Methodologies and Algorithms}
In this section, we introduce our main algorithm for policy evaluation.
The algorithm is based on applying SGD to a properly formulated
stochastic optimization problem.
\subsection{A unified loss function for policy evaluation}
Policy evaluation can be formulated as finding a solution to the Bellman equation.
Traditionally, the Bellman equation is discussed separately for the discounted $(\gamma<1)$ and long-run average $(\gamma=1)$ formulations. 
In particular, the Bellman operator for the discounted reward takes the form
\[
V_{\gamma}^{\pi}(s)=\E_s^{\pi}\left[r+\gamma V_{\gamma}^{\pi}(s^{\prime})\right]
=\sum_{a}\sum_{r}\pi(a|s)P(r|s,a)r + \gamma \sum_{a}\sum_{s^{\prime}}\pi(a|s)P(s^{\prime}|s,a)V_{\gamma}^{\pi}(s^{\prime})
.\]
When $\gamma=1$, the Bellman operator for the long-run average reward takes the form
\[
\bar V^{\pi}(s)=\E_s^{\pi}\left[r-\bar r + \bar V^{\pi}(s^{\prime})\right]
=\sum_{a}\sum_{r}\pi(a|s)P(r|s,a)r-\bar r+\sum_{a}\sum_{s^{\prime}}\pi(a|s)P(s^{\prime}|s,a)\bar V^{\pi}(s^{\prime}).
\]
We define the ``unified" Bellman operator as
\begin{equation}\label{eq:bellman}
V^{\pi}(s)=
\sum_{a}\sum_{r}\pi(a|s)P(r|s,a)r-\bar r+\gamma \sum_{a}\sum_{s^{\prime}}\pi(a|s)P(s^{\prime}|s,a)V^{\pi}(s^{\prime}),
\end{equation}
where we set $\bar r =0$ in the discounted version and $\gamma=1$ in the long-run average version. Such unification will make our discussion simpler.

Modern RL applications have large state spaces, so solving \eqref{eq:bellman} in practice often involves parameterizing the functional space of $V^\pi$ \cite{sutton1988learning,tsitsiklis1997analysis}.
One popular choice is to find a feature function $\fea:\mathcal{S}\to \R^d$, such that $V^\pi_\theta(s)=\fea(s)^T\theta$, where $\theta$ is a $d$-dimensional vector. 
Following this idea, we introduce another average-reward feature vector $\zeta$ when $\gamma=1$, so that 
$ \rbar_\theta=\rfea^T\theta.$ 
$\zeta$ is chosen such that $\zeta\bot \fea(s)$ for all $s$, since it is used to approximate $\bar{r}$. 
When $\gamma<1$, we can simply set $\zeta=0$. 
We assume there exists $\theta^*$ such that $V^\pi_{\theta^*}$ solves the Bellman equation  \eqref{eq:bellman}.

We also define 
\begin{equation}
\label{eq:predict}
\begin{split}
\xi^{\pi}(s)&=\sum_{a}\sum_{s^{\prime}} \sum_{r}\pi(a|s)P(r|s,a)r,
~~~ \phi^{\pi}(s)=\E^\pi_{s} \fea(s^{\prime})=\sum_{a}\sum_{s^{\prime}} \pi(a|s)P(s^{\prime}|s,a)\fea(s^{\prime})
\end{split}
\end{equation}
as the oracle conditional average reward and feature after transition respectively. Note that both $\xi^\pi$ and $\phi^\pi$ require knowledge of the transition probabilities. 
Thus, algorithms using them require certain estimations. This can introduce extra estimation errors and will be discussed in more details in Section \ref{sec:algorithm}. 
It is worth pointing out that if $\calS$ is finite,  we can choose $\fea$ and $\rfea$ as one-hot functions for each state, this parameterization is the same as the tabular formulation \cite{schaul2015universal}. We provide more details about the tabular parameterization and associated algorithms in the appendix.

Following the parametrization above, we consider the loss
\[
L(\theta,s)=\frac12(\E^\pi_{s}[r]-\bar{r}_\theta+\gamma \E^\pi_{s}[V_\theta^\pi(s')]-V_\theta^\pi(s))^2
=\frac{1}{2}(\xi^{\pi}(s)+(-\rfea+\gamma\phi^{\pi}(s)-\fea(s))^T\theta)^2.
\]
Recall that $b$ is the policy under which the data is collected, i.e., the behavior policy, and
$\pi$ is the policy we want to evaluate.
Let $\mu^b$  the invariant measure of the MDP under policy $b$ and define
\begin{equation}\label{eq:loss}
l(\theta):=\E_{\mu^b}L(\theta,s).
\end{equation}
As we will show next, solving the Bellman equation is equivalent to finding the minimizer of the stochastic optimization problem
$\min_{\theta} l(\theta)$. 

\begin{proposition}
\label{prop:optimality}
$V^\pi_{\theta^*}$ solves the Bellman equation \eqref{eq:bellman}  if and only if $\theta^*$ is a minimizer of $l(\theta)$.
\end{proposition}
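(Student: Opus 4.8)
The plan is to recognize that the per-state loss $L(\theta,s)$ is exactly one half of the squared Bellman residual of the parametrized value function $V^\pi_\theta$ at $s$. Concretely, writing
\[
g(\theta,s) := \xi^\pi(s) + (-\rfea + \gamma\pfea^\pi(s) - \fea(s))^T\theta,
\]
the definition of the unified Bellman operator in \eqref{eq:bellman}, together with the identities $V^\pi_\theta(s)=\fea(s)^T\theta$, $\rbar_\theta = \rfea^T\theta$, $\xi^\pi(s)=\E^\pi_s[r]$, and $\pfea^\pi(s)=\E^\pi_s\fea(s')$, shows that $g(\theta,s)$ equals the difference between the right-hand and left-hand sides of \eqref{eq:bellman} at state $s$. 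Hence $L(\theta,s)=\tfrac12 g(\theta,s)^2$, and by \eqref{eq:loss},
\[
l(\theta)=\tfrac12\sum_{s\in\calS}\mu^b(s)\,g(\theta,s)^2 \ge 0,
\]
a nonnegative $\mu^b$-weighted sum of squares. Once this bookkeeping is done, the statement is essentially immediate: $l$ is minimized precisely when every residual that $\mu^b$ can see vanishes.

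For the forward direction, suppose $V^\pi_{\theta^*}$ solves \eqref{eq:bellman}. Then $g(\theta^*,s)=0$ for every $s$, so each summand above is zero and $l(\theta^*)=0$; since $l\ge 0$ everywhere, $\theta^*$ attains the global minimum of $l$.

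For the reverse direction, I would first invoke the standing assumption that some parameter (call it $\theta_0$) makes $V^\pi_{\theta_0}$ solve \eqref{eq:bellman}; by the forward direction $l(\theta_0)=0$, so the minimum value of $l$ is exactly $0$. Thus if $\theta^*$ minimizes $l$, then $l(\theta^*)=0$, and nonnegativity of each term forces $g(\theta^*,s)=0$ for every $s$ in the support of $\mu^b$. The one point requiring care — and the main obstacle — is upgrading ``the residual vanishes on $\operatorname{supp}(\mu^b)$'' to ``the residual vanishes everywhere,'' which is what \eqref{eq:bellman} literally asks. This is exactly where the ergodicity/coverage hypothesis enters: under irreducibility of the behavior chain the stationary measure $\mu^b$ has full support, so $\mu^b(s)>0$ for all $s$ and the two statements coincide. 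I would therefore state this full-support property explicitly (it follows from the mixing-time/spectral-gap assumptions already imposed on the Markov chain under $b$) and note that, absent full coverage, the equivalence should be read as holding on the visited states only.
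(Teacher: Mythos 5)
Your proof is correct and follows essentially the same route as the paper's: recognize $l$ as a nonnegative $\mu^b$-weighted sum of squared Bellman residuals, so a solution of \eqref{eq:bellman} gives $l(\theta^*)=0$, which is the global minimum, and conversely a minimizer must attain the value $0$ (via the standing assumption that some $\theta$ solves the Bellman equation) and hence have vanishing residuals. You are, if anything, more careful than the paper, which silently uses both that existence assumption and the full support of $\mu^b$ (guaranteed by the irreducibility in Assumption \ref{aspt:ergodic2}) when passing from ``minimizer of $l$'' to ``residual zero at every state.''
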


\subsection{Policy evaluation with stochastic gradient descent} \label{sec:algorithm}
We consider applying gradient descent to solve $\min_{\theta}l(\theta)$. Note that 
\begin{equation}
\label{eq:gradl}
\nabla l(\theta)=\sum_s\mu^b(s) (\xi^\pi(s)+(-\rfea+\gamma \phi^\pi(s)-\fea(s)^T \theta)[(-\rfea+\gamma \pfea^\pi(s)-\fea(s)]
\end{equation}
There are two issues when evaluating $\nabla l(\theta)$ in practice. First, we do not have direct access to the oracle conditional average reward and feature after transition, i.e., $\xi^\pi$ and $\phi^\pi$ respectively. 
Instead, we approximate them using the corresponding sample averages. In particular, we estimate the transition probabilities via 
\begin{equation}
\label{eq:frequency}
\phat_t(s^{\prime}|s)=\sum_{a}\pi(a|s)\frac{\sum_{l\leq t} 1_{s_l=s, a_l=a, s_{l+1}=s'}}{\sum_{j\leq t} 1_{s_l=s, a_l=a}},
~~~ \phat_t(r|s)=\sum_{a}\pi(a|s)\frac{\sum_{l\leq t} 1_{s_l=s,a_l=a, r_l=r}}{\sum_{j\leq t} 1_{s_l=s,a_l=a}}.
\end{equation}
Then, the conditional average reward and feature after transition can be estimated via
$\hat\xi^{\pi}_t(s)=\sum_r r\phat_t(r|s)$ and 
$\hat\phi^{\pi}_t(s)=\sum_{s'}\phat_t(s'|s_t)\fea(s')$ respectively.

The second issue is the computational complexity of $\nabla l(\theta)$. When $|\mathcal{S}|$ is large, summing over all possible values of $s$ as in \eqref{eq:gradl} can be prohibitive. In this case, it is natural to consider SGD.  
Classic SGD requires having access to i.i.d. data \cite{kushner2003stochastic}. 
We start with the classic setting to derive the SGD updates and show later it works for time-dependent data under proper conditions as well.

Let $z=(s,a,r,s')$. We refer to a vector $g(\theta,z)$ as a stochastic gradient for $l$ if 
 \[
\bar g(\theta):=\sum_{s,a,r,s'}\mu^b(s)b(a|s)p(s'|a,s)p(r|a,s) g(\theta, z)=\nabla l(\theta).
\]
For notational simplicity, we let 
$\delta(\theta, s)=(\xi^\pi(s)+(-\rfea+\gamma \phi^\pi(s)-\fea(s))^T \theta)$. 
Next we discuss two specific forms of stochastic gradient. 
The first one is the gradient of $L(\theta,s)$ and takes the form 
 \begin{equation}
 \label{eq:direct}
 g(\theta,z)= g(\theta,s)=\delta(\theta, s)(-\rfea+\gamma \phi^\pi(s)-\fea(s)).
 \end{equation}
The second one is a generalization of GTD2 \cite{sutton2009fast} 
and takes the form 
\begin{equation}
\label{eq:TD}
g(\theta,z)=\frac{\pi(a|s)}{b(a|s)}\delta(\theta, s)(-\rfea+\gamma \fea(s')-\fea(s)).
\end{equation} 
The validity of \eqref{eq:TD} is verified in the following Lemma.
\begin{lemma}
\label{lem:checksutton}
The gradient in \eqref{eq:TD} is a stochastic gradient of $l(\theta)$
\end{lemma}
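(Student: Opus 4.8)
The plan is to verify the defining identity $\bar g(\theta)=\nabla l(\theta)$ by direct computation: I would substitute the candidate gradient \eqref{eq:TD} into the expectation
\[
\bar g(\theta)=\sum_{s,a,r,s'}\mu^b(s)b(a|s)P(s'|s,a)P(r|s,a)\,g(\theta,z)
\]
and simplify until the closed form \eqref{eq:gradl} emerges. The first move is to exploit the importance-sampling ratio built into \eqref{eq:TD}: the factor $\pi(a|s)/b(a|s)$ cancels the $b(a|s)$ carried by the sampling measure, so the summand is reweighted by $\mu^b(s)\pi(a|s)P(s'|s,a)P(r|s,a)$, i.e., exactly as if the transition were generated under the target policy $\pi$ rather than the behavior policy $b$.

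Next I would observe that the scalar $\delta(\theta,s)$ depends on $z=(s,a,r,s')$ only through $s$, so it factors out of the inner summation over $(a,r,s')$, giving
\[
\bar g(\theta)=\sum_{s}\mu^b(s)\,\delta(\theta,s)\sum_{a,r,s'}\pi(a|s)P(s'|s,a)P(r|s,a)\,(-\rfea+\gamma\fea(s')-\fea(s)).
\]
The inner sum I would evaluate term by term. The piece $-\rfea-\fea(s)$ is constant in $(a,r,s')$ and the probabilities sum to one; the piece $\gamma\fea(s')$ collapses, after summing out $r$ via $\sum_r P(r|s,a)=1$, to $\gamma\sum_{a,s'}\pi(a|s)P(s'|s,a)\fea(s')=\gamma\phi^{\pi}(s)$ by the definition of $\phi^{\pi}$ in \eqref{eq:predict}. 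Hence the inner sum equals $-\rfea+\gamma\phi^{\pi}(s)-\fea(s)$, and the whole expression reduces to $\sum_s\mu^b(s)\delta(\theta,s)(-\rfea+\gamma\phi^{\pi}(s)-\fea(s))$, which is precisely $\nabla l(\theta)$ as written in \eqref{eq:gradl}.

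Since the algebra is routine, the real content — and the step I would state most carefully — is the asymmetry between the two feature terms: the outer vector factor in \eqref{eq:TD} uses the \emph{sampled} next-state feature $\fea(s')$, whereas its counterpart inside $\delta(\theta,s)$ retains the \emph{oracle} $\phi^{\pi}(s)$. This asymmetry is exactly what keeps the estimator unbiased: because $\delta(\theta,s)$ is measurable with respect to $s$ alone, it is conditionally independent of the randomness in $s'$, so the conditional expectation $\E[\fea(s')\mid s]=\phi^{\pi}(s)$ restores the oracle feature in the outer factor without touching $\delta$. If one instead replaced the occurrence inside $\delta$ by the \emph{same} sample $\fea(s')$, the product would generate a quadratic term $\gamma^2\fea(s')\fea(s')^T\theta$ whose expectation is $\gamma^2\E[\fea(s')\fea(s')^T]\theta\neq\gamma^2\phi^{\pi}(s)\phi^{\pi}(s)^T\theta$ in general — the classical double-sampling bias. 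The GTD2-style construction in \eqref{eq:TD} sidesteps this by holding one factor at the oracle level, and it is this observation, rather than the computation itself, that I would emphasize as the crux of the lemma.
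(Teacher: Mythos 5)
Your proof is correct and follows essentially the same route as the paper's: cancel the importance-sampling ratio $\pi(a|s)/b(a|s)$ against $b(a|s)$, factor $\delta(\theta,s)$ out of the inner sum, and use $\sum_{a,s'}\pi(a|s)P(s'|s,a)\fea(s')=\phi^{\pi}(s)$ to recover $\nabla l(\theta)$ in the form \eqref{eq:gradl}. Your closing remark on the double-sampling issue is accurate commentary but not part of the argument the lemma requires.
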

Compared to online RL, offline RL needs to deal with the \emph{distribution shift} since $\pi$ is different from $b$ \cite{levine2020offline}.
For \eqref{eq:TD}, we impose the additional weight $\pi(a|s)/b(a|s)$ to address this issue. 
Note that $\pi$ cannot be too far from $b$, e.g., $\pi$ cannot take actions that are never taken by $b$.

Using either one of the two stochastic gradient formulations, we arrive at an SGD algorithm:
$\theta_{t+1}=\theta_t-\eta_t\ghat_t(\theta_t,z_t)$, 
where $\eta_t$ is the step size and $\ghat_t$ is the sample average approximation of $g$ where we replace $\xi^{\pi}$ and $\phi^{\pi}$ with $\hat\xi_t^\pi$ and $\hat\phi_t^\pi$ respectively.
In what follows, we refer to SGD updates based on \eqref{eq:direct} as direct-SGD and SGD updates based on \eqref{eq:TD} as TD-SGD. The details of the algorithm is summarized in Algorithm \ref{alg:main}. For the simplicity of analysis, we add  an extra projection step  at each iteration, i.e., we use projected SGD \cite{shalev2014understanding}. 
Mathematically, this operation is denoted by  a mapping $\bfP_{\mathcal{C}}$ where $\mathcal{C}$ is a properly defined convex set and
\[\bfP_{\mathcal{C}}(\theta)=\arg\min_{\theta' \in \mathcal{C}}\|\theta-\theta'\|, \] 
i.e., the projection onto the set $\mathcal{C}$.
In practice, $\mathcal{C}$ is usually picked as a large box region where the true parameter has to be inside. 

Note that when $s_t$'s are i.i.d.\ draws from $\mu^b$, and $a_t, s_{t}'$ are generated using policy $b$, $z_t$'s are i.i.d.. However, in practice, it is more often the case that $z_t=(s_t,a_t, r_t, s_{t}')$ are generated from the underlying Markov chain with $s_{t+1}=s_t'$. In this case, $z_t$'s are time-dependent.

From the data processing perspective, to update Algorithm \ref{alg:main} based on direct-SGD or TD-STD, one does not need access to $\{z_{s}, s\neq t\}$. 
For the two stochastic gradient formulations \eqref{eq:direct} and \eqref{eq:TD}, the theoretical analysis for \eqref{eq:TD} is more evolved. In addition, implementing \eqref{eq:TD} requires knowledge of $b(a|s)$, which may not be readily available in some applications. On the other hand, updates according to \eqref{eq:TD} resemble the standard TD algorithm and are sparse if $\fea$ and $\zeta$ are sparse. 

\subsection{Strong convexity, contraction, and TD(0)}
The linear parameterization ensures that $l(\theta)$ is a convex function. 
With certain choices of the feature vectors $\fea$ and $\zeta$, $l(\theta)$ can be  strongly convex. 
Strong convexity usually leads to better convergence rate.  
In particular, one theoretical advantage of strong convexity is that the gradient descent direction is also a  contraction direction towards $\theta^*$ on average.
\begin{definition}\label{def:contraction}
A vector field $\bar g(\theta)$ generates a $c$-contraction towards $\theta^*$ if 
$\langle \theta-\theta^*,  g(\theta)\rangle\geq c \|\theta-\theta^*\|^2$ and $\bar g(\theta^*)=0.$
\end{definition}

Other algorithms can also iterates along stochastic directions which is contractive on average. A popular one as we will explain next is temporal difference learning (TD(0)) \cite{sutton1988learning}. 
In offline policy evaluation, we still consider iterations of the form $\theta_{t+1}=\theta_t-\eta_tg(\theta_t,z_t)$. However, $g$ now takes the form
\begin{equation}
\label{eq:TD0}
g(\theta, z_t)=-\frac{1}{\mu^b(s_t)}\frac{\pi(a_t|s_t)}{b(a_t|s_t)}(r_t-\zeta^T\theta+\gamma \fea(s_{t+1})^T \theta-\fea(s)^T\theta)(\fea(s_t)+\zeta)
\end{equation}
Comparing to online TD(0), \eqref{eq:TD0} requires additional weight adjustment, i.e., $\frac{1}{\mu^b(s_t)}\frac{\pi(a_t|s_t)}{b(a_t|s_t)}$, to account for the distribution shift.
 The actual algorithm is given in Algorithm \ref{alg:main} as well. Since TD(0) does not require knowledge of $\xi^\pi$ and $\phi^\pi$, it can be easier to implement than direct-SGD or TD-SGD. However, it does require knowledge of the invariant measure $\mu^b$. If $\mu^b$ is not known, we can estimate it using the sample average, i.e., $\hat\mu_t^b(s)=\frac{1}{t}\sum_{l=1}^{t}1_{s_l=s}$. We denote $\hat g_t(\theta, z_t)$ as an approximation of $g(\theta, z_t)$ by replacing $\mu^b(s)$ with $\hat\mu_t^b(s)$. 

\begin{algorithm}[htp!]
 \KwIn{Data sequence $z_t=(s_t,a_t,r_t,s_t^{\prime})$, sampling policy $b$, evaluation policy $\pi$, feature functions $\fea,\rfea$,  discount factor $\gamma\in(0,1]$, feasibility set $\calC$, initial parameter $\theta_0\in\calC$}
 \KwOut{Value function $V^\pi(s)=\fea(s)^T\bar{\theta}_T$ ( and  average reward $\rbar=\rfea^T \bar{\theta}_T$ )}
 \For{$t=1$ \KwTo $T$}{
Update $\hat g_t(\theta_t, z_t)$
according to direct-SGD \eqref{eq:direct}, TD-SGD \eqref{eq:TD}, or TD(0) \eqref{eq:TD0} using sample average approximation.

Set $\theta_{t+1}=\theta_t - \eta_t\hat g_t(\theta_t,z_t)$.

Apply $\bfP_\calC$ to $\theta_{t+1}$.
 }
Return $\bar{\theta}_T=\frac{1}{T+1}\sum_{t=0}^T \theta_t$.  
\caption{Policy evaluation with feature functions}
\label{alg:main}
\end{algorithm}

In TD(0), we note that
\begin{equation}
\label{eq:avgTD0}
\begin{split}
\bar{g}(\theta):&=\sum_{s,a,r,s'} \mu^b(s)b(s|a)p(s'|s,a)p(r|s,a) g(\theta, z)\\
&=-\sum_s (\xi^\pi(s)+(-\zeta+\gamma \phi^\pi(s)-\fea(s))^T\theta)(\fea(s)+\zeta).
\end{split}
\end{equation}
It is worth mentioning that $\bar g$ is not the gradient of $l$ anymore. Thus, we only refer to direct-SGD and TD-SGD as SGD-based updates.
We next show that both SGD and TD(0) can induce a contraction on average in its updating direction.
Let
\[
D=\sum_s \fea(s)\fea(s)^T-\gamma  \fea (s)\phi^\pi(s)^T + \rfea \rfea^T.
\] 
We define $A\succeq B$ if $A-B$ is positive semi-definite.
\begin{lemma}
\label{lem:gen}
In direct-SGD and TD-SGD, $\bar g$ is a c-contraction if $l$ is $c$-strongly convex, i.e.,
$\nabla^2 l \succeq cI$.
In $TD(0)$, $\bar g$ is a c-contraction if 
$\frac{1}{2} (D+D^T)\succeq cI$.
\end{lemma}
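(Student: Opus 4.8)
The plan is to verify, for each of the three algorithms, the two defining conditions of a $c$-contraction in Definition \ref{def:contraction}: (i) $\bar g(\theta^*)=0$, and (ii) $\langle \theta-\theta^*,\bar g(\theta)\rangle \ge c\|\theta-\theta^*\|^2$ for all $\theta$. In every case I would first reduce $\bar g$ to an affine field in $\theta$, locate its zero at $\theta^*$ (via optimality or the Bellman equation), and then read off the relevant quadratic form.

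For direct-SGD and TD-SGD, the starting observation is that $\bar g=\nabla l$: the update \eqref{eq:direct} is by construction $\nabla_\theta L(\theta,s)$ averaged over $\mu^b$, and \eqref{eq:TD} is a stochastic gradient of $l$ by Lemma \ref{lem:checksutton}, so both average to $\nabla l$. Condition (i) is then immediate from Proposition \ref{prop:optimality}, since $\theta^*$ minimizes $l$ and hence $\nabla l(\theta^*)=0$. For (ii), I would write $\nabla l(\theta)-\nabla l(\theta^*)=\left(\int_0^1 \nabla^2 l(\theta^*+u(\theta-\theta^*))\,du\right)(\theta-\theta^*)$ by the fundamental theorem of calculus, so that $\langle \theta-\theta^*,\nabla l(\theta)\rangle=\int_0^1(\theta-\theta^*)^T\nabla^2 l(\theta^*+u(\theta-\theta^*))(\theta-\theta^*)\,du\ge c\|\theta-\theta^*\|^2$, using the hypothesis $\nabla^2 l\succeq cI$ pointwise inside the integral. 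This closes the SGD case.

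For TD(0) I would start from the closed form \eqref{eq:avgTD0}, $\bar g(\theta)=-\sum_s \delta(\theta,s)(\fea(s)+\rfea)$ with $\delta(\theta,s)=\xi^{\pi}(s)+(-\rfea+\gamma\phi^{\pi}(s)-\fea(s))^T\theta$. Because $V^\pi_{\theta^*}$ solves \eqref{eq:bellman}, the Bellman equation holds state by state, i.e.\ $\delta(\theta^*,s)=0$ for every $s$; this yields (i), $\bar g(\theta^*)=0$, and also lets me subtract to get the linear form $\delta(\theta,s)=(-\rfea+\gamma\phi^{\pi}(s)-\fea(s))^T(\theta-\theta^*)$. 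Substituting, $\langle \theta-\theta^*,\bar g(\theta)\rangle=(\theta-\theta^*)^T M(\theta-\theta^*)$ with $M=\sum_s(\fea(s)+\rfea)(\fea(s)+\rfea-\gamma\phi^{\pi}(s))^T$. Since a quadratic form only sees the symmetric part, the goal reduces to identifying $\frac12(M+M^T)$ with $\frac12(D+D^T)$, after which the hypothesis $\frac12(D+D^T)\succeq cI$ gives (ii). When $\gamma<1$ we have $\rfea=0$ and this identification is an equality on the nose, $M=D$, so the argument is finished immediately.

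I expect the main obstacle to be precisely this last algebraic identification in the long-run average ($\gamma=1$) case, where $\rfea\neq 0$ and $M$ carries cross terms coupling the value features $\fea$ with the average-reward feature $\rfea$. Here I would lean on the orthogonality built into the parameterization, namely $\rfea\perp\fea(s)$ for all $s$ and hence $\rfea\perp\phi^{\pi}(s)$ as well (since $\phi^{\pi}(s)$ is a convex combination of the $\fea(s')$), to collapse the mixed $\fea$--$\rfea$ contributions back to $\frac12(D+D^T)$. Keeping track of which cross terms survive and cancel under this orthogonality is the delicate bookkeeping of the proof; the remaining steps are routine once $\bar g$ has been written as an affine field vanishing at $\theta^*$.
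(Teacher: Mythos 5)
Your treatment of direct-SGD and TD-SGD is correct and is essentially the paper's own argument: the paper invokes a mean-value point $\tilde\theta$ and observes that $\nabla^2 l$ is in fact a constant matrix (since $l$ is quadratic), which your fundamental-theorem-of-calculus integral reproduces verbatim. Your TD(0) skeleton also matches the paper's route: use \eqref{eq:avgTD0}, get $\bar g(\theta^*)=0$ from $\delta(\theta^*,s)=0$ state by state, subtract to linearize, and reduce to the quadratic form of $M=\sum_s(\fea(s)+\rfea)(\fea(s)+\rfea-\gamma\phi^\pi(s))^T$. For $\gamma<1$ (where $\rfea=0$, so $M=D$) your proof is complete.

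The genuine gap is the $\gamma=1$ step you yourself flag and then defer: the orthogonality $\rfea\perp\fea(s)$ (and hence $\rfea\perp\phi^\pi(s)$) cannot ``collapse'' the mixed terms, because orthogonality of vectors kills inner products, not outer products. In the quadratic form, a term like $\fea(s)\rfea^T+\rfea\fea(s)^T$ contributes $2(\fea(s)^Tv)(\rfea^Tv)$, which is nonzero whenever $v$ has components along both directions; moreover $\sum_s\rfea\rfea^T=|\calS|\,\rfea\rfea^T$, not $\rfea\rfea^T$. Concretely, with $\gamma=1$,
\begin{equation*}
M-D=\Bigl(\sum_s\fea(s)\Bigr)\rfea^T+\rfea\Bigl(\sum_s\fea(s)\Bigr)^T+(|\calS|-1)\rfea\rfea^T-\rfea\Bigl(\sum_s\phi^\pi(s)\Bigr)^T,
\end{equation*}
and this (or its symmetric part) does not vanish in general: even under the Case~2 features of Proposition \ref{lm:strongconvex1}, where $\sum_s\fea(s)={\bf 0}$, the term $\sum_s\phi^\pi(s)$ involves the column sums of $P^\pi$ and is nonzero unless $P^\pi$ is doubly stochastic. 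So the identification $\frac12(M+M^T)=\frac12(D+D^T)$ that your plan hinges on is false as a consequence of orthogonality alone; closing the argument requires either additional structural conditions on the features that control these cross terms, or reading the lemma's hypothesis as a condition on $\frac12(M+M^T)$ itself. It is worth noting that the paper's own proof does not perform this bookkeeping either --- it asserts $-\sum_s(\fea(s)+\rfea)(\gamma\phi^\pi(s)-\fea(s)-\rfea)^T=D$ in a single line, which is exact for $\gamma<1$ but silently drops exactly these cross terms when $\gamma=1$ --- so you located the delicate step correctly, but the proposed fix does not work.
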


From Lemma \ref{lem:gen}, the contraction property depends on the choice of features. To gain insights into this, we investigate a simplified case where the state space if finite and $\calS=\{s_1,\ldots, s_{|\calS|}\}$. 
When $\gamma<1$, the conditions in Lemma \ref{lem:gen} are met with standard tabular parameterization: 

\begin{proposition}\label{lm:strongconvex}
 When $\gamma<1$, under the tabular parameterization, i.e., $\fea(s_i)=e_i$, the conditions in Lemma \ref{lem:gen} hold with $\nabla^2 l\succeq \min_s \mu^b(s)(1-\gamma)^2 I$ and $\frac12(D+D^T)\succeq (1-\gamma) I$.
\end{proposition}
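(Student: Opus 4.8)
The plan is to substitute the tabular features $\fea(s_i)=e_i$ and (since $\gamma<1$) $\rfea=0$ into the definitions of $l$ and $D$, which reduces both inequalities to spectral statements about the single matrix $I-\gamma P^\pi$. First I would record the bookkeeping identity that, with one-hot features, $\phi^\pi(s_i)=\sum_j P^\pi(s_i,s_j)e_j$ is exactly the $i$-th row of the transition matrix $P^\pi$, so that $\gamma\phi^\pi(s_i)-\fea(s_i)$ is the $i$-th row of $\gamma P^\pi-I$. Consequently $\sum_i e_i(\gamma\phi^\pi(s_i)-\fea(s_i))^T=\gamma P^\pi-I$ and $\sum_i e_i\,\phi^\pi(s_i)^T=P^\pi$; these two identities are what turn the feature sums appearing in $l$ and $D$ into expressions in $P^\pi$.

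For the Hessian bound I would use that $L(\cdot,s)$ is a one-dimensional quadratic in $\theta$, so $\nabla^2 L(\theta,s)=(\gamma\phi^\pi(s)-\fea(s))(\gamma\phi^\pi(s)-\fea(s))^T$ is independent of $\theta$, whence $\nabla^2 l=\sum_s \mu^b(s)(\gamma\phi^\pi(s)-\fea(s))(\gamma\phi^\pi(s)-\fea(s))^T=(I-\gamma P^\pi)^T D_{\mu^b}(I-\gamma P^\pi)$ with $D_{\mu^b}=\mathrm{diag}(\mu^b)$. Testing against $x$ gives $x^T\nabla^2 l\,x=\|(I-\gamma P^\pi)x\|_{\mu^b}^2$, the $\mu^b$-weighted squared norm. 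The crux is the non-expansiveness of $P^\pi$ in this norm, $\|P^\pi x\|_{\mu^b}\le\|x\|_{\mu^b}$, which follows from Jensen's inequality together with the stationarity $\mu^b P^\pi=\mu^b$. A reverse triangle inequality then yields $\|(I-\gamma P^\pi)x\|_{\mu^b}\ge(1-\gamma)\|x\|_{\mu^b}$, and $\|x\|_{\mu^b}^2\ge\min_s\mu^b(s)\|x\|^2$ gives the claimed $\nabla^2 l\succeq\min_s\mu^b(s)(1-\gamma)^2 I$.

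For the contraction bound on $D$, the identities above give $D=I-\gamma P^\pi$, hence $\tfrac12(D+D^T)=I-\tfrac\gamma2(P^\pi+(P^\pi)^T)$, and the claim $\tfrac12(D+D^T)\succeq(1-\gamma)I$ is equivalent to $\tfrac12(P^\pi+(P^\pi)^T)\preceq I$, i.e.\ $x^T P^\pi x\le\|x\|^2$ for all $x$. I would prove this from the elementary bound $x_ix_j\le\tfrac12(x_i^2+x_j^2)$ weighted by $P^\pi(s_i,s_j)\ge0$, which collapses to $\tfrac12\|x\|^2$ using the row sums plus $\tfrac12\sum_j x_j^2\,(\text{column sum})$; the clean constant $1-\gamma$ emerges precisely when the column sums equal one.

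The main obstacle — and the place the constants actually come from — is the non-expansiveness / numerical-radius control of $P^\pi$, which is not automatic for an arbitrary row-stochastic matrix. For the Hessian bound it is the $\mu^b$-weighted non-expansiveness, which requires $\mu^b$ to be invariant under $P^\pi$ (the on-policy/stationary regime); for the $D$ bound it is $x^T P^\pi x\le\|x\|^2$, which requires the stationary measure of $P^\pi$ to be uniform, i.e.\ double stochasticity. I would therefore make the stationarity of the sampling distribution with respect to $P^\pi$ explicit, since it is exactly this structural property, rather than any delicate estimate, that carries both inequalities; everything else reduces to the two routine reductions above.
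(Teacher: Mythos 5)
Your algebraic reduction is identical to the paper's: the paper also computes $\nabla^2 l=(I-\gamma P^{\pi})^T\left(\sum_i \mu^b(s_i)e_ie_i^T\right)(I-\gamma P^{\pi})$ and $D=I-\gamma P^{\pi}$, and then settles both inequalities by invoking $\|P^\pi\|\leq 1$ in its Euclidean operator norm. Where you diverge is exactly at that crux, and your instinct is sound: $\|P^\pi\|\leq 1$ is \emph{not} automatic for a row-stochastic matrix. For instance, with $P^\pi=\left(\begin{smallmatrix}0&1\\0&1\end{smallmatrix}\right)$ one has $\|P^\pi\|=\sqrt{2}$ and the largest eigenvalue of $\tfrac12(P^\pi+(P^\pi)^T)$ is $(1+\sqrt2)/2>1$, so both $\sigma_{\min}(I-\gamma P^\pi)\geq 1-\gamma$ and $v^TDv\geq(1-\gamma)\|v\|^2$ fail, and small ergodic perturbations of this example still violate the claimed constants. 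So the obstacle you isolate is genuine, and the paper's two-line justification (which reuses the same $\|P^\pi\|\leq 1$ step in its $\gamma=1$ proposition) glosses over it rather than resolving it.

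That said, be aware that your repairs prove a modified statement, not the proposition as written. For the Hessian you pass to the $\mu^b$-weighted norm and use Jensen plus invariance; but in this paper's offline setting $\mu^b$ is stationary for $P^b$, not for $P^\pi$, so $\|P^\pi x\|_{\mu^b}\leq\|x\|_{\mu^b}$ requires precisely the on-policy-type hypothesis you flag, which is at odds with the off-policy regime the paper targets. Similarly, your column-sum argument for $x^TP^\pi x\leq\|x\|^2$ requires double stochasticity, which is never assumed. Both are clean sufficient conditions, and your chain $\|(I-\gamma P^\pi)x\|_{\mu^b}\geq(1-\gamma)\|x\|_{\mu^b}$ followed by $\|x\|_{\mu^b}^2\geq\min_s\mu^b(s)\|x\|^2$ does recover the stated constant under them. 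The accurate summary is therefore: your proof is the paper's proof carried out carefully, and the care exposes that the proposition needs an additional structural hypothesis (e.g.\ $\|P^\pi\|\leq 1$ assumed outright, reversibility with respect to a suitable measure, or double stochasticity), or else the Euclidean conclusion must be weakened to a $\mu^b$-weighted one. You should state explicitly that without such a hypothesis the result, and the paper's argument for it, are false as stated, rather than presenting the stationarity condition as a mere clarification.
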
 

When $\gamma=1$, the standard tabular parameterization no longer satisfies the conditions in Lemma \ref{lem:gen}.
In particular, when $\gamma=1$, $V^{\pi}(s)$ is not uniquely identifiable. We can fix the issue by
imposing constraints on the basis as demonstrated in the following proposition.


\begin{proposition}\label{lm:strongconvex1}
When $\gamma=1$, we set $\zeta=e_{|\calS|}$ and consider the following feature specifications:
\begin{enumerate}
\item Suppose $\mu^\pi(s_{|\calS|})>0$. Set $\fea(s_i)=e_i, i=1,\ldots, |\calS|-1$, and $\fea(s_{|\calS|})={\bf 0}$. Then, the conditions in Lemma \ref{lem:gen} hold with 
$\nabla^2l\succeq \min_s\mu^b(s)(1-p)^2I$ and $\frac12(D+D^T)\succeq (1-p)I$ where $p=\|\tilde P^{\pi}\|$ and $\tilde{P}^\pi$  is an ${|\calS|\times |\calS|}$ matrix with the first $|\calS|-1$ columns the same as the first $|\calS|-1$ columns of $P^\pi$, while the last column is zero.
\item Suppose $P^\pi$ has a spectral gap $\lambda$. Let $U\in R^{|\calS|\times (|\calS|-1)}$ with orthonormal column vectors that are orthogonal to $\vec{1}$. Set $\fea(s_i)^T=[u_i,0]$ where $u_i$ is the $i$-th row of $U$. Then, the conditions in Lemma \ref{lem:gen} hold with $\nabla^2l\succeq \min_s \mu^b(s)\lambda^2 I$ and $\frac12(D+D^T)\succeq \lambda I$.
\end{enumerate}
\end{proposition}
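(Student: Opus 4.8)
\emph{Overall reduction.} The plan is to insert each feature specification into the two matrices appearing in Lemma~\ref{lem:gen} and reduce both conditions to elementary spectral estimates for the transition operator. Write $n=|\calS|$, let $\Phi\in\R^{n\times n}$ be the matrix whose $i$-th row is $\fea(s_i)^T$, and recall $\rfea=e_n$ and $\gamma=1$. Since $\phi^\pi(s_i)=\sum_{s'}P^\pi(s_i,s')\fea(s')$, the stacked conditional features satisfy $[\,\phi^\pi(s_1),\dots,\phi^\pi(s_n)\,]^T=P^\pi\Phi$. Substituting this and $\rfea\rfea^T=e_ne_n^T$ gives the compact forms $D=\Phi^T(I-P^\pi)\Phi+e_ne_n^T$ and, writing $W$ for the matrix with rows $w(s_i)^T$ where $w(s)=-\rfea+\phi^\pi(s)-\fea(s)$, $W=(P^\pi-I)\Phi-\mathbf{1}e_n^T$. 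The key structural fact is that both specifications make $\Phi$ have zero last column and satisfy $\Phi^T\Phi=\tilde I:=\mathrm{diag}(1,\dots,1,0)$ (from $\fea(s_i)=e_i$ in part~1 and $U^TU=I_{n-1}$ in part~2); consequently $\Phi^TP^\pi\Phi=\mathrm{blockdiag}(B,0)$ with $B=A$ the leading $(n-1)\times(n-1)$ block of $P^\pi$ in part~1 and $B=U^TP^\pi U$ in part~2, and everything splits along $\R^n=\R^{n-1}\oplus\R e_n$.

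\emph{The $D$-condition.} With the above, $\tfrac12(D+D^T)=\mathrm{blockdiag}\big(I_{n-1}-\tfrac12(B+B^T),\,1\big)$, so it suffices to prove $\tfrac12(B+B^T)\preceq(1-c)I_{n-1}$ with $c=1-p$ in part~1 and $c=\lambda$ in part~2; the remaining diagonal entry $1$ exceeds $c$. In part~1, $\tfrac12(A+A^T)\preceq\|A\|\,I$, and $\|A\|\le\|\tilde P^\pi\|=p$ because $A$ is the leading block of $\tilde P^\pi$, which has a zero last column. In part~2, for any unit $v$ the vector $Uv$ has unit norm and lies in $\mathbf{1}^{\perp}$, so $v^TU^T\tfrac12(P^\pi+P^{\pi T})Uv=\langle Uv,P^\pi Uv\rangle\le\|P^\pi Uv\|\,\|Uv\|\le(1-\lambda)\|Uv\|^2$ by the spectral gap (Definition~\ref{def:spectral}) acting on the orthogonal complement of $\mathbf{1}$. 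This half of the proposition is routine.

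\emph{The Hessian condition and the main obstacle.} Here $\nabla^2 l=\sum_s\mu^b(s)w(s)w(s)^T=W^T\mathrm{diag}(\mu^b)W\succeq\min_s\mu^b(s)\,\sigma_{\min}(W)^2\,I$, so the two bounds reduce to $\sigma_{\min}(W)\ge1-p$ (part~1) and $\sigma_{\min}(W)\ge\lambda$ (part~2). For $y=(\bar y,t)$ one computes $-Wy=(I-\tilde P^\pi)y+t(\mathbf{1}-e_n)$ in part~1 and $Wy=(P^\pi-I)U\bar y-t\mathbf{1}$ in part~2; the respective ``transition parts'' already satisfy $\|(I-\tilde P^\pi)y\|\ge(1-p)\|y\|$ (since $\|\tilde P^\pi\|=p$) and $\|(P^\pi-I)U\bar y\|\ge\lambda\|\bar y\|$ (since $U\bar y\in\mathbf{1}^{\perp}$ contracts by $1-\lambda$). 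The main obstacle is that $W$ is not merely the transition part but a rank-one perturbation of it by the $\mathbf{1}$-column, and in the Euclidean inner product the cross term $2t\langle(I-\tilde P^\pi)y,\mathbf{1}-e_n\rangle$ (respectively $-2t\langle(P^\pi-I)U\bar y,\mathbf{1}\rangle$) need not be nonnegative, so one cannot simply discard the perturbation nor apply the triangle inequality. I would resolve this by exploiting that the perturbing vector is supported off the last coordinate while the transition part leaves the last coordinate intact, and then completing the square in $t$; this uses $p<1$, which requires $P^\pi$ to be irreducible (otherwise $s_n$ can be isolated and the bound degenerates, essentially at its tightness threshold). In part~2 the cross term vanishes exactly when $P^\pi$ is doubly stochastic, i.e.\ precisely the regime in which the Euclidean-orthonormal $U\perp\mathbf{1}$ is matched to the spectral gap; I expect the general statement to follow by carrying out the same estimate in the $\mu^\pi$-weighted inner product underlying Definition~\ref{def:spectral}.
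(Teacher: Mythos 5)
Your reduction and your treatment of the $D$-condition are sound: writing $n=|\calS|$, the compact forms $D=\Phi^T(I-P^\pi)\Phi+e_ne_n^T$ and $\nabla^2l=W^T\mathrm{diag}(\mu^b)W$ with $W=(P^\pi-I)\Phi-\mathbf{1}e_n^T$ are correct, the block splitting is exactly the paper's, and your case-1 bound via $\tfrac12(A+A^T)\preceq\|A\|I\preceq pI$ is the paper's computation $v^TDv=\|v\|^2-\tilde v^T\tilde P^\pi\tilde v$ in disguise. The genuine gap is the Hessian half: you never prove it (``I would resolve\dots'', ``I expect\dots''), and, more seriously, the target you reduce it to, $\sigma_{\min}(W)\geq 1-p$, is false, so no completion of squares can rescue your plan. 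Take $|\calS|=2$, $P^\pi=\left(\begin{smallmatrix}1-a&a\\ b&1-b\end{smallmatrix}\right)$ with $0<b\leq a/10$ and $a\to0$, and a behavior policy with uniform $\mu^b$; note $\mu^\pi(s_2)=a/(a+b)>0$. Then $W=\left(\begin{smallmatrix}-a&-1\\ b&-1\end{smallmatrix}\right)$, $W^TW=\left(\begin{smallmatrix}a^2+b^2&a-b\\ a-b&2\end{smallmatrix}\right)$, so $\sigma_{\min}(W)^2=\tfrac{(a+b)^2}{2}(1+o(1))\leq 0.61\,a^2(1+o(1))$, while $(1-p)^2=a^2(1+o(1))$. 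Worse, $\lambda_{\min}(\nabla^2l)=\lambda_{\min}(\tfrac12W^TW)\approx\tfrac{(a+b)^2}{4}<\tfrac12a^2\approx\min_s\mu^b(s)(1-p)^2$, i.e.\ the \emph{true} Hessian violates the proposition's stated constant in this off-policy instance. Your fallback premise for the completing-the-square step---``the transition part leaves the last coordinate intact''---is also false, since $\bigl((I-\tilde P^\pi)y\bigr)_n=y_n-\sum_{j<n}P^\pi(n,j)y_j$.

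The comparison with the paper explains what happened. The paper never meets your obstacle because its opening identity, $\nabla^2l=\sum_s\mu^b(s)(\phi^\pi(s)-\fea(s))(\phi^\pi(s)-\fea(s))^T+e_{|\calS|}e_{|\calS|}^T$, equals $W_0^T\mathrm{diag}(\mu^b)W_0+e_ne_n^T$ with $W_0=(P^\pi-I)\Phi$: it silently discards the cross terms $-e_nv^T-ve_n^T$ with $v=\Phi^T\bigl((P^\pi)^T-I\bigr)\mu^b$, which vanish only when $\mu^b$ is stationary for $P^\pi$ (on-policy, $\mu^b=\mu^\pi$). So the cross term you flagged is exactly the crux: your formula for $\nabla^2l$ is the faithful one, your diagnosis of the obstacle is accurate, but as a proof the proposal stops at its central step, and the example above shows the stated constants cannot be recovered along your route because off-policy they fail outright. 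Two smaller debts: you assert but do not prove $p<1$ (the paper derives this from $\mu^\pi(s_{|\calS|})>0$ via a Perron--Frobenius argument, needed for non-vacuity); and in part 2 you invoke Definition~\ref{def:spectral} as the Euclidean bound $\|P^\pi w\|\leq(1-\lambda)\|w\|$ for $w\perp\mathbf{1}$, which a $\mu^\pi$-weighted variance contraction does not yield for non-normal $P^\pi$---though the paper's own passage from the eigenvalue bound $|\rho|\leq1-\lambda$ to $\|(P^\pi-I)v\|\geq\lambda\|v\|$ has the same defect, so on that point you match, rather than fall below, the paper's level of rigor.
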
 


\section{Finte-time analysis for Algorithm \ref{alg:main}} \label{sec:comp_evaluation}
In this section, we study the convergence rate for the proposed policy evaluation algorithms.
We also conduct convergence analysis for general aSGD and approximate stochastic contraction with time-dependent data. 

\subsection{Approximate stochastic gradient or contraction for dependent data} \label{sec:SGD}
We first consider a generic approximate stochastic iterative algorithm, which can be taken out of the context of policy evaluation. For a given convex feasibility set $\mathcal{C}$, we consider the iterates
\begin{equation}
\label{eq:genSGD}
\theta_{t+1}=\bfP_{\mathcal{C}}(\theta_t-\eta_t \ghat_t(\theta,z_t)),
\end{equation}
where $z_t$, $t=0,1,\dots$, is the data sequence and $\ghat_t$ is an approximation of a properly defined stochastic gradient or contraction (see Assumption \ref{aspt:estimate}).
To study the convergence of \eqref{eq:genSGD}, we need to impose some regularity conditions on the data generating process and our estimation procedure.

First, we assume the data sequence comes from an ergodic Markov process:
\begin{assumption}
\label{aspt:ergodic}
$z_t$ is an ergodic Markov chain on some Polish space $\mathcal{Z}$ with a stationary distribution $\mu$. 
There exists a finite mixing time $\tau<\infty$. 
\end{assumption}

Second, we define the notion of an oracle stochastic gradient or contraction. 
In practice, we often needs further approximation of the oracle, i.e.,
$\ghat_t$ in \eqref{eq:genSGD}. We refer to \eqref{eq:genSGD} as an approximate stochastic gradient or contraction in the following sense:
\begin{assumption}
\label{aspt:estimate}
There is an accurate stochastic movement $g(\theta,z_t)$ with 
$\bar g(\theta):=\sum_{z}\mu(z) g(\theta, z)$.
There exists a stochastic sequence $e_t$ with $\E e_t^2<\infty$ such that
$\|g(\theta,z_t)-\ghat_t(\theta,z_t)\|\leq e_t$.
\end{assumption}
If $\bar g(\theta)$ is a gradient of $l(\theta)$, we refer to $g(\theta,z_t)$ as an oracle stochastic gradient and iterates according to \eqref{eq:genSGD} as aSGD.
If $\bar g(\theta)$ is a $c$-contraction according to Definition \ref{def:contraction}, we refer to $g(\theta,z_t)$ as an oracle stochastic contraction.
Note that when $l(\theta)$ is strongly convex and $\bar g(\theta)=\nabla l(\theta)$, $g(\theta,z_t)$ is both a stochastic gradient and a stochastic contraction.

Lastly, we assume the projection set is compact and the associated gradients are bounded and smooth.
\begin{assumption}
\label{aspt:bound}
$\theta^*\in \mathcal{C}$.
There exist constants $C_0,C_1>0$, such that 
\[
\|\bar g\|_{\infty}, \|\ghat_t\|_{\infty},\|g_t\|_{\infty} \leq C_0,\quad \calC\subset \{\theta:\|\theta\|\leq C_1\}.
\]
There exists $G>0$, such that $g(\theta, z)$ is $G$-Lipschitz continuous in $\theta$ on $\mathcal{C}$ for any $z\in\mathcal{Z}$.
\end{assumption}
The requirement that $\calC$ is in a ball of radius $C_1$ is achievable for a properly chosen $C_1<\infty$. Note that $C_1$ in general needs to increase with the dimension of $\theta$, which may further depend on the dimension of $z_t$. 


\begin{theorem}
\label{thm:main}
Under Assumptions \ref{aspt:ergodic} -- \ref{aspt:bound}, if $\bar g(\theta)=\nabla l(\theta)$, the average of iterates $\bar{\theta}_T=\frac1T\sum \theta_t$ from  iteration \eqref{eq:genSGD} satisfies
\[
\E l(\bar{\theta}_T)-l(\theta^*)=O\left(\frac{1}{T}\left(\tau \log T \sum_{t=0}^{T}\eta_t+\frac{1}{\eta_T}+\sum_{t=0}^T\E e_t \right)\right).
\]
where $O$ hides a polynomial of $C_0$, $C_1$, and $G$. 
If we fix $\eta_t=\eta_0 t^{-1/2}$ for some $\eta_0>0$, and assume $\tau=O(1)$ and 
$\E e_t =O(1/\sqrt{t})$, then we can further simplify the bound to 
$\E l(\bar{\theta}_T)-l(\theta^*)=O\left(\log T/\sqrt{T}\right)$.
\end{theorem}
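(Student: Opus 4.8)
The plan is to run the textbook projected-SGD argument while carrying along two extra error terms—one for the gradient approximation $\hat g_t-g(\theta_t,z_t)$ and one for the Markovian bias $g(\theta_t,z_t)-\bar g(\theta_t)$—and to control the latter through the mixing time $\tau$. First I would record the one-step inequality. Since $\theta^*\in\calC$ and $\bfP_\calC$ is nonexpansive, $\|\theta_{t+1}-\theta^*\|^2\le\|\theta_t-\eta_t\hat g_t-\theta^*\|^2=\|\theta_t-\theta^*\|^2-2\eta_t\langle\hat g_t,\theta_t-\theta^*\rangle+\eta_t^2\|\hat g_t\|^2$. To produce \emph{unweighted} functional gaps (matching the uniform average $\bar\theta_T$), I would divide by $\eta_t$ before summing and apply Abel summation to $\sum_t\eta_t^{-1}(\|\theta_t-\theta^*\|^2-\|\theta_{t+1}-\theta^*\|^2)$; because $\eta_t$ is nonincreasing and $\|\theta_t-\theta^*\|\le 2C_1$ on $\calC$, this telescopes into a single $O(1/\eta_T)$ term, which is where the $1/\eta_T$ of the statement originates.

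Next I would split the inner product twice. Writing $\langle\hat g_t,\theta_t-\theta^*\rangle=\langle g(\theta_t,z_t),\theta_t-\theta^*\rangle+\langle\hat g_t-g(\theta_t,z_t),\theta_t-\theta^*\rangle$, Assumption \ref{aspt:estimate} together with $\|\theta_t-\theta^*\|\le 2C_1$ bounds the approximation term by $2C_1 e_t$, producing the $\sum_t\E e_t$ contribution. I would then replace $g(\theta_t,z_t)$ by its stationary mean $\bar g(\theta_t)=\nabla l(\theta_t)$ and use convexity, $\langle\nabla l(\theta_t),\theta_t-\theta^*\rangle\ge l(\theta_t)-l(\theta^*)$, leaving the Markovian noise $\Delta_t:=\langle g(\theta_t,z_t)-\bar g(\theta_t),\theta_t-\theta^*\rangle$ to be controlled in expectation. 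The residual $\eta_t^2\|\hat g_t\|^2$ term is $O(\sum_t\eta_t)$ by the $C_0$ bound and is dominated by the mixing term below.

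The crux—and the step I expect to be the main obstacle—is bounding $\sum_t\E\Delta_t$ despite the coupling that $\theta_t$ is built from $z_0,\dots,z_{t-1}$ while $z_t$ is correlated with that history. The plan is a look-back/decoupling argument: fix a lag $K=\lceil\tau\log_2 T\rceil$ and compare $\Delta_t$ with $\tilde\Delta_t:=\langle g(\theta_{t-K},z_t)-\bar g(\theta_{t-K}),\theta_{t-K}-\theta^*\rangle$. The gap $|\Delta_t-\tilde\Delta_t|$ is controlled by $G$-Lipschitzness of $g$ (hence of $\bar g$) and boundedness, giving $|\Delta_t-\tilde\Delta_t|\le C\|\theta_t-\theta_{t-K}\|\le C C_0\sum_{j=t-K}^{t-1}\eta_j$; summing over $t$ and exchanging the order of summation, each $\eta_j$ is counted at most $K$ times, so this piece is $O(K\sum_j\eta_j)=O(\tau\log T\sum_j\eta_j)$, which is exactly the advertised term. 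For $\tilde\Delta_t$ I would condition on $\mathcal F_{t-K}:=\sigma(z_0,\dots,z_{t-K})$: since $\theta_{t-K}$ is $\mathcal F_{t-K}$-measurable and $\sum_z\mu(z)(g(\theta,z)-\bar g(\theta))=0$, the conditional mean equals $\langle\E[g(\theta_{t-K},z_t)\mid\mathcal F_{t-K}]-\bar g(\theta_{t-K}),\theta_{t-K}-\theta^*\rangle$, and by the Markov property the law of $z_t$ given $\mathcal F_{t-K}$ is $P^K(z_{t-K},\cdot)$. Submultiplicativity of the mixing bound gives $\|P^K(z_{t-K},\cdot)-\mu\|_{TV}\le 2^{-\lfloor K/\tau\rfloor}\le 1/T$, so the bounded-test-function estimate $|\E_\nu f-\E_\mu f|\le 2\|f\|_\infty\|\nu-\mu\|_{TV}$ yields $|\E\tilde\Delta_t|=O(1/T)$ and hence $\sum_t|\E\tilde\Delta_t|=O(1)$ (the $t<K$ terms are bounded directly and contribute a further $O(\tau\log T)$, which is absorbed).

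Finally I would assemble the pieces: taking expectations and summing gives $\sum_{t=0}^T\E\bigl(l(\theta_t)-l(\theta^*)\bigr)=O\bigl(\eta_T^{-1}+\tau\log T\sum_t\eta_t+\sum_t\E e_t\bigr)$, and Jensen's inequality $l(\bar\theta_T)\le\frac{1}{T+1}\sum_t l(\theta_t)$ turns the left side into $(T+1)\bigl(\E l(\bar\theta_T)-l(\theta^*)\bigr)$, producing the stated $1/T$-prefactored bound. For the specialization, plugging $\eta_t=\eta_0 t^{-1/2}$ gives $\sum_t\eta_t=O(\sqrt T)$ and $\eta_T^{-1}=O(\sqrt T)$, and with $\tau=O(1)$ and $\E e_t=O(t^{-1/2})$ so $\sum_t\E e_t=O(\sqrt T)$, the bound collapses to $O(\log T/\sqrt T)$. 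The delicate bookkeeping to get right is the exact constant $C$ (bundling $C_0$, $C_1$, $G$) in the Lipschitz comparison, the passage from the sup-norm gradient bounds in Assumption \ref{aspt:bound} to the Euclidean norms used above, and a clean statement of the submultiplicative mixing estimate on the Polish space $\mathcal Z$.
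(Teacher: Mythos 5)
Your proposal is correct, and at the level of the overall skeleton it coincides with the paper's proof: nonexpansiveness of $\bfP_\calC$, the one-step expansion with $\|\ghat_t\|\le C_0$, dividing by $\eta_t$ and telescoping (valid since $\eta_t$ is nonincreasing and $\calC$ is bounded) to produce the $C_1^2/\eta_T$ term, peeling off the approximation error via $\|\ghat_t-g(\theta_t,z_t)\|\le e_t$ to get $2C_1\sum_t \E e_t$, then convexity of $l$ and Jensen for $\bar\theta_T$. The one place you genuinely diverge is the decoupling of the Markovian bias. The paper decouples \emph{forward}: it splits $\bar g(\theta_t)-\ghat_t(\theta_t,z_t)$ using the pair $(\theta_{t+\tau_\epsilon},z_{t+\tau_\epsilon})$, bounds the mixing term $\langle \bar g(\theta_t)-g(\theta_t,z_{t+\tau_\epsilon}),\theta_t-\theta^*\rangle$ by conditioning at time $t$ (Lemma \ref{lm:mix}, whose function-iteration argument is exactly the geometric decay $2^{-\lfloor K/\tau\rfloor}$ you invoke as submultiplicativity), controls the parameter drift via Lemma \ref{lem:lip}, and disposes of the shifted term $\langle g(\theta_{t+\tau_\epsilon},z_{t+\tau_\epsilon})-g(\theta_t,z_t),\theta_t-\theta^*\rangle$ by re-indexing the sum so it telescopes into boundary sums over $t\le\tau_\epsilon$ and $T<t\le T+\tau_\epsilon$, each $O(C_0C_1\tau_\epsilon)$, plus a drift sum $O(C_0^2\tau_\epsilon\sum_t\eta_t)$. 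You decouple \emph{backward}: lag $K=\lceil \tau\log_2 T\rceil$, compare $\Delta_t$ with $\tilde\Delta_t$ built from the $\mathcal{F}_{t-K}$-measurable $\theta_{t-K}$, use the Markov property so that $z_t$ given $\mathcal{F}_{t-K}$ has law $P^K(z_{t-K},\cdot)$ with total variation distance at most $1/T$ from $\mu$, and handle $t<K$ directly. The two devices are mirror images and yield the same $O(\tau\log T\sum_t\eta_t)$ term; your backward version is the more standard conditioning argument in the literature and is arguably cleaner probabilistically, since the conditional-expectation step is immediate and no data beyond time $T$ appears, whereas the paper's forward version must shuffle the sum to realign indices but avoids introducing a separate lagged iterate $\tilde\Delta_t$. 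The bookkeeping issues you flag (Euclidean versus sup-norm in Assumption \ref{aspt:bound}, the constant bundling $C_0$, $C_1$, $G$) are benign and are treated equally informally in the paper.
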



When the average movement $\bar{g}$ is a $c$-contraction, we can further improve the convergence rate in Theorem \ref{thm:main} from $\tilde O(1/\sqrt{T})$ to $\tilde O(1/T)$. 
This require us to set  the step size as follows: $\eta_1\in[1/c, 2/c]$ and for $t\geq 1$,
\begin{equation}\label{eq:step}
\eta_{t+1}=\exp\left(-\frac{1}{2}c\eta_t\right)\eta_t.
\end{equation}
Note that in practice we may not know the exact value of $c$ but rather a lower bound $c_0\leq c$. Then, it suffices to set the step sizes as in $\eqref{eq:step}$ with $c$ replaced by $c_0$. 

\begin{theorem} \label{th:main2}
Under Assumptions \ref{aspt:ergodic} -- \ref{aspt:bound},  if $\bar{g}$ is a c-contraction, and with the choice of step sizes defined in \eqref{eq:step}, 
the iterates according to \eqref{eq:genSGD} satisfy
\[
\E\|\theta_{T}-\theta^*\|^2=O\left(\frac{\tau^2(\log T)^3}{T} + \frac{1}{T}\sum_{t=1}^{T}\E e_t^2\right),
\]
where $O$ hides a polynomial of $C_0$, $C_1$, and $G$. 
If we have $\tau=O(1)$ and $\E e_t^2=O(\frac 1t)$, we can further simplify the bound to
$\E\|\theta_{T}-\theta^*\|^2=O\left((\log T)^3/T\right)$.
\end{theorem}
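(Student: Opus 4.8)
The plan is to track the squared error $D_t=\|\theta_t-\theta^*\|^2$ as a Lyapunov function, derive a one-step contraction recursion for $\E D_t$, and then solve it using the telescoping structure built into the step sizes \eqref{eq:step}. Since $\theta^*\in\calC$ by Assumption \ref{aspt:bound} and $\bfP_\calC$ is nonexpansive, iteration \eqref{eq:genSGD} gives
\[
D_{t+1}\le\|\theta_t-\eta_t\ghat_t-\theta^*\|^2=D_t-2\eta_t\langle\theta_t-\theta^*,\ghat_t\rangle+\eta_t^2\|\ghat_t\|^2 .
\]
First I would split off the estimation error: writing $\ghat_t=g(\theta_t,z_t)+(\ghat_t-g(\theta_t,z_t))$ and using $\|\ghat_t-g\|\le e_t$ (Assumption \ref{aspt:estimate}) together with Young's inequality $2\eta_t\|\theta_t-\theta^*\|e_t\le c\eta_t D_t+\tfrac{\eta_t}{c}e_t^2$, the approximation error enters only through $e_t^2$ at the cost of a slack $c\eta_t D_t$ that the contraction can absorb. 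This is precisely why the theorem features $\E e_t^2$ rather than $\E e_t$.

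The central difficulty is the Markovian dependence: $g(\theta_t,z_t)$ is evaluated at the current sample $z_t$, which is correlated with $\theta_t$, so one cannot directly replace it by its stationary average $\bar g(\theta_t)$ and invoke the contraction $\langle\theta_t-\theta^*,\bar g(\theta_t)\rangle\ge c D_t$ from Definition \ref{def:contraction}. I would handle this with a burn-in/delayed-coupling argument of length $K=\lceil\tau\log_2 T\rceil$. Using $G$-Lipschitzness of $g$ and $\bar g$ (Assumption \ref{aspt:bound}), one replaces $\theta_t$ by the lagged iterate $\theta_{t-K}$ in both slots at the price of a drift error $G\|\theta_t-\theta_{t-K}\|\le GC_0\sum_{j=t-K}^{t-1}\eta_j$. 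Conditioning on $\mathcal{F}_{t-K}=\sigma(z_0,\dots,z_{t-K})$ makes $\theta_{t-K}$ measurable, and the residual bias is
\[
\bigl\|\E[g(\theta_{t-K},z_t)\mid\mathcal{F}_{t-K}]-\bar g(\theta_{t-K})\bigr\|\le 2C_0\,\|\Prob(z_t\in\cdot\mid z_{t-K})-\mu\|_{TV}\le 2C_0\,2^{-\lfloor K/\tau\rfloor},
\]
where the last step uses submultiplicativity of the mixing bound from Assumption \ref{aspt:ergodic}; the choice of $K$ makes this $O(1/T)$. Collecting terms yields the scalar recursion
\[
\E D_{t+1}\le(1-c\eta_t)\,\E D_t+b_t,\qquad b_t=O\!\Big(\eta_t\textstyle\sum_{j=t-K}^{t-1}\eta_j+\tfrac{\eta_t}{T}+\tfrac{\eta_t}{c}\E e_t^2+\eta_t^2 C_0^2\Big).
\]

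To solve this I would exploit the design of \eqref{eq:step}. Taking logarithms in $\eta_{t+1}=e^{-c\eta_t/2}\eta_t$ telescopes to $\prod_{s=1}^{t}e^{-c\eta_s/2}=\eta_{t+1}/\eta_1$, so with $1-c\eta_s\le e^{-c\eta_s}$ the survival factor obeys $\prod_{s=t+1}^{T}(1-c\eta_s)\le(\eta_{T+1}/\eta_{t+1})^2$. Unrolling the recursion therefore gives
\[
\E D_{T}\le\Big(\tfrac{\eta_{T}}{\eta_1}\Big)^2 D_1+\eta_{T}^2\sum_{t=1}^{T}\eta_{t+1}^{-2}\,b_t .
\]
Since the schedule forces $\eta_t\asymp 1/(ct)$, one has $\eta_T^2\eta_{t+1}^{-2}\asymp(t/T)^2$, reducing the bound to $\E D_T=O\big(T^{-2}\sum_t t^2 b_t\big)$. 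The $e_t^2$ contribution is $O\big(T^{-2}\sum_t t\,\E e_t^2\big)=O\big(T^{-1}\sum_t\E e_t^2\big)$, matching the second term of the theorem, while the $\eta_t^2$ and $O(1/T)$-bias contributions are $O(1/T)$ and hence lower order. The remaining drift term requires bounding $\sum_t t^2\eta_t\sum_{j=t-K}^{t-1}\eta_j$; exchanging the order of summation (each $\eta_j$ appears in at most $K$ inner sums), using $\eta_t\asymp1/(ct)$ and $\sum_t 1/t=O(\log T)$ with the burn-in length $K\asymp\tau\log T$, produces the polynomial-in-$\tau$, polylogarithmic factor $O(\tau^2(\log T)^3/T)$.

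The main obstacle I expect is the Markovian-bias step together with the ensuing bookkeeping: one must simultaneously keep the coupling bias small (forcing $K\asymp\tau\log T$) and keep the iterate-drift correction $\sum_{j=t-K}^{t-1}\eta_j$ summable against the $t^2$ weights produced by the survival factor. It is precisely the interplay of $K\asymp\tau\log T$ with the harmonic sums $\sum 1/t\asymp\log T$ that generates the $\tau$- and $\log$-powers, and the exact exponents $\tau^2(\log T)^3$ follow from (mildly loose) estimates of these nested sums. The first $K$ iterations, for which $\theta_{t-K}$ is undefined, are treated separately via the crude bound $D_t\le(2C_1)^2$ from Assumption \ref{aspt:bound} and contribute only at order $K^2/T^2$. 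Finally, specializing to $\tau=O(1)$ and $\E e_t^2=O(1/t)$ gives $\frac1T\sum_t\E e_t^2=O(\log T/T)$, which is dominated by $(\log T)^3/T$, yielding the stated simplification.
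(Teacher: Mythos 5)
Your proposal is correct and arrives at the same recursion and final bound as the paper, but it handles the key difficulty---the correlation between $\theta_t$ and $z_t$---by a genuinely different device. The paper decomposes $\ghat_t(\theta_t,z_t)$ using a \emph{forward}-shifted sample: it compares $g(\theta_t,z_t)$ with $g(\theta_t,z_{t+\tau_\epsilon})$, bounds the bias $g(\theta_t,z_{t+\tau_\epsilon})-\bar g(\theta_t)$ by conditioning at time $t$ (its Lemma \ref{lm:mix}), and must then control the leftover shifted difference $g(\theta_t,z_t)-g(\theta_{t+\tau_\epsilon},z_{t+\tau_\epsilon})$ by a weighted summation-by-parts over the whole trajectory (terms (f1)--(f3) in its proof), which hinges on the exact step-size identity $\exp(-\tfrac12 c\eta_{t:T})\eta_t=\eta_T$ to make the matched coefficients cancel up to boundary terms. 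You instead condition \emph{backward} on $\mathcal{F}_{t-K}$ with lag $K\asymp\tau\log T$, replacing $\theta_t$ by the lagged iterate $\theta_{t-K}$ via $G$-Lipschitzness and the iterate-drift bound $\|\theta_t-\theta_{t-K}\|\le C_0\sum_{j=t-K}^{t-1}\eta_j$ (the paper's Lemma \ref{lem:lip}); your submultiplicative mixing estimate $2^{-\lfloor K/\tau\rfloor}$ is exactly what Lemma \ref{lm:mix} proves by iterating the $1/4$-contraction of the TV distance. The payoff of your route is modularity: every error localizes into a per-step additive term $b_t$, the recursion $\E D_{t+1}\le(1-c\eta_t)\E D_t+b_t$ unrolls directly with the survival factors $(\eta_T/\eta_{t+1})^2\asymp(t/T)^2$, and no global re-summation is needed; the price is the burn-in phase $t<K$, which you correctly dispatch with the crude bound $D_t\le(2C_1)^2$, contributing only $O(\tau^2(\log T)^2/T^2)$. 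Your bookkeeping in fact gives a slightly sharper drift contribution---$O(\tau\log T/T)$ after exchanging the order of summation, rather than the $\tau^2(\log T)^3/T$ the paper incurs from its cruder bound $\|\theta_t-\theta_{t-\tau_\epsilon}\|\le C_0\tau_\epsilon^2\eta_t$ in (f1)---which of course still implies the stated $O$-bound. Two small points to pin down in a full write-up: for early $t$ one can have $1-c\eta_t<0$ (since $c\eta_1\in[1,2]$), so replace the factor by $e^{-c\eta_t}$, valid because $D_t\ge0$; and the conditioning step needs $\theta_{t-K}$ adapted to the data filtration, which holds provided the approximation errors in $\ghat_t$ are functions of the observed history---an assumption the paper's own conditional-expectation step makes implicitly as well. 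Neither affects correctness.
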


We make several observations from Theorems \ref{thm:main} and \ref{th:main2}.
First, when $\bar{g}=\nabla l$, 
aSGD has the same convergence rate as the oracle SGD as long as the estimation error $\E e_t^2=O(1/t)$.
Note that $\E e_t^2=O(1/t)$ is the standard error bound for sampling-based estimators, including the sample-average estimator in \eqref{eq:frequency}. 
Second, using Markovian data sequence has the same convergence rate as i.i.d. data as long as the underlying Markov chain is ergodic with mixing time $\tau=O(1)$. In particular, $\tilde O(1/\sqrt{T})$ and $\tilde O(1/T)$ are the convergence rate for standard SGD for convex and strongly convex loss functions respectively \cite{shalev2014understanding}.

\subsection{Application to policy evaluation}
In this subsection, we demonstrate how the results developed in Section \ref{sec:SGD} can be applied to policy evaluation, i.e., Algorithm \ref{alg:main}.

We first note that in practice, it is reasonable to assume that the rewards are properly bounded. Since $\fea(s)$ and $\zeta$ are basis vectors, they are usually bounded as well. 

\begin{assumption}
\label{aspt:bound2}
$\theta^*\in\mathcal{C}$.
There exist constants $\tilde C_0,\tilde C_1>0$, such that $|r|<\tilde M$ with probability 1,
$\|\rfea\|_{\infty}, \|\fea\|_{\infty} \leq \tilde  C_0$, and $\calC\subset \{\theta:\|\theta\|\leq \tilde C_1\}$.
\end{assumption}

We also impose an ergodicity assumption on the data collecting policy $b$. These assumptions are typically assumed in the literature (see, for example, \cite{sutton2009fast, bhandari2018finite}).

\begin{assumption}
\label{aspt:ergodic2}
The data collecting policy $b$ generates an irreducible ergodic chain on $\mathcal{S}$. 
There exists a mixing time $\tau<\infty$. 
\end{assumption}

Lastly, we assume the estimators for the transition probability and invariance measure are reasonably accurate, which can be achieved by standard sample-average estimators. 

\begin{assumption}
\label{aspt:estimate2}
There exists a stochastic sequence $\tilde e_t$ such that the estimator of the transition probability $\phat_t$ and invariance probability   $\hat\mu^b$ satisfy 
$\|\phat_t(\cdot|s_t)-p(\cdot|s_t)\|_{\infty}\leq \tilde e_t$, $|1/\hat\mu^b_t(s_t)-1/\mu^b(s_t)|\leq \tilde e_t$, and $\E \tilde e_t^2=O(1/t)$.
\end{assumption}

We can verify that under Assumptions \ref{aspt:bound2} -- \ref{aspt:estimate2}, Assumptions \ref{aspt:ergodic} -- \ref{aspt:bound}
hold for Algorithm \ref{alg:main}. This gives the following finite-time performance bound.
\begin{theorem} \label{thm:main3}
Suppose $\pi(a|s)\leq C\mu^b(s)b(a|s)$ for some $C>0$. Under Assumptions \ref{aspt:bound2} -- \ref{aspt:estimate2}, the SGD-based updates in Algorithm \ref{alg:main} solves the offline policy evaluation problem with 
\[
\E l(\bar{\theta}_T)-l(\theta^*)=O\left(\log T/\sqrt{T}\right).
\]
If $l(\theta)$ is strongly convex,
with the choice of step sizes defined in \eqref{eq:step}, we further have 
\begin{equation}
\label{eqn:fastrate}
\E \|\bar{\theta}_T-\theta^*\|^2=O\left((\log T)^3/T\right), \quad
\E l(\bar{\theta}_T)-l(\theta^*)=O\left((\log T)^3/T\right). 
\end{equation}
If $\bar{g}$ in TD(0) is a contraction, the TD(0)-based update in Algorithm \ref{alg:main} solves the offline policy evaluation problem with rate \eqref{eqn:fastrate}. 

\end{theorem}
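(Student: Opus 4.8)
\textbf{Proof plan for Theorem \ref{thm:main3}.}
The plan is to prove the theorem by reduction: I would show that, under the policy-evaluation Assumptions \ref{aspt:bound2}--\ref{aspt:estimate2} together with the coverage condition $\pi(a|s)\le C\mu^b(s)b(a|s)$, the generic Assumptions \ref{aspt:ergodic}--\ref{aspt:bound} hold for each of the three update rules (direct-SGD, TD-SGD, TD(0)) in Algorithm \ref{alg:main}. Once this verification is complete, the claimed rates follow by invoking Theorem \ref{thm:main} (for the convex $O(\log T/\sqrt T)$ bound) and Theorem \ref{th:main2} (for the strongly-convex / contraction $O((\log T)^3/T)$ bound). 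The real content is therefore the bookkeeping that translates the concrete objects $\xi^\pi,\phi^\pi,\mu^b$ and their sample estimates into the abstract quantities $g,\bar g,\ghat_t,e_t,\tau,C_0,C_1,G$.

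First I would verify Assumption \ref{aspt:ergodic}. The augmented sequence $z_t=(s_t,a_t,r_t,s_{t+1})$ is Markov on the Polish space $\mathcal{Z}=\calS\times\mathcal{A}\times\R\times\calS$, since the conditional law of $z_{t+1}$ given $z_t$ depends only on $s_{t+1}$, with stationary law $\mu(z)=\mu^b(s)b(a|s)p(r|s,a)p(s'|s,a)$; its mixing time equals that of the state chain under $b$ (Assumption \ref{aspt:ergodic2}) up to an additive constant, hence is finite. Next I would verify Assumption \ref{aspt:bound}. The projection $\bfP_\calC$ keeps $\theta_t\in\calC\subset\{\|\theta\|\le\tilde C_1\}$, and $\theta^*\in\calC$ by Assumption \ref{aspt:bound2}. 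Boundedness of $\bar g,g,\ghat_t$ and their $G$-Lipschitz dependence on $\theta$ follow because each update direction is \emph{affine} in $\theta$ (the factor $\delta(\theta,s)$ is affine and multiplies a fixed vector), with all constituent quantities---rewards, the features $\fea,\rfea$, and the importance weights---uniformly bounded. Here the coverage condition is essential: it bounds $\pi(a|s)/b(a|s)\le C\mu^b(s)\le C$ in \eqref{eq:TD} and, crucially, the TD(0) weight $\tfrac{1}{\mu^b(s)}\tfrac{\pi(a|s)}{b(a|s)}\le C$ in \eqref{eq:TD0}, without which the iterates could be unbounded.

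The main work, and the step I expect to be the real obstacle, is Assumption \ref{aspt:estimate}. The oracle property is cheap: for direct-SGD and TD-SGD, $\bar g=\nabla l$ (immediate for \eqref{eq:direct}, and by Lemma \ref{lem:checksutton} for \eqref{eq:TD}), and under strong convexity $\bar g$ is a $c$-contraction by Lemma \ref{lem:gen}; for TD(0), $\bar g$ is the non-gradient field \eqref{eq:avgTD0}, which is a $c$-contraction whenever $\tfrac12(D+D^T)\succeq cI$, again by Lemma \ref{lem:gen}. The delicate part is bounding the approximation error $e_t$, where $\ghat_t$ is obtained by substituting the sample estimates $\hat\xi^\pi_t,\hat\phi^\pi_t$ (and, for TD(0), $\hat\mu^b_t$) from \eqref{eq:frequency} for the oracle $\xi^\pi,\phi^\pi,\mu^b$. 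I would write $g-\ghat_t$ as a sum of terms, each a bounded coefficient times one of $\hat\xi^\pi_t-\xi^\pi$, $\hat\phi^\pi_t-\phi^\pi$, or $1/\hat\mu^b_t-1/\mu^b$, control the first two through $\|\phat_t(\cdot|s_t)-p(\cdot|s_t)\|_\infty$ using boundedness of $r$ and $\fea$, and control the last directly; Assumption \ref{aspt:estimate2} then yields $e_t\le c'\tilde e_t$ with $c'$ absorbing $C,\tilde C_0,\tilde C_1$, so that $\E e_t^2=O(1/t)$ and $\E e_t=O(1/\sqrt t)$ by Cauchy--Schwarz. A subtlety to handle carefully is that $\ghat_t$ depends on the whole history $z_0,\dots,z_t$ through the running averages, not on $z_t$ alone; this is exactly why Assumption \ref{aspt:estimate} permits a \emph{stochastic} error sequence $e_t$, and it is accommodated without disturbing the mixing argument inside the generic theorems.

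Finally I would assemble the rates. With $\eta_t=\eta_0 t^{-1/2}$, $\tau=O(1)$, and $\E e_t=O(1/\sqrt t)$, Theorem \ref{thm:main} gives $\E l(\bar\theta_T)-l(\theta^*)=O(\log T/\sqrt T)$ for the SGD-based updates. In the strongly convex case, using the step sizes \eqref{eq:step} and $\E e_t^2=O(1/t)$, Theorem \ref{th:main2} yields the per-iterate bound $\E\|\theta_t-\theta^*\|^2=O((\log t)^3/t)$; convexity of $\|\cdot-\theta^*\|^2$ (Jensen) transfers this to $\E\|\bar\theta_T-\theta^*\|^2=O((\log T)^3/T)$ at the same polynomial order in $T$, with the logarithmic exponent tracked loosely, consistent with the paper's $\tilde O$ convention. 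Since $l$ is a bounded quadratic with $\nabla l(\theta^*)=0$, its $G$-smoothness converts the distance bound into the matching loss bound $\E l(\bar\theta_T)-l(\theta^*)=O((\log T)^3/T)$. The identical argument applies to TD(0) once its $\bar g$ is a contraction---the distance bound comes from Theorem \ref{th:main2} and the loss bound from smoothness---which completes the proof of \eqref{eqn:fastrate}.
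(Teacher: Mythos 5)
Your proposal is correct and follows essentially the same route as the paper: the paper's proof consists precisely of an auxiliary lemma (Lemma \ref{lm:assumption}) verifying that Assumptions \ref{aspt:bound2}--\ref{aspt:estimate2} plus the coverage condition imply the generic Assumptions \ref{aspt:ergodic}--\ref{aspt:bound} for all three update rules (ergodicity of $z_t$ with mixing time $\tau+1$, the error bound $e_t\le c'\tilde e_t$, and the boundedness/Lipschitz constants), followed by invoking Theorem \ref{thm:main} and Theorem \ref{th:main2} together with Lemmas \ref{lem:checksutton} and \ref{lem:gen}. If anything, you are slightly more careful than the paper on the final step --- the paper silently passes from the last-iterate bound $\E\|\theta_T-\theta^*\|^2$ of Theorem \ref{th:main2} to the averaged iterate $\bar\theta_T$ and cites ``strong convexity'' where smoothness with $\nabla l(\theta^*)=0$ is the natural tool, whereas you make the Jensen step and the quadratic-smoothness conversion explicit (at the harmless cost of one extra logarithmic factor).
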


It is worth pointing out that having a  low $l(\bar{\theta}_T)$ value only indicates that $V_{\bar{\theta}_T}$ fits the Bellman equation \eqref{eq:bellman} well. 
However, it is a priori unclear whether $V_{\bar{\theta}_T}$ is close to the exact value function $V^\pi=:V_{\theta^*}$.
The next result addresses this problem assuming the Markov process under policy $\pi$ has a spectral gap and the distribution shift from $\mu^\pi$ to $\mu^b$ is not large.

\begin{theorem} \label{thm:main1}
Suppose $P^\pi$ has a spectral gap $\lambda>0$ and $\mu^\pi(s) \leq C\mu^b(s)$ for some $C\in(0,\infty)$.  
\begin{enumerate}
\item If $\gamma<1$, we have 
\[
\sum  \mu^\pi(s) (V^{\pi}(s)-V_{\theta}(s))^2\leq \frac{C}{(1-\gamma(1-\lambda))^2}(l(\theta)-l(\theta^*)). 
\]
\item If $\gamma=1$, we have 
\[
\min_{a}\sum  \mu^\pi(s) (V^{\pi}(s)+a-V_{\theta}(s))^2+(\bar r^{\pi}-\bar r_\theta)^2\leq \frac{C}{\lambda^2}(l(\theta)-l(\theta^*)). 
\]
\end{enumerate}

\end{theorem}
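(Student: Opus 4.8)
The plan is to rewrite the excess loss as a weighted squared Bellman residual and then invert the (discounted) Bellman operator using the spectral gap. Throughout, write $h(s)=V^\pi(s)-V_\theta(s)=\fea(s)^T(\theta^*-\theta)$ for the value-function error, $\delta\rbar=\rbar_\theta-\rbar^\pi=\rfea^T(\theta-\theta^*)$ for the average-reward error (with $\rfea=0$, hence $\delta\rbar=0$, when $\gamma<1$), and $\|f\|_\mu^2=\sum_s\mu(s)f(s)^2$. By Proposition \ref{prop:optimality}, $V^\pi_{\theta^*}$ solving \eqref{eq:bellman} forces $\delta(\theta^*,s)=0$ on the support of $\mu^b$, so $l(\theta^*)=0$ and $l(\theta)-l(\theta^*)=\tfrac12\|\delta(\theta,\cdot)\|_{\mu^b}^2$. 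Since $\delta(\theta,s)=(-\rfea+\gamma\phi^\pi(s)-\fea(s))^T(\theta-\theta^*)$ is affine and vanishes at $\theta^*$, substituting $\fea(s)^T(\theta-\theta^*)=-h(s)$, $\phi^\pi(s)^T(\theta-\theta^*)=\E_s^\pi[\fea(s')]^T(\theta-\theta^*)=-(P^\pi h)(s)$, and $\rfea^T(\theta-\theta^*)=\delta\rbar$ yields the identity
\[
\delta(\theta,\cdot)=(I-\gamma P^\pi)h-\delta\rbar\,\mathbf 1 .
\]
This is the bridge between the loss and the Markov-chain geometry.

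Next I would pass from $\mu^b$ to $\mu^\pi$ and invert the operator. The distribution-shift hypothesis $\mu^\pi(s)\le C\mu^b(s)$ gives $\|f\|_{\mu^\pi}^2\le C\|f\|_{\mu^b}^2$, so $\|\delta(\theta,\cdot)\|_{\mu^\pi}^2\le 2C(l(\theta)-l(\theta^*))$. The core estimate is then carried out in $L^2(\mu^\pi)$. Decompose $h=\bar h\,\mathbf 1+h_0$ with $\bar h=\E_{\mu^\pi}h$ and $h_0$ mean-zero. Because $\mu^\pi$ is $P^\pi$-stationary, $P^\pi$ preserves $\mu^\pi$-means, so $(I-\gamma P^\pi)h_0$ stays mean-zero and is $\mu^\pi$-orthogonal to every constant; Definition \ref{def:spectral} gives $\|P^\pi h_0\|_{\mu^\pi}\le(1-\lambda)\|h_0\|_{\mu^\pi}$, whence $\|(I-\gamma P^\pi)h_0\|_{\mu^\pi}\ge(1-\gamma(1-\lambda))\|h_0\|_{\mu^\pi}$ by the triangle inequality. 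For $\gamma=1$, $(I-P^\pi)$ annihilates the constant part of $h$, so the residual splits orthogonally as $\|\delta(\theta,\cdot)\|_{\mu^\pi}^2=\|(I-P^\pi)h_0\|_{\mu^\pi}^2+(\delta\rbar)^2\ge\lambda^2\var_{\mu^\pi}(h)+(\delta\rbar)^2$; since $\min_a\sum_s\mu^\pi(s)(V^\pi(s)+a-V_\theta(s))^2=\var_{\mu^\pi}(h)$ and $\lambda\le1$, rearranging gives part 2 with constant $C/\lambda^2$. For $\gamma<1$ one instead retains the constant direction, on which $(I-\gamma P^\pi)$ acts as multiplication by $(1-\gamma)$, and combines it with the mean-zero bound to control $\|h\|_{\mu^\pi}^2$, producing part 1.

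The main obstacle is this operator-inversion step, i.e. lower-bounding $\|(I-\gamma P^\pi)h\|_{\mu^\pi}$ by a multiple of the quantity being estimated, and the only genuinely delicate issue there is the constant (non-identifiable) direction, on which the spectral gap offers no contraction. For $\gamma=1$ this is exactly why the statement measures the value error only through the centered quantity $\min_a\sum_s\mu^\pi(s)(V^\pi(s)+a-V_\theta(s))^2=\var_{\mu^\pi}(h)$; I would verify that the orthogonal decomposition cleanly separates $\var_{\mu^\pi}(h)$ from $(\delta\rbar)^2$ so that the single factor $1/\lambda^2$ governs both simultaneously. For $\gamma<1$ the constant direction contributes a factor $1/(1-\gamma)$ rather than $1/(1-\gamma(1-\lambda))$; since $1-\gamma\le 1-\gamma(1-\lambda)$, reconciling my decomposition with the sharper stated denominator $1/(1-\gamma(1-\lambda))^2$ is the point I would scrutinize most carefully, either by arguing that the mean-zero part dominates or by tracking the two spectral directions separately.
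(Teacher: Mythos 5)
Your proposal follows essentially the same route as the paper's proof: both write $l(\theta)-l(\theta^*)$ as half the $\mu^b$-weighted squared Bellman residual (using $l(\theta^*)=0$), pass from $\mu^b$ to $\mu^\pi$ at cost $C$, identify the residual as $(I-\gamma P^\pi)$ applied to the value error plus a constant multiple of $\mathbf{1}$ carrying the average-reward error, split orthogonally into the constant direction and the $\mu^\pi$-mean-zero fluctuation, and invert on the fluctuation via the spectral gap. Your triangle-inequality bound $\|(I-\gamma P^\pi)h_0\|_{\mu^\pi}\geq (1-\gamma(1-\lambda))\|h_0\|_{\mu^\pi}$ is equivalent to the paper's expansion $\var_{\mu^\pi}f+\gamma^2\var_{\mu^\pi}(P^\pi f)-2\gamma\,\text{cov}_{\mu^\pi}(f,P^\pi f)\geq \bigl(\sqrt{\var_{\mu^\pi}f}-\gamma\sqrt{\var_{\mu^\pi}(P^\pi f)}\bigr)^2$ followed by the monotonicity argument, and your $\gamma=1$ case matches the paper's line by line (both arrive at $2C/\lambda^2$; the factor $2$ missing from the theorem statement is the paper's own bookkeeping slip, shared equally by your derivation).

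The point you flagged for scrutiny in part 1 is in fact a genuine soft spot in the paper's proof, not a defect of yours. The paper's displayed lower bound is exactly your decomposition, $l(\theta)-l(\theta^*)\geq \frac{1}{2C}\bigl(\var_{\mu^\pi}(f)(1-\gamma(1-\lambda))^2+(1-\gamma)^2\bar f^2\bigr)$, but its final step concludes $\var_{\mu^\pi}(f)+\bar f^2\leq \frac{2C}{(1-\gamma(1-\lambda))^2}(l(\theta)-l(\theta^*))$, which does not follow: since $1-\gamma\leq 1-\gamma(1-\lambda)$, the mean direction is only controlled with constant $(1-\gamma)^{-2}$, as you observed. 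Moreover, no alternative argument can rescue the stated denominator: taking tabular features with $b=\pi$ (so $C=1$) and $V_\theta=V^\pi+\mathbf{1}$ gives $l(\theta)-l(\theta^*)=(1-\gamma)^2/2$ while $\sum_s\mu^\pi(s)(V^\pi(s)-V_\theta(s))^2=1$, so any bound of this form must carry a factor of at least $2/(1-\gamma)^2$. Hence your instinct is correct: via this decomposition (the only one either proof uses), part 1 holds with $C/(1-\gamma(1-\lambda))^2$ replaced by a constant times $C/(1-\gamma)^2$, or alternatively with the left side replaced by the centered error $\var_{\mu^\pi}(f)$, for which your fluctuation bound does yield the sharper $(1-\gamma(1-\lambda))^{-2}$ factor; only part 2, where the non-identifiable constant direction is quotiented out by the $\min_a$, supports the paper's stated constant.
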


The challenge imposed by the distribution shift can be seen by the requirement that $\mu^\pi(s) \leq C\mu^b(s)$. 
In particular, we note from Theorem \ref{thm:main1} that the loss function $l(\theta)$ is only a good indicator of the accuracy of  $V^\pi$ if $C$ is of a reasonable size.

\section{Policy iteration with approximated policy evaluation} \label{sec:comp_iteration}
Policy iteration is  a standard way to handle the policy learning task. It can also be implemented in the offline setting. Policy evaluation is an important subroutine in policy iteration. Given our results in Theorem \ref{thm:main3}, we next discuss how our estimates can be integrated with policy iteration. Since the convergence rate of policy iteration is only studied for the discounted case, i.e., $\gamma<1$,  (see \cite{bertsekas1995dynamic,bertsekas2011approximate}), we restrict our discussion to this setting as well.

Policy iteration iteratively updates the policy according to
\[
\tilde{\pi}_{k+1}=\arg\max_{\pi\in \Pi} \E^{\pi} r(s,a) +\gamma \E^{\pi} \sum_{s'} V^{\tilde{\pi}_k}(s') P^{\pi}(s,s'),
\]
where $\Pi$ denote the space of admissible policies. We assume the optimal policy $\pi^*\in \Pi$. 
In what follows, we denote quantities related to the optimal policy with a superscript $*$. For example, the optimal value function is $V^*$, the 
stationary distribution of the underlying Markov chain under the optimal policy is $\mu^*$.

When $V^{\pi_k}$ can be evaluated exactly and $\gamma<1$, it is well known that policy iteration has linear convergence. 
However, in practice, we only have access to an estimate of $V^{\pi_k}$, which we denote as $\Vhat^{\pi_k}$.
Define the one step policy iteration with approximated policy evaluation as
\begin{equation}
\label{eqn:policyiter}
\pi_{k+1}=\arg\max_{\pi\in \Pi} \E^{\pi} r(s,a) +\gamma \E^{\pi} \sum_{s'} \Vhat^{\pi_k}(s') P^{\pi}(s,s').
\end{equation}
We next develop a general error bound for iteration based on \eqref{eqn:policyiter}. 
Classic error bounds for policy iteration with approximated policy evaluation is based on the $l_\infty$ norm \cite{bertsekas2011approximate}, which is not compatible with Theorem \ref{thm:main3}. We use the $\mu^*$-$l_1$ norm instead. 

\begin{theorem} \label{th:approx_iteration}
Suppose the policy evaluation error at the $k$-th iteration satisfies
\begin{equation}
\label{eqn:cond1}
\E \sum_s \mu^{\pi_k}(s)|V^{\pi_k}(s)-\Vhat^{\pi_k}(s)|\leq \epsilon_k,
\end{equation}
where the expectation is taken over the randomness in policy evaluation. 
In addition, suppose $\mu^*(s)\leq C\mu^{\pi_{k+1}}(s)\leq C^2\mu^{\pi_k}(s)$ for some $C\in(0,\infty)$. 
Then,
\[
\E \sum_{s} \mu^*(s)(V^{*}(s)-V^{\pi_{K}}(s))\leq \gamma^K\E\sum_s\mu^*(s)(V^{\pi^*}(s)-V^{\pi_{0}}(s))+\frac{\sum_{k=1}^K\gamma^{K-k}C^2\epsilon_k}{1-\gamma}.
\]
\end{theorem}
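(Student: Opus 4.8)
The plan is to reduce the statement to a one-step contraction of the form $\delta_{k+1}\le \gamma\delta_k+\frac{C^2}{1-\gamma}\epsilon_k$ for the weighted loss $\delta_k:=\E\sum_s\mu^*(s)(V^*(s)-V^{\pi_k}(s))\ge 0$, and then unroll it. Write $T^\pi V=r^\pi+\gamma P^\pi V$ for the one-policy Bellman operator (so $V^\pi$ is its unique fixed point) and $TV=\max_{\pi\in\Pi}T^\pi V$ for the optimal operator, whose fixed point is $V^*=V^{\pi^*}$. The iteration \eqref{eqn:policyiter} makes $\pi_{k+1}$ greedy for $\Vhat^{\pi_k}$, i.e. $T^{\pi_{k+1}}\Vhat^{\pi_k}=T\Vhat^{\pi_k}\ge T^{\pi}\Vhat^{\pi_k}$ for all $\pi\in\Pi$, in particular for $\pi=\pi^*$.

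First I would derive a pointwise error-propagation inequality. Starting from $V^*-V^{\pi_{k+1}}=T^{\pi^*}V^*-T^{\pi_{k+1}}V^{\pi_{k+1}}$, inserting $\pm T^{\pi^*}\Vhat^{\pi_k}$ and $\pm T^{\pi_{k+1}}\Vhat^{\pi_k}$, discarding the nonpositive greedy gap $T^{\pi^*}\Vhat^{\pi_k}-T^{\pi_{k+1}}\Vhat^{\pi_k}\le 0$, and using the affine identity $T^\pi V-T^\pi W=\gamma P^\pi(V-W)$, gives
\[
V^*-V^{\pi_{k+1}}\le \gamma P^{\pi^*}(V^*-\Vhat^{\pi_k})+\gamma P^{\pi_{k+1}}(\Vhat^{\pi_k}-V^{\pi_{k+1}}).
\]
I would then integrate against $\mu^*$. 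On the first term the stationarity $\mu^*P^{\pi^*}=\mu^*$ collapses the kernel, and splitting $V^*-\Vhat^{\pi_k}=(V^*-V^{\pi_k})+\Delta_k$ with $\Delta_k:=V^{\pi_k}-\Vhat^{\pi_k}$ produces exactly $\gamma\delta_k$ plus an error $\gamma\langle\mu^*,\Delta_k\rangle$; this is the mechanism delivering the clean discount factor in the recursion. Since $\mu^*\le C^2\mu^{\pi_k}$, this error is at most $C^2\langle\mu^{\pi_k},|\Delta_k|\rangle$, whose expectation is bounded by $\epsilon_k$ via \eqref{eqn:cond1}.

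The crux is the second term, where $\mu^*$ is not stationary for $P^{\pi_{k+1}}$. Here I would invoke the resolvent identity $V^{\pi_{k+1}}-\Vhat^{\pi_k}=(I-\gamma P^{\pi_{k+1}})^{-1}(T^{\pi_{k+1}}\Vhat^{\pi_k}-\Vhat^{\pi_k})$ together with $T^{\pi_{k+1}}\Vhat^{\pi_k}\ge T^{\pi_k}\Vhat^{\pi_k}=\Vhat^{\pi_k}+(I-\gamma P^{\pi_k})\Delta_k$, which yields $\Vhat^{\pi_k}-V^{\pi_{k+1}}\le-(I-\gamma P^{\pi_{k+1}})^{-1}(I-\gamma P^{\pi_k})\Delta_k$. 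The resolvent $(I-\gamma P^{\pi_{k+1}})^{-1}=\sum_{j\ge0}\gamma^j(P^{\pi_{k+1}})^j$ is a nonnegative operator of total mass $(1-\gamma)^{-1}$, so acting on the left by $\mu^*P^{\pi_{k+1}}$ and using $\mu^*\le C\mu^{\pi_{k+1}}$ with the invariance $\mu^{\pi_{k+1}}P^{\pi_{k+1}}=\mu^{\pi_{k+1}}$ (so each pushforward $\mu^*(P^{\pi_{k+1}})^j$ is dominated by $C\mu^{\pi_{k+1}}$) bounds the resulting measure by $\tfrac{C}{1-\gamma}\mu^{\pi_{k+1}}$; this is where the factor $(1-\gamma)^{-1}$ enters. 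The second half of the chain, $\mu^{\pi_{k+1}}\le C\mu^{\pi_k}$, upgrades this to $\tfrac{C^2}{1-\gamma}\mu^{\pi_k}$, and the stray $P^{\pi_k}$ from the $(I-\gamma P^{\pi_k})\Delta_k$ factor is absorbed using $\mu^{\pi_k}P^{\pi_k}=\mu^{\pi_k}$. Thus every error contribution is bounded by a constant multiple of $\langle\mu^{\pi_k},|\Delta_k|\rangle$, and taking expectations and applying \eqref{eqn:cond1} gives the per-step error $\tfrac{C^2}{1-\gamma}\epsilon_k$.

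Collecting the estimates yields $\delta_{k+1}\le\gamma\delta_k+\frac{C^2}{1-\gamma}\epsilon_k$, and unrolling from $0$ to $K-1$ produces $\delta_K\le\gamma^K\delta_0+\frac{1}{1-\gamma}\sum_k\gamma^{K-k}C^2\epsilon_k$, i.e. the claim. The main obstacle I anticipate is precisely this second term: it is the only place the mismatch between $\mu^*$ and the kernel $P^{\pi_{k+1}}$ of the updated policy shows up, and handling it forces both the approximate-improvement/resolvent step (responsible for the $(1-\gamma)^{-1}$) and the full chain $\mu^*\le C\mu^{\pi_{k+1}}\le C^2\mu^{\pi_k}$ (responsible for $C^2$ and for routing the error back to the distribution $\mu^{\pi_k}$ in which \eqref{eqn:cond1} is stated); the bounded multipliers coming from the $I-\gamma P^{\pi_k}$ factors must be tracked carefully to recover the stated constant. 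A secondary point is that $\pi_{k+1}$, $\mu^{\pi_{k+1}}$ and $P^{\pi_{k+1}}$ all depend on the random estimate $\Vhat^{\pi_k}$, so the pointwise inequalities should be established realization-wise and expectations taken only at the end.
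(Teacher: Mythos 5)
Your proof is correct, but it takes a genuinely different route from the paper's. The paper argues by induction over mixed-policy finite-horizon value functions: it defines $V^{\sigma,\pi}_m$ (run the new policy $\sigma=\pi_{k+1}$ for $m$ steps, then the old policy $\pi=\pi_k$), proves a telescoping claim $V^{\sigma,\pi}_{m-1}\leq V^{\sigma,\pi}_m+\gamma^m\sum_{s'}D^\pi(s')(P^{\sigma,\pi}_{m-1,1}(s,s')+P^\sigma_m(s,s'))$ by induction on $m$, and passes to the limit to collect an explicit geometric-series error $\Delta^\pi(s)$ over multi-step kernels, each summand of which is then bounded via the chain $\mu^*\leq C\mu^{\pi_{k+1}}\leq C^2\mu^{\pi_k}$ and stationarity. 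You instead compress the whole rollout into the standard operator-theoretic error-propagation argument: the three-way splitting $V^*-V^{\pi_{k+1}}\leq \gamma P^{\pi^*}(V^*-\Vhat^{\pi_k})+\gamma P^{\pi_{k+1}}(\Vhat^{\pi_k}-V^{\pi_{k+1}})$ with the greedy gap discarded, and the resolvent identity $V^{\pi_{k+1}}-\Vhat^{\pi_k}=(I-\gamma P^{\pi_{k+1}})^{-1}(T^{\pi_{k+1}}\Vhat^{\pi_k}-\Vhat^{\pi_k})$ together with $T^{\pi_{k+1}}\Vhat^{\pi_k}\geq T^{\pi_k}\Vhat^{\pi_k}$; your expansion $(I-\gamma P^{\pi_{k+1}})^{-1}=\sum_j\gamma^j(P^{\pi_{k+1}})^j$ with pushforward domination is exactly the mechanism hidden in the paper's $\Delta^\pi$ sum, and both proofs use the same three ingredients (greediness of $\pi_{k+1}$ under $\Vhat^{\pi_k}$, stationarity of the three invariant measures, and the domination chain) at the same places. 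Your version is more modular and standard (in the spirit of approximate policy iteration analyses), while the paper's is more elementary and self-contained, avoiding operator inverses at the price of tracking multi-step kernels $P^{\sigma,\pi}_{m,1}$ by hand. One point worth noting: your one-step recursion yields a per-step error of $\tfrac{2\gamma}{1-\gamma}C^2\epsilon_k$, a factor $2\gamma$ larger than the $\tfrac{1}{1-\gamma}C^2\epsilon_k$ appearing in the theorem statement; this is not a defect of your argument relative to the paper, whose own proof produces the even looser per-step constant $\bigl(2\gamma+\tfrac{2\gamma}{1-\gamma}\bigr)C^2\epsilon_k$ before invoking Gronwall, so the stated bound holds in both cases only up to an absolute constant factor, and your constant is in fact slightly sharper.
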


Based on Theorem \ref{th:approx_iteration}, we can quantify the complexity of policy iteration with approximated policy evaluation via Algorithm \ref{alg:main}.
Given any accuracy level $\epsilon>0$, we consider the error requirement $\E \sum_{s} \mu^*(s)(V^{*}(s)-V^{\pi_{K}}(s))=\epsilon$.

\begin{corollary} \label{cor:complex}
Consider policy iteration \eqref{eqn:policyiter} with approximate policy evaluation subroutine Algorithm \ref{alg:main}.
Suppose Assumptions \ref{aspt:bound2} -- \ref{aspt:estimate2} holds. In addition, assume $\pi(a|s)\leq C\mu^b(s)b(a|s)$ and $\mu^{\pi}(s)\leq C\mu^b(s)$ for some $C>0$. Then, for any given $\epsilon>0$,
SGD-based updates achieve an $\epsilon$-accuracy 
with an overall sampling complexity
\[
O \left(\frac{\log(1/\epsilon)^2\log(1/(1-\gamma))}{(1-\gamma)^9\epsilon^4}\right).
\]
If $l$ is strongly convex in SGD-based updates or $\bar g$ is a c-contraction in TD(0), we can achieve an $\epsilon$-accuracy with an overall sampling complexity
\[
O\left(\frac{\log(1/\epsilon)^2\log(1/(1-\gamma))}{(1-\gamma)^5\epsilon^2}\right).
\]
\end{corollary}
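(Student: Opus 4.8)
The plan is to treat Corollary \ref{cor:complex} as a synthesis of the three preceding results: the policy-iteration error decomposition in Theorem \ref{th:approx_iteration}, the finite-time loss bounds in Theorem \ref{thm:main3}, and the loss-to-value translation in Theorem \ref{thm:main1}. Concretely, I would run $K$ rounds of the approximate policy iteration \eqref{eqn:policyiter}, and in each round call Algorithm \ref{alg:main} for $T$ steps so that the per-round evaluation error $\epsilon_k$ is driven below a common target $\epsilon'$. Theorem \ref{th:approx_iteration} then bounds the final suboptimality $\E\sum_s\mu^*(s)(V^*(s)-V^{\pi_K}(s))$ by a transient term $\gamma^K\cdot(\text{initial error})$ plus an accumulated term $\tfrac{1}{1-\gamma}\sum_{k}\gamma^{K-k}C^2\epsilon_k$. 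The strategy is to pick $K$ large enough to make the transient term at most $\epsilon/2$, and to pick $(\epsilon',T)$ so that the accumulated term is at most $\epsilon/2$; the total sample complexity is then $K\cdot T$.

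For the transient term, I would first bound the initial error $\E\sum_s\mu^*(s)(V^{\pi^*}(s)-V^{\pi_0}(s))$ by $O(1/(1-\gamma))$, using that $|r|\le \tilde M$ forces every value function into $[-\tilde M/(1-\gamma),\tilde M/(1-\gamma)]$. Requiring $\gamma^K\cdot O(1/(1-\gamma))\le \epsilon/2$ and using the elementary inequality $\log(1/\gamma)\ge 1-\gamma$ gives $K=O\!\left(\tfrac{\log(1/((1-\gamma)\epsilon))}{1-\gamma}\right)=O\!\left(\tfrac{\log(1/\epsilon)+\log(1/(1-\gamma))}{1-\gamma}\right)$. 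For the accumulated term, summing the geometric weights yields $\tfrac{1}{1-\gamma}\sum_k\gamma^{K-k}C^2\epsilon_k\le \tfrac{C^2\epsilon'}{(1-\gamma)^2}$, so setting this to $\epsilon/2$ forces the per-round target $\epsilon'=\Theta((1-\gamma)^2\epsilon)$.

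It remains to translate $\epsilon'$ into a number of SGD steps $T$. The error $\epsilon_k$ in Theorem \ref{th:approx_iteration} is a $\mu^{\pi_k}$-weighted $l_1$ distance between $V^{\pi_k}$ and $\Vhat^{\pi_k}=V_{\bar{\theta}_T}$; by Cauchy--Schwarz it is at most the square root of the corresponding $\mu^{\pi_k}$-weighted $l_2$ distance, and by Jensen's inequality I can pull the expectation inside the square root. Theorem \ref{thm:main1} (with $\gamma<1$) bounds this $l_2$ distance by $\tfrac{C}{(1-\gamma(1-\lambda))^2}(l(\bar{\theta}_T)-l(\theta^*))$, and since the spectral gap $\lambda$ of $P^{\pi_k}$ is treated as an $O(1)$ constant the prefactor $\tfrac{C}{(1-\gamma(1-\lambda))^2}$ is itself $O(1)$. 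Hence $\epsilon_k=O\big(\sqrt{\E\,[l(\bar{\theta}_T)-l(\theta^*)]}\big)$, so achieving $\epsilon_k\le\epsilon'$ amounts to driving the loss gap to $O((\epsilon')^2)$. Plugging in the rates of Theorem \ref{thm:main3}, the convex rate $\E[l(\bar{\theta}_T)-l(\theta^*)]=\tilde O(1/\sqrt T)$ requires $T=\tilde O((\epsilon')^{-4})=\tilde O((1-\gamma)^{-8}\epsilon^{-4})$, while the strongly-convex / contraction rate $\tilde O(1/T)$ requires only $T=\tilde O((\epsilon')^{-2})=\tilde O((1-\gamma)^{-4}\epsilon^{-2})$.

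Multiplying by $K=\tilde O((1-\gamma)^{-1})$ then gives the two claimed complexities $\tilde O((1-\gamma)^{-9}\epsilon^{-4})$ and $\tilde O((1-\gamma)^{-5}\epsilon^{-2})$, with the logarithmic factors collected into the stated $\log(1/\epsilon)^2\log(1/(1-\gamma))$. I expect the main obstacle to be the careful accounting of the $(1-\gamma)$ powers rather than any new analysis: one must track the three distinct sources of $\gamma$-dependence --- the $(1-\gamma)^{-2}$ loss inflation from error accumulation (which sets $\epsilon'$), the doubling of the $T$-exponent caused by the Cauchy--Schwarz passage from the $l_2$ loss bound to the $l_1$ value error, and the $(1-\gamma)^{-1}$ effective horizon coming from $K$ --- while simultaneously verifying that the spectral-gap prefactor in Theorem \ref{thm:main1} contributes no additional $\gamma$-dependence. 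A secondary point to check is that Assumptions \ref{aspt:bound2}--\ref{aspt:estimate2}, together with the distribution-shift and spectral-gap conditions, hold \emph{uniformly} over the sequence of policies $\pi_0,\dots,\pi_K$ produced by \eqref{eqn:policyiter}, so that a single $T$ suffices in every round.
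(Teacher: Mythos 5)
Your proposal is correct and follows essentially the same route as the paper's proof: the error decomposition of Theorem \ref{th:approx_iteration} with $\gamma^K=O(\epsilon)$ and per-round target $\epsilon_k=O\bigl((1-\gamma)^2\epsilon\bigr)$, translation of the loss rates of Theorem \ref{thm:main3} into the $\mu^{\pi_k}$-weighted $l_1$ value error via Theorem \ref{thm:main1} (giving $\epsilon_k^2=\tilde O(1/\sqrt{T})$ in the convex case and $\tilde O(1/T)$ in the strongly convex/contraction case), and total complexity $TK$. If anything, you are more explicit than the paper on steps it leaves implicit --- the Cauchy--Schwarz/Jensen passage from the $l_2$ bound to the $l_1$ error, the $O(1/(1-\gamma))$ bound on the initial suboptimality, the $\gamma$-independence of the spectral-gap prefactor since $1-\gamma(1-\lambda)\geq\lambda$, and the requirement that the distribution-shift and spectral-gap conditions hold uniformly along $\pi_0,\dots,\pi_K$.
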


\section{Conclusion}
In this paper, we demonstrate how to apply stochastic optimization techniques to develop and analyze
offline RL algorithms. We derive finite-time performance bounds for SGD-based updates and TD(0) for offline policy evaluation with time-dependent data.
These bounds are independent of the discount factor and can be applied to both discounted reward and long-run average reward formulations.
Our development also extends convergence results for SGD and stochastic contraction with time-dependent data.


\bibliographystyle{plain}
\bibliography{RL_ref}

\begin{appendix}
\section{Tabular parameterization}
For a finite-state MDP, we assume the states are indexed as $s_1,\ldots, s_{|\mathcal{S}|}$. The feature vectors are one-hot vectors in $\mathbb{R}^{|\mathcal{S}|+1}$, i.e, 
\[\fea(s_i)=e_i \mbox{ and } \rfea=e_{|\mathcal{S}|+1}1_{\gamma=1}.\] 
Given $\theta\in \R^{|\mathcal{S}|+1}$, the value function and long-run average rewards are given by 
$V^\pi_{\theta}(s_i)=\theta_{i}$, and $\bar{r}_\theta=1_{\gamma=1}\theta_{|\mathcal{S}|+1}$ respectively. 
With this parameterization, the oracle conditional average feature is simply the transition probability under $\pi$, i.e.,
\[[\phi^\pi(s)]_i=\PP^\pi_s (s'=s_i)=\sum_{a} \pi(a|s) P(s_i|s,a),\] 
where $[x]_i$ denotes the $i$-th component of a vector $x$.

Algorithms \ref{alg:fea21}, \ref{alg:fea22} and \ref{alg:fea23} summarize the direct-SGD, TD-SGD and TD(0) updates for the tabular parameterization respectively. 
It can be seen that both TD-SGD and TD(0) feature sparse updates of the value functions.

\begin{algorithm}[htp]
 \KwIn{Data sequence $h_t=(s_t,a_t,r_t)$, sampling policy $b$, evaluation policy $\pi$, discount factor $\gamma\in(0,1]$, initial values $V_0^\pi$ (and $\rbar_0$)}
 \KwOut{Value function $\overline{V}_T(s)$ ( and average reward $\overline{R}_T$ )}
 \For{$t=1$ \KwTo $T$}{
Update $\phat_t(s_t)$ and calculate $\hat\delta_t(\theta_t, s_t)=\sum_{r} r\phat_t(r|s_t)-\rbar_t+\gamma \sum_{s'} \phat_t(s'|s_t) V_t(s')-V_t(s_t)$\\
\uIf{$\gamma=1$}{
Update $\rbar_{t+1}=\rbar_t+\eta_t \delta_t$\;}
\Else{$\rbar_{t+1}=0$}
Update $V_{t+1}(s_t)=V_t(s_t)+\eta_t \hat\delta_t(\theta_t, s_t)(1-\gamma \phat_t(s_t|s_t))$\\
	\For{$s'\neq s_t$}{
Update $V_{t+1}(s')=V_t(s')-\gamma \eta_t \hat\delta_t(\theta_t, s_t)\phat_t(s'|s_t)$\;
}
Apply $\bfP_\calC$ to $V_{t+1}(s)$'s and $\rbar_{t+1}$. 
}
Return $\overline{V}_T(s)=\frac{1}{T+1}\sum_{t=0}^T V_t(s)$ and $\overline{R}_T=\frac{1}{T+1}\sum_{t=0}^T \rbar_t$.  
\caption{Tabular policy evaluation with direct-SGD}
\label{alg:fea21}
\end{algorithm}

\begin{algorithm}[htp]
 \KwIn{Data sequence $h_t=(s_t,a_t,r_t)$, sampling policy $b$, evaluation policy $\pi$, discount factor $\gamma\in(0,1]$, initial values $V_0^\pi$ (and $\rbar_0$)}
 \KwOut{Value function $\overline{V}_T(s)$ ( and average reward $\overline{R}_T$ )}
 \For{$t=1$ \KwTo $T$}{
Update $\phat_t(s_t)$ and calculate $\hat\delta_t(\theta_t, s_t)=\sum_{r} r\phat_t(r|s_t)-\rbar_t+\gamma \sum_{s'} \phat_t(s'|s_t) V_t(s')-V_t(s_t)$\\
\uIf{$\gamma=1$}{
Update $\rbar_{t+1}=\rbar_t+\eta_t \hat{\delta}_t$\;}
\Else{$\rbar_{t+1}=0$}
Update $V_{t+1}(s_t)=V_t(s_t)+\eta_t \hat\delta_t(\theta_t, s_t)(1-\gamma \phat_t(s_t|s_t))$\\
Update $V_{t+1}(s_{t+1})=V_t(s_{t+1})-\gamma \eta_t \hat\delta_t(\theta_t, s_t)$\\
Apply $\bfP_\calC$ to $V_{t+1}(s)$'s and $\rbar_{t+1}$. 
}
Return $\overline{V}_T(s)=\frac{1}{T}\sum_{t=1}^T V_t(s)$ and $\overline{R}_T=\frac{1}{T}\sum_{t=1}^T \rbar_t$.  
\caption{Tabular policy evaluation via TD-SGD}
\label{alg:fea22}
\end{algorithm}

\begin{algorithm}[htp]
 \KwIn{Data sequence $h_t=(s_t,a_t,r_t)$, sampling policy $b$, evaluation policy $\pi$, discount factor $\gamma\in(0,1]$, initial values $V_0^\pi$ (and $\rbar_0$)}
 \KwOut{Value function $\overline{V}_T(s)$ ( and average reward $\overline{R}_T$ )}
 \For{$t=1$ \KwTo $T$}{
Update $\hat{\mu}^b(s_t)$\\
Compute $\hat{\delta}_t=-\frac{1}{\hat{\mu}^b(s_t)}\frac{\pi(a_t|s_t)}{b(a_t|s_t)}(r_t-\rbar_{t}+\gamma V_t (s_{t+1})-V_t(s))$ \\ 
\uIf{$\gamma=1$}{
Update $\rbar_{t+1}=\rbar_t-\eta_t \hat{\delta}_t$\;}
\Else{$\rbar_{t+1}=0$}
Update $V_{t+1}(s_t)=V_t(s_t)-\eta_t \hat\delta_t$\\
Apply $\bfP_\calC$ to $V_{t+1}(s)$'s and $\rbar_{t+1}$. \\
}
Return $\overline{V}_T(s)=\frac{1}{T}\sum_{t=1}^T V_t(s)$ and $\overline{R}_T=\frac{1}{T}\sum_{t=1}^T \rbar_t$.  
\caption{Tabular policy evaluation via TD(0)}
\label{alg:fea23}
\end{algorithm}


\section{Poof of the auxiliary results}
\subsection{Proof of Proposition \ref{prop:optimality}}
\begin{proof}
We note that 
\[
l(\theta)=\sum_{s}\mu^b(s)L(\theta,s).
\]
$L$ is quadratic in $\theta$ and
\begin{align*}
L(\theta,s)=\frac{1}{2}(\xi^{\pi}(s)+(-\rfea+\gamma\phi^{\pi}(s)-\fea(s))^T \theta)^2\geq 0.
\end{align*}
When $V^\pi_{\theta^*}$ solves the Bellman equation, the lower bound $0$ is achieved.

Meanwhile, if $\theta^*$ is a minimizer of $l(\theta)$, 
\[
\xi^{\pi}(s)+(-\rfea+\gamma\phi^{\pi}(s)-\fea(s))^T \theta=0,
\]
which implies that $V^\pi_{\theta^*}$ solves the Bellman equation
\end{proof}

\subsection{Poof of Lemma \ref{lem:checksutton}}
\begin{proof}
Note that 
\begin{align*}
&\sum_{s,a,s'}\mu^b(s)b(a|s)p(s'|a,s)\frac{\pi(a|s)}{b(a|s)}(\xi^\pi(s)+(-\rfea+\gamma \phi^\pi(s)-\fea(s))^T \theta) (-\rfea+\gamma \fea(s')-\fea(s))\\
=&\sum_{s}\mu^b(s)( \xi^\pi(s)+(-\rfea+\gamma \phi^\pi(s)-\fea(s))^T \theta)\left(-\rfea+\gamma \sum_{a,s'}\pi(a|s)p(s'|s,a) \fea(s')-\fea(s)\right)\\
=&\sum_{s}\mu^b(s)( \xi^\pi(s)+(-\rfea+\gamma \phi^\pi(s)-\fea(s))^T \theta)(-\rfea+\gamma  \phi^{\pi}(s)-\fea(s))\\
=&\sum_{s}\mu^b(s)\nabla L(\theta,s)=\nabla l(\theta).
\end{align*}
\end{proof}

\subsection{Poof of Lemma \ref{lem:gen}}
\begin{proof}
For SGD-based updates, the result is based on the definition of strong convexity. In particular, there exists some $\tilde \theta$ between $\theta$ and $\theta^*$, such that
\[
\langle \theta-\theta^*, \bar g(\theta)-\bar g(\theta^*)\rangle
= (\theta-\theta^*)^T\nabla^2 l(\tilde \theta)   (\theta-\theta^*).
\]
In addition, note that 
\[\nabla^2 l(\theta)=\sum_s \mu^b(s)(-\rfea+\gamma \phi^\pi(s)-\fea(s))(-\rfea+\gamma \phi^\pi(s)-\fea(s))^T,\]
which is independent of $\theta$. Thus,
\[
(\theta-\theta^*)^T\nabla^2 l(\tilde \theta)   (\theta-\theta^*)\geq c\|\theta-\theta^*\|^2.
\]

For TD(0),
\begin{align*}
\bar{g}(\theta)&=\sum_{s,a,r,s'} \mu^b(s)b(s|a)p(s'|s,a)p(r|s,a) g(\theta, z)\\
&=-\sum_s (\xi^\pi(s)+(-\zeta+\gamma \phi^\pi(s)-\fea(s))^T\theta)(\fea(s)+\zeta).
\end{align*}
It is easy to check that $\bar{g}(\theta^*)=0$ since $V_{\theta^*}$ solves the Bellman equation. 
Next,
\begin{align*}
\langle\theta-\theta^*,\bar{g}(\theta)\rangle&=\langle\theta-\theta^*,\bar{g}(\theta)-\bar{g}(\theta^*)\rangle\\
&=-\sum_s (\gamma \phi^\pi(s)-\fea(s)-\zeta)^T (\theta-\theta^*) (\fea(s)+\zeta)^T(\theta-\theta^*)\\
&=-(\theta-\theta^*)^T\left(\sum_s (\fea(s)+\zeta)(\gamma \phi^\pi(s)-\fea(s)-\zeta)^T\right) (\theta-\theta^*) \\
&=(\theta-\theta^*)^T D(\theta-\theta^*)\\ 
&=\frac{1}{2}((\theta-\theta^*)^T D(\theta-\theta^*)+(\theta-\theta^*)^T D^T(\theta-\theta^*))\\
&\geq c\|\theta-\theta^*\|^2.
\end{align*}
\end{proof} 

\subsection{Proof of Proposition \ref{lm:strongconvex}}
\begin{proof}
When $\gamma<1$, $\zeta=0$.
Under the tabular parameterization, 
\begin{align*}
\nabla^2 l=( I-\gamma P^{\pi})^T\left(\sum_{i}\mu^b(s_i)e_ie_i^T\right)( I-\gamma P^{\pi}).
\end{align*}
Since $\left(\sum_{i}\mu^b(s_i)e_ie_i^T\right)\succeq \min_s \mu^b(s)  I$ and $\|P^\pi\|\leq 1$, 
\[
\nabla^2 l\succeq \min_s \mu^b(s) ( I-\gamma  P^{\pi})^T( I-\gamma P^{\pi})
\succeq \min_s \mu^b(s)(1-\gamma )^2 I.
\]

Next, note that under the tabular parameterization, 
$D
=I-\gamma P^{\pi}$.
Then, for any vector $v$, $v^TDv=\|v\|^2-\gamma v^TP^\pi v$.
Since $\|P^{\pi}\|\leq 1$,  
\[v^TDv\geq (1-\gamma)\|v\|^2.\]
\end{proof}

\subsection{Proof of Proposition \ref{lm:strongconvex1}}
\begin{proof}
We first introduce a few notations. In  both cases, 
\[
\nabla^2 l=\sum_{s}\mu^b(s)( \phi^{\pi}(s)-\fea(s))(\phi^{\pi}(s)-\fea(s))^T+e_{|\calS|}e_{|\calS|}^T.
\] 

\textbf{Case 1.}  Recall that $\tilde{P}^\pi$  be an ${|\calS|\times |\calS|}$ matrix with the first $|\calS|-1$ columns the same as the first $|\calS|-1$ columns of $P^\pi$, while the last column is zero. Then $\phi^\pi(s_i)=(\tilde P^\pi)^Te_i$, $i=1\ldots,|\calS|$. 
\begin{align*}
\nabla^2 l(\theta)=&\sum_{s}\mu^b(s)( \phi^{\pi}(s)-\fea(s))( \phi^{\pi}(s)-\fea(s))^T+e_{|\calS|} e_{|\calS|}^T\\
=&\sum_{i=1}^{|\calS|-1}\mu^b(s_i)((\tilde P^\pi)^T e_i-e_i)(\tilde P^\pi e_i-e_i)^T+\mu^b(s_{|\calS|})(\tilde P^\pi)^T e_{|\calS|}e_{|\calS|}^T\tilde P^\pi+e_{|\calS|} e_{|\calS|}^T\\
=&(I-\tilde{P}^\pi)^T\left(\sum_{i=1}^{|\calS|-1}\mu^b(s_i)e_i e_i^T\right) (I-\tilde{P}^\pi)+\mu^b(s_{|\calS|})(\tilde P^\pi)^T e_{|\calS|}e_{|\calS|}^T\tilde P^\pi+e_{|\calS|} e_{|\calS|}^T.
\end{align*}

We next show that  $\|\tilde{P}^\pi\|<1$. 
Suppose $\|\tilde{P}^\pi\|=p$ and $v$ is the largest left eigenvector of $\tilde{P}^\pi$ i.e. $(\tilde{P}^\pi)^T v=p v$.
Since $\tilde{P}^\pi $ has all entries being non-negative, by Perron-Frobenius theorem, all entries of $v$ are non-negative, and so are $(\tilde{P}^\pi)^T v$ and $(P^\pi)^T v$.
Then, $(P^{\pi})^T v$ is larger than or equal to $(\tilde{P}^\pi)^T v$ component wise. This implies that $\|(\tilde{P}^\pi)^T v\|\leq \|(P^\pi)^T v\|$.
Since $\|P^\pi\|\leq 1$, $p\leq 1$.
Next, suppose $p=1$. Then, $v=(P^\pi)^T v$. Since $P^\pi$ is ergodic,  $v$ is a multiple of $\mu^\pi$.
Meanwhile, because $v=(\tilde{P}^\pi)^T v$, $v_{|\calS|}=0$. This contradicts that $\mu^\pi(s_{|\calS|})>0$. Thus, $p<1$.
This further implies that $v^T (\nabla^2 l) v \geq \min_s\mu^b(s)(1-p)^2\|v\|^2$.

For $D$, note that
\[
D=\sum_{i,j\neq |\calS|} \left(e_i e_i^T- P^\pi(i,j) e_i e_j^T+e_{|\calS|}e_{|\calS|}^T\right).
\]
For any vector $v$, let $\tilde{v}$ be the same as $v$ except that the last component is replaced by $0$. Then, 
\[
v^TDv= \|v\|^2- \tilde{v}^T \tilde{P}^\pi \tilde{v}.
\]
Since $\|\tilde{P}^\pi \|=p<1$, we have $v^TD v \geq (1-p)\|v\|^2$. 
%
%
%

\textbf{Case 2.} 
Let $\tilde U=(U,{\bf 0})\in \R^{|\calS|\times |\calS|}$. Note that $\fea(s_i)=\tilde U^T e_i:=\tilde u_i$. 
Thus,
\begin{align*}
\nabla^2 l=&\sum_{s}\mu^b(s)(\phi^{\pi}(s)-\fea(s))(\phi^{\pi}(s)-\fea(s))^T+e_{|\calS|}e_{|\calS|}^T\\
=&\sum_{i}\mu^b(s_i)\left(\sum_{j}P^{\pi}(i,j)\tilde u_j- \tilde u_i\right)\left(\sum_{j} P^{\pi}(i,j)\tilde u_j-\tilde u_i\right)^T+e_{|\calS|}e_{|\calS|}^T\\
=&\sum_{i}\mu^b(s_i)(\tilde{U}^T(P^{\pi})^Te_i-\tilde U^T  e_i)(\tilde U^T (P^{\pi})^Te_i-\tilde U^T e_i)^T+e_{|\calS|}e_{|\calS|}^T\\
=&\tilde{U}^T(I-P^\pi)^T\left(\sum_{i}\mu^b(s_i) e_i e_i^T \right)(I-P^\pi)\tilde{U}+e_{|\calS|}e_{|\calS|}^T.
\end{align*}
Since $P^{\pi}$ is the transition matrix and it has a spectral gap $\lambda$, the largest eigenvalue of $P^{\pi}$ is $1$, which is simple due to ergodicity.
Let $\rho$ be the eigenvalue of $P^{\pi}$ with the second largest norm, which has to be less than 1. We next show that $|\rho|\leq1-\lambda$.
To see this, first note that
$P^{\pi}{\bf 1}={\bf 1}$. Next, let $v\bot {\bf 1}$ be an eigenvector of $P^\pi$ with $P^\pi v=\rho v$.  Note that 
\[
(P^\pi)^k v = \rho^k v \mbox{ and } (\mu^\pi)^T(P^\pi)^k v=(\mu^\pi)^T v=0.
\]
Then,
\begin{align*}
2\log |\rho|&=\lim_{k\to \infty}\frac1k \log \|\rho^k v\|^2\\
&=\lim_{k\to \infty}\frac1k \log \|(P^\pi)^k v\|^2\\
&=\lim_{k\to \infty}\frac1k \log \sum_{s}|(P^\pi)^k v(s)-(\mu^\pi)^T (P^\pi)^k v|^2\\
&\leq \lim_{k\to \infty}\frac1k \log \sum_{s} \frac{\mu^\pi(s)}{\min_s \mu^\pi(s)}|(P^\pi)^k v(s)-(\mu^\pi)^T (P^\pi)^k v|^2\\
&=\lim_{k\to \infty}\frac1k\log \frac{\var_{\mu^\pi}[(P^\pi)^k v]}{\min_s \mu^\pi(s)}\\
&\leq\lim_{k\to \infty}\frac1k\log \frac{(1-\lambda)^{2k}\var_{\mu^\pi}[v]}{\min_s \mu^\pi(s)}
=2\log(1-\lambda).
\end{align*}
Based on the above observation, for any $v\bot \vec{1}$, $\|(P^\pi-I)v\|\geq \lambda\|v\|$. 
Next, for any vector $q$, we decompose it into $q=\tilde q+q_{|\calS|} e_{|\calS|}$. Set $v=\tilde{U} q=\tilde U \tilde q$, which satisfies $v\bot \vec{1}$ and $\|v\|=\|\tilde{q}\|$. Then,
\begin{align*}
&q^T\sum_{s}\mu^b(s)( \pfea(s)-\fea(s))( \pfea(s)-\fea(s))^T q+q^T e_{|\calS|} e_{|\calS|}^T q\\ 
=&v^T(P^\pi-I)^T\left(\sum_{i}\mu^b(s_i) e_i e_i^T \right)(P^\pi-I)v+q_{|\calS|}^2\\
\geq& \min_{s_i} \mu^b(s_i)\|(P^\pi-I)v\|^2+q_{|\calS|}^2\\
\geq&\min_{s_i} \mu^b(s_i) \lambda^2\|v\|^2+q_{|\calS|}^2\geq \min_{s_i} \mu^b(s_i) \lambda^2\|q\|^2. 
\end{align*}

For $D$, note that
\begin{align*}
D&=\sum_{i,j\neq |\calS|} u_i u_i^T- P^\pi(i,j)u_i u_j^T+e_{|\calS|}e_{|\calS|}^T\\
&=\sum_{i,j\neq |\calS|} \tilde{U}^Te_i e_i^T\tilde U- P^\pi(i,j)\tilde{U}^Te_i e_j^T \tilde{U} +e_{|\calS|}e_{|\calS|}^T\\
&=\sum_{i,j} \tilde{U}^Te_i e_i^T\tilde U- P^\pi(i,j)\tilde{U}^Te_i e_j^T \tilde{U} +e_{|\calS|}e_{|\calS|}^T\\
&=\tilde{U}^T \tilde U-\tilde U^T P^\pi \tilde U+e_{|\calS|}e_{|\calS|}^T
=I-\tilde U^T P^\pi \tilde U.
\end{align*}
For any vector $q=\tilde q +q_{|\calS|} e_{|\calS|}$, $v=\tilde{U}q$ satisfies $v\bot \vec{1}$ and $\|v\|=\|\tilde{q}\|$. Then,
\[
q^TDq\geq  \|q\|^2-(1-\lambda)\|\tilde{q}\|^2\geq \lambda \|q\|^2.
\]

\end{proof}

\section{Proof of Theorem \ref{thm:main}}

Recall that $l(\theta)$ is the loss function that we try to minimize. 
$g(\theta,z)$ is an oracle stochastic gradient, i.e., $\sum_{z} g(\theta, z)\mu(z):=\bar g(\theta)=\nabla l(\theta)$.
$\ghat_t(\theta,z)$ is an approximate stochastic gradient with $\|\ghat_t(\theta,z)-g(\theta,z)\|\leq e_t$.
Define $\eta_{m:n}=\sum_{t=m}^{n-1}\eta_t$ and $\eta_{m:m}=0$.

Before we prove the main results, we first present a few auxiliary lemmas.

\begin{lemma}\label{lm:mix}
Under Assumption \ref{aspt:ergodic}, take $\tau_\epsilon=\lceil |\log \epsilon|/|\log 2|\rceil \tau $. Then, 
\[
\|\Prob_{\mu}(z_{\tau_\epsilon}\in \,\cdot\,)-\mu(\cdot)\|_{TV}\leq \epsilon,
\]
and
\[
\left|\E_t \langle \bar g(\theta_t)-g(\theta_t,z_{t+\tau_{\epsilon}}),\theta_t-\theta^*\rangle\right| \leq 2C_0C_1\epsilon.
\]
\end{lemma}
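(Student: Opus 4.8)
The plan is to handle the two displayed inequalities in sequence: the first is a pure mixing estimate, and the second will follow from it by a conditioning argument. The first inequality asserts that $\tau_\epsilon$ steps suffice to bring the chain to within $\epsilon$ of $\mu$ in total variation, uniformly over the starting point — this uniform form is exactly what the application needs, where the relevant initial point is the value $z_t$ conditioned on the history. The second inequality will then be deduced by freezing the iterate $\theta_t$ and recognizing the inner product as the integral of a bounded test function against the signed measure $\mu - \mathrm{Law}(z_{t+\tau_\epsilon}\mid\mathcal{F}_t)$.

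For the first inequality I would invoke the standard submultiplicativity of the total-variation mixing profile. Writing $d(n)=\sup_{z}\|P^n(z,\cdot)-\mu\|_{TV}$ and $\bar d(n)=\sup_{z,z'}\|P^n(z,\cdot)-P^n(z',\cdot)\|_{TV}$, one has $d(n)\le\bar d(n)\le 2d(n)$ together with the submultiplicative bound $\bar d(m+n)\le\bar d(m)\bar d(n)$. Assumption \ref{aspt:ergodic} gives $d(\tau)\le 1/4$, hence $\bar d(\tau)\le 1/2$, and iterating over $k=\lceil|\log\epsilon|/|\log 2|\rceil$ blocks of length $\tau$ yields $d(\tau_\epsilon)=d(k\tau)\le\bar d(\tau)^k\le 2^{-k}\le\epsilon$, where the last step uses $k\ge\log_2(1/\epsilon)$. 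This gives the claimed bound.

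For the second inequality I would first take the conditional expectation inside. Since $\theta_t$ is $\mathcal{F}_t$-measurable and $\bar g(\theta)=\sum_z\mu(z)g(\theta,z)$, we have $\E_t\langle\bar g(\theta_t)-g(\theta_t,z_{t+\tau_\epsilon}),\theta_t-\theta^*\rangle=\langle\bar g(\theta_t)-\E_t g(\theta_t,z_{t+\tau_\epsilon}),\theta_t-\theta^*\rangle$. By the Markov property the conditional law of $z_{t+\tau_\epsilon}$ given $\mathcal{F}_t$ is $\nu_t:=P^{\tau_\epsilon}(z_t,\cdot)$, so this difference equals $\int h(z)\,(\mu-\nu_t)(dz)$ with the frozen test function $h(z)=\langle g(\theta_t,z),\theta_t-\theta^*\rangle$. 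Assumption \ref{aspt:bound} controls this test function through $\|g\|_\infty\le C_0$ and $\theta_t,\theta^*\in\mathcal{C}\subset\{\|\theta\|\le C_1\}$, and the elementary functional bound $|\int h\,d\mu-\int h\,d\nu_t|\le\mathrm{osc}(h)\,\|\mu-\nu_t\|_{TV}$, combined with the first part ($\|\mu-\nu_t\|_{TV}=\|P^{\tau_\epsilon}(z_t,\cdot)-\mu\|_{TV}\le\epsilon$), delivers the factor $2C_0C_1\,\epsilon$.

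The only genuinely delicate point — the main obstacle — is the conditioning step in the second part: one must verify that $\theta_t$ is $\mathcal{F}_t$-measurable so that it can be pulled out of $\E_t$ and treated as a constant inside $h$, and that the Markov property correctly identifies the conditional law of $z_{t+\tau_\epsilon}$ as $P^{\tau_\epsilon}(z_t,\cdot)$ rather than the marginal $\mu$. Everything else is textbook mixing-time submultiplicativity and the functional form of total-variation distance; I would, however, double-check the total-variation convention (the placement of the factor $2$ in $|\int h\,(d\mu-d\nu)|\le\mathrm{osc}(h)\,\|\mu-\nu\|_{TV}$ versus an $\ell_1$ normalization) to land precisely on the stated constant $2C_0C_1$.
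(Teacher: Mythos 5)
Your proposal is correct and follows essentially the same route as the paper: part one is the geometric decay of total-variation distance over blocks of length $\tau$ (the paper reproves the submultiplicativity you cite by an explicit induction on centered test functions $f_k(z)=\E_z f_0(z_{k\tau})$ with $|f_k|\le 2^{-k}$), and part two is exactly the paper's conditioning argument --- freeze the $\mathcal{F}_t$-measurable $\theta_t$, identify the conditional law of $z_{t+\tau_\epsilon}$ as $P^{\tau_\epsilon}(z_t,\cdot)$ via the Markov property, and bound the resulting integral by Cauchy--Schwarz with $\|g\|_\infty\le C_0$ and $\|\theta_t-\theta^*\|\le 2C_1$. The factor-of-two convention issue you flag is real but harmless: the paper's part-one argument actually establishes the stronger, $\ell_1$-normalized statement $\sup_{|f|\le 1}|\E_z f(z_{\tau_\epsilon})-\mu f|\le\epsilon$, and it is this form (not the half-$\ell_1$ total-variation bound) that its part two invokes to land on the constant $2C_0C_1\epsilon$.
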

\begin{proof}
For any function $f$ with $|f|\leq 1$,
we define $f_0(z):=f(z)-\sum_{z'} \mu(z') f(z')$. Note that $\sum_z \mu(z)f_0(z)=0$ and $|f_0(z)|\leq 1$.
We also define $f_1(s):=\E_{s} f_0(z_\tau)$. Note that
\[
\sum_{z}\mu(z) f_1(z)=\sum_{z} \mu(z)\Prob_z(z_\tau=z')f_0(z')=\sum_{z'} \mu(z') f_0(z')=0
\]
and
\[
|f_1(z)|= \left|\sum_{z'}(\Prob_z(z_\tau=z')-\mu(z')) f(z')\right|\leq \frac12. 
\]
We can repeat the above procedure and define a sequence as 
 $f_k(z):=\E_z[f_0(z_{k\tau})]$. Next, we show that $|f_k|\leq \frac{1}{2^k}$. This is true when $k=0$. For $k\geq 1$, suppose $|f_{k-1}(z)|\leq \frac{1}{2^{k-1}}$ for any $z\in \mathcal{Z}$. Then, we have
\[
\sum_{s}\mu(z) f_k(z)=\sum_{s} \mu(z)\Prob_z(s_{k\tau}=z') f_0(z')=\sum_{z'} \mu(z') f_0(z')=0,
\]
and
\[
|f_k(s)|=\left|\sum_{z'}(\Prob_z(z_\tau=z')-\mu(z')) f_{k-1}(z')\right| \leq \frac{1}{2}\sup_{z'}|f_{k-1}(z')|
\leq \frac{1}{2^k}.
\]
Let $n=\lceil |\log \epsilon|/|\log 2|\rceil$. Then,
\[
f_n(z)=\E_{z} f(z_{\tau_\epsilon}) -\sum_{z'} \mu(z') f(z')
\mbox{ and }
|f_n(z)|\leq\frac{1}{2^n}\leq \epsilon.
\] 
Since $f$ is any function with $|f|\leq 1$, we have $\|\Prob_{\mu}(z_{\tau_\epsilon}\in \,\cdot\,)-\mu(\cdot)\|_{TV}\leq \epsilon$.

Next 
\[\begin{split}
&\left|\E_t \langle \bar g(\theta_t)-g(\theta_t,z_{t+\tau_{\epsilon}}),\theta_t-\theta^*\rangle\right|\\
\leq&\left\|\sum_{z'}(\mu(z') - \Prob_{z_t}(z_{t+\tau_{\epsilon}}=z'))g(\theta_t,z')\right\|\|\theta_t-\theta^*\| \mbox{ by Cauchy-Schwarz inequality}\\
\leq&\epsilon C_0\|\theta_t-\theta^*\| \leq 2C_0C_1\epsilon.
\end{split}\] 
\end{proof}

\begin{lemma}
\label{lem:lip}
Under Assumptions \ref{aspt:bound}, the stochastic iterates from \eqref{eq:genSGD} satisfy
\[
\|\theta_{t+n}-\theta_t\|\leq C_0\eta_{t:t+n}.
\]
In addition, 
\[
|\langle g(\theta_t, z_{t+n})-g(\theta_{t+n},z_{t+n}), \theta_t-\theta^*\rangle| \leq 2GC_0C_1 \eta_{t:t+n}.
\]
\end{lemma}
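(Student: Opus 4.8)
The plan is to prove both bounds by exploiting two elementary properties of the projected iteration: the projection $\bfP_{\calC}$ onto the convex set $\calC$ is nonexpansive, and $\theta^*$ together with every iterate lies in $\calC$, so all Euclidean distances are uniformly controlled. I would establish the single-step displacement bound first, then telescope it, and finally feed the result into a Cauchy--Schwarz plus Lipschitz estimate for the inner product.

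For the first inequality, I would start from the update rule \eqref{eq:genSGD}, $\theta_{k+1}=\bfP_{\calC}(\theta_k-\eta_k\ghat_k(\theta_k,z_k))$, and observe that since $\theta_k\in\calC$ we have $\theta_k=\bfP_{\calC}(\theta_k)$. Because $\calC$ is convex, the projection is $1$-Lipschitz, so $\|\theta_{k+1}-\theta_k\|=\|\bfP_{\calC}(\theta_k-\eta_k\ghat_k)-\bfP_{\calC}(\theta_k)\|\le \eta_k\|\ghat_k\|\le C_0\eta_k$, using the uniform bound on $\ghat_k$ from Assumption \ref{aspt:bound}. Summing over $k=t,\dots,t+n-1$ and applying the triangle inequality then gives $\|\theta_{t+n}-\theta_t\|\le C_0\sum_{k=t}^{t+n-1}\eta_k=C_0\eta_{t:t+n}$ by the definition of $\eta_{t:t+n}$.

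For the second inequality, I would apply Cauchy--Schwarz to split the inner product as $|\langle g(\theta_t,z_{t+n})-g(\theta_{t+n},z_{t+n}),\theta_t-\theta^*\rangle|\le \|g(\theta_t,z_{t+n})-g(\theta_{t+n},z_{t+n})\|\,\|\theta_t-\theta^*\|$. The first factor is controlled using the $G$-Lipschitz continuity of $g(\cdot,z)$ in its first argument (Assumption \ref{aspt:bound}) together with the displacement bound just proved, yielding $G\|\theta_t-\theta_{t+n}\|\le GC_0\eta_{t:t+n}$. The second factor is bounded by $\|\theta_t\|+\|\theta^*\|\le 2C_1$, since both $\theta_t$ and $\theta^*$ lie in $\calC\subset\{\theta:\|\theta\|\le C_1\}$. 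Multiplying the two estimates gives $2GC_0C_1\eta_{t:t+n}$, as claimed.

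Neither step presents a genuine obstacle; the only points requiring care are that the norm bound in Assumption \ref{aspt:bound} must be read as (or be dominated by, up to constants absorbed in $C_0$) the Euclidean norm used throughout the iteration, so that $\|\ghat_k\|\le C_0$ is legitimately available, and that the nonexpansiveness of $\bfP_{\calC}$ genuinely relies on the convexity of $\calC$. Everything else is a routine telescoping-and-Cauchy--Schwarz argument.
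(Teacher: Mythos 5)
Your proof is correct and follows essentially the same route as the paper's: a per-step bound $\|\theta_{k+1}-\theta_k\|\le C_0\eta_k$ from the projection onto the convex set $\calC$, telescoped via the triangle inequality, then Cauchy--Schwarz combined with the $G$-Lipschitz property of $g$ and the bound $\|\theta_t-\theta^*\|\le 2C_1$. The only difference is presentational: you spell out the nonexpansiveness argument ($\theta_k=\bfP_\calC(\theta_k)$ and $\bfP_\calC$ is $1$-Lipschitz) and flag the $\|\cdot\|_\infty$ versus Euclidean norm reading of Assumption \ref{aspt:bound}, both of which the paper leaves implicit.
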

\begin{proof}
By the convexity of $\calC$,
\[
\|\theta_{t+1}-\theta_t\|\leq \|\eta_t\ghat_t(\theta_t, z_t)\|\leq C_0\eta_t.
\]
Then,
\[
\|\theta_{t+n}-\theta_t\| \leq \sum_{k=t}^{t+n-1}\|\theta_{k+1}-\theta_k\| \leq C_0\eta_{t:t+n}.
\]
Next,
\[\begin{split}
&|\langle g(\theta_t, z_{t+n})-g(\theta_{t+n},z_{t+n}), \theta_t-\theta^*\rangle| \\
\leq& \| g(\theta_t, z_{t+n})-g(\theta_{t+n},z_{t+n})\|\|\theta_t-\theta^*\| \mbox{ by Cauchy-Schwarz inequality}\\
\leq& G\|\theta_{t+n}-\theta_t\| 2C_1
\leq 2GC_0C_1 \eta_{t:t+n}.
\end{split}\]
\end{proof}

\begin{proof}[Proof of Theorem \ref{thm:main}]
By the convexity of $\calC$ and the fact that $\theta^*\in \calC$, we have
\begin{align*}
\|\theta_{t+1}-\theta^*\|^2&\leq\|\theta_{t}-\theta^*-\eta_t \ghat_t(\theta_t, z_t) \|^2\\
&=\|\theta_{t}-\theta^*\|^2+2\eta_t \langle \ghat_t(\theta_t, z_t),\theta^*-\theta_t\rangle+\eta_t^2 \|\ghat_t(\theta_t, z_t)\|^2\\
&\leq \|\theta_{t}-\theta^*\|^2+2\eta_t \langle \ghat_t(\theta_t, z_t),\theta^*-\theta_t\rangle+C_0^2\eta_t^2.
\end{align*}
Therefore, 
\[
\langle \ghat_t(\theta_t, z_t),\theta^*-\theta_t\rangle\geq \frac{1}{2\eta_t}(\|\theta_{t+1}-\theta^*\|^2-\|\theta_{t}-\theta^*\|^2)-\frac12\eta_t C_0^2. 
\]
Next, by the convexity of $l$ and that $\bar g$ is a gradient of $l$ 
\begin{align*}
l(\theta^*)-l(\theta_t)&\geq \langle \bar g(\theta_t), \theta^*-\theta_t\rangle\\
&=\langle \ghat_t(\theta_t, z_t), \theta^*-\theta_t\rangle+\langle \bar g(\theta_t)-\ghat_t(\theta_t, z_t), \theta^*-\theta_t\rangle\\
&\geq  \frac{1}{2\eta_t}(\|\theta_{t+1}-\theta^*\|^2-\|\theta_{t}-\theta^*\|^2)-\frac12\eta_t C_0^2
+\langle \bar g(\theta_t)-\ghat_t(\theta_t, z_t), \theta^*-\theta_t\rangle.
\end{align*}
This leads to 
\begin{align*}
&\sum_{t=1}^T l(\theta_t)-l(\theta^*)\\
\leq& \sum_{t=1}^T\left(\frac{1}{2\eta_t}-\frac{1}{2\eta_{t-1}}\right)\|\theta_{t}^2-\theta^*\|^2+\frac12 C_0^2\eta_{1:(T+1)}
+\sum_{t=1}^T \langle \bar g(\theta_t)-\ghat_t(\theta_t, z_t), \theta_t-\theta^*\rangle\\
\leq& \frac{1}{\eta_T} C_1^2+\frac12 C_0^2\eta_{1:(T+1)} +\sum_{t=1}^T 
\langle \bar g(\theta_t)-\ghat_t(\theta_t, z_t), \theta_t-\theta^*\rangle. 
\end{align*}

We next establish an appropriate bound for $\langle \bar g(\theta_t)-\ghat_t(\theta_t, z_t), \theta_t-\theta^*\rangle$.
We first decompose
\begin{align*}
&\langle\bar g(\theta_t)-\ghat_t(\theta_t, z_t), \theta_t-\theta^*\rangle\\
=&\underbrace{\langle g(\theta_t,z_t)-\ghat_t(\theta_t, z_t), \theta_t-\theta^*\rangle}_{(a)}+
\underbrace{\langle g(\theta_{t+\tau_\epsilon},z_{t+\tau_\epsilon})-g(\theta_t,z_t), \theta_t-\theta^*\rangle}_{(b)}\\
&+\underbrace{\langle g(\theta_{t},z_{t+\tau_\epsilon})-g(\theta_{t+\tau_\epsilon},z_{t+\tau_\epsilon}), \theta_t-\theta^*\rangle}_{(c)}
+\underbrace{\langle \bar g(\theta_t)-g(\theta_{t},z_{t+\tau_\epsilon}), \theta_t-\theta^*\rangle}_{(d)}
\end{align*}
We bound each of term in the decomposition. For (a), we have
\[
|\langle g(\theta_t,z_t)-\ghat_t(\theta_t,z_t), \theta_t-\theta^*\rangle|
\leq\|g(\theta_t,z_t)-\ghat_t(\theta_t,z_t)\|\|\theta_t-\theta^*\|\leq 2C_1e_t.
\]
For (c), by Lemma \ref{lem:lip},
\[
|\langle g(\theta_{t},z_{t+\tau_\epsilon})-g(\theta_{t+\tau_\epsilon},z_{t+\tau_\epsilon}), \theta_t-\theta^*\rangle| \leq 2GC_0C_1 \eta_{t:t+\tau_\epsilon}
\leq 2GC_0C_1 \tau_\epsilon\eta_{t}.
\]
For (d), by Lemma \ref{lm:mix},
\[
\left|\E_t \langle \bar g(\theta_t)-g(\theta_t,z_{t+\tau_{\epsilon}}),\theta_t-\theta^*\rangle\right| \leq 2C_0C_1\epsilon.
\]
Lastly, for (b), taking the sum over $t$, we have
\begin{align*}
&\left|\sum_{t=1}^T\langle g(\theta_{t+\tau_\epsilon},z_{t+\tau_\epsilon})-g(\theta_t,z_t), \theta_t-\theta^*\rangle\right|\\
\leq &\left|\sum_{t=\tau_\epsilon+1}^{T} \langle g(\theta_{t},z_t), \theta_{t-\tau_\epsilon}-\theta_t\rangle\right|
+\left|\sum_{t=1}^{\tau_\epsilon} \langle g(\theta_{t},z_t), \theta^*-\theta_t\rangle\right|+
\left|\sum_{t=T+1}^{T+\tau_\epsilon} \langle g(\theta_{t},z_t), \theta_{t-\tau_\epsilon}-\theta^*\rangle\right|\\
\leq& C_0\sum_{t=\tau_\epsilon+1}^{T}C_0\tau_\epsilon\eta_{t-\tau_\epsilon}+2C_0C_1\tau_\epsilon+2C_0C_1\tau_\epsilon\\
\leq& C_0^2\tau_\epsilon\eta_{1:(T+1)}+4C_0C_1\tau_\epsilon.
\end{align*}
Thus,
\[\begin{split}
&\left|\E\sum_{t=1}^T \langle \bar g(\theta_t)-\ghat_t(\theta_t, z_t), \theta_t-\theta^*\rangle\right|\\
\leq& 2C_1\sum_{t=1}^T\E e_t + C_0^2\tau_\epsilon\eta_{1:(T+1)}+4C_0C_1\tau_\epsilon
+2GC_0C_1\tau_{\epsilon}\eta_{1:(T+1)}+2C_0C_1T\epsilon
\end{split}\]

Putting the bounds for (a) -- (d) together, we have
\[\begin{split}
&\E \sum_{t=1}^T \left(l(\theta_t)-l(\theta^*)\right)
=\sum_{t=1}^T \left(\E_tl(\theta_t)-l(\theta^*)\right)\\
\leq& \frac{1}{\eta_T} C_1^2+\eta_{1:(T+1)}\left(\frac12C_0^2+C_0^2\tau_\epsilon+2GC_0C_1\tau_{\epsilon}\right)
+4C_0C_1\tau_{\epsilon}+2C_0C_1T\epsilon+2C_1\sum_{t=1}^T\E e_{t}
\end{split}\]
For $\theta_T=\frac{1}{T}\sum_{t=1}^{T}\theta_t$, by the convexity of $l$, we have 
\[\begin{split}
&\E l(\bar{\theta}_T)-l(\theta^*)\leq \frac 1T \E \sum^T_{t=1}[l(\theta_t)-l(\theta^*)]\\
\leq& \frac{1}{T\eta_T }C_1^2+\frac1{T}\eta_{1:(T+1)} \left(\frac12C_0^2+C_0^2\tau_\epsilon+2GC_0C_1\tau_{\epsilon}\right)
+4C_0C_1\frac{\tau_{\epsilon}}{T}+2C_0C_1\epsilon+\frac{2C_1}{T}\sum_{t=1}^T\E e_t. 
\end{split}\]
Set $\epsilon=1/T$. Then, $\tau_{\epsilon}=O(\tau\log T)$ and
\[
\E l(\bar{\theta}_T)-l(\theta^*) = O\left(\frac{1}{T}\left(\tau\log T\sum_{t=1}^T\eta_t+\frac{1}{\eta_T}+\sum_{t=1}^T\E e_t\right)\right).
\]
\end{proof}

\section{Proof of Theorem \ref{th:main2}} \label{sec:proof_main2}
We first present some auxiliary results about our choice of the step size in \eqref{eq:step}.
\begin{lemma}
For the step size defined in \eqref{eq:step}, we have
\begin{enumerate}
\item $\exp\left(-\frac{1}{2}c\eta_{t:T}\right)\eta_t=\exp\left(-\frac{1}{2}c\eta_{k:T}\right)\eta_{k}$ for all $k,t\leq T$
\item $\eta_t=\Theta(1/t)$
\item $\exp(-c\eta_{t:T})=\Theta(t/T)$.
\end{enumerate}
\end{lemma}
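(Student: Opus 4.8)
My plan is to treat the three claims in sequence, deriving each directly from the defining recursion $\eta_{t+1}=\exp(-\tfrac12 c\eta_t)\eta_t$ in \eqref{eq:step}. For the first claim I would show that the quantity $Q_t:=\exp(-\tfrac12 c\,\eta_{t:T})\eta_t$ is a discrete invariant, i.e.\ constant in $t$ for $t\le T$. Taking logarithms, it suffices to check that the first difference vanishes: since $\eta_{t:T}-\eta_{t+1:T}=\eta_t$ by the definition of $\eta_{m:n}$, and $\log\eta_{t+1}-\log\eta_t=-\tfrac12 c\eta_t$ reading off the recursion, one gets $\log Q_{t+1}-\log Q_t=\tfrac12 c\eta_t-\tfrac12 c\eta_t=0$. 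Hence $Q_t=Q_k$ for all $t,k\le T$, which is exactly claim~1. This step is essentially bookkeeping and I expect no difficulty.

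For the second claim I would first observe that $\eta_t$ is strictly decreasing (the multiplier $\exp(-\tfrac12 c\eta_t)<1$) and bounded below by $0$, hence convergent; passing to the limit in the recursion forces the limit to be $0$. To obtain the sharp rate I would analyze the reciprocals. The recursion gives $\tfrac{1}{\eta_{t+1}}-\tfrac{1}{\eta_t}=\tfrac{1}{\eta_t}\bigl(e^{c\eta_t/2}-1\bigr)$. Using the elementary two-sided bound $x\le e^x-1\le Kx$ valid for $x\in[0,1]$ (with $K=e-1<2$), together with the uniform bound $c\eta_t/2\le c\eta_1/2\le 1$ that follows from $\eta_1\le 2/c$ and monotonicity, I obtain $\tfrac12 c\le \tfrac{1}{\eta_{t+1}}-\tfrac{1}{\eta_t}\le \tfrac{K}{2}c$. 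Summing this telescoping inequality from $1$ to $t-1$ sandwiches $1/\eta_t$ between two expressions that are linear in $t$, so $1/\eta_t=\Theta(t)$, i.e.\ $\eta_t=\Theta(1/t)$.

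The third claim I would obtain as an immediate corollary of the first two. Specializing claim~1 to $k=T$ and using $\eta_{T:T}=0$ yields the clean identity $\exp(-\tfrac12 c\,\eta_{t:T})=\eta_T/\eta_t$, which converts the otherwise awkward partial sum $\eta_{t:T}$ into a ratio of step sizes. By claim~2, $\eta_T/\eta_t=\Theta(t/T)$, so this identity pins down $\exp(-\tfrac12 c\,\eta_{t:T})$, and hence $\exp(-c\,\eta_{t:T})$, up to absolute constants in terms of $t/T$, giving the stated estimate. I expect the only genuinely delicate point in the whole lemma to be the constant bookkeeping in claim~2: one must verify that the argument $c\eta_t/2$ stays in the regime $[0,1]$ where the linear sandwich for $e^x-1$ is available, which is precisely what the initialization $\eta_1\in[1/c,2/c]$ together with monotonicity guarantees. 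Everything else reduces to the telescoping identity of claim~1.
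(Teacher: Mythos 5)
Your treatment of claims 1 and 3 is essentially the paper's: the paper proves claim 1 by iterating the recursion to get $\eta_T=\exp\left(-\frac{1}{2}c\sum_{t=k}^{T-1}\eta_t\right)\eta_k$, which is your log-invariance computation written multiplicatively, and in both treatments claim 3 is the specialization $k=T$, $\eta_{T:T}=0$, giving a ratio of step sizes. For claim 2, however, you take a genuinely different route. The paper runs a two-sided induction on the explicit sandwich $\frac{1}{ct}\leq\eta_t\leq\frac{2}{ct}$, propagating the upper bound via the monotonicity of $f(x)=x\exp(-\tfrac12 cx)$ on $[0,2/c]$ together with $\exp(1/t)>1+1/t$, and the lower bound via $\exp(\tfrac{1}{2t})\leq 1+\tfrac1t$. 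You instead telescope the reciprocal increments $\frac{1}{\eta_{t+1}}-\frac{1}{\eta_t}=\frac{1}{\eta_t}\left(e^{c\eta_t/2}-1\right)\in\left[\tfrac{c}{2},\tfrac{(e-1)c}{2}\right]$, the two-sided linear bound on $e^x-1$ being legitimate since $c\eta_t/2\leq c\eta_1/2\leq 1$ by the initialization and monotonicity, as you correctly note. Both arguments are valid; the paper's induction buys the explicit constants $1$ and $2$, which are invoked verbatim in the proof of Theorem \ref{th:main2} (where $\eta_t$ is replaced by $\tfrac{2}{ct}$ and $\exp\left(-\tfrac12 c\eta_{(t+1):T}\right)$ by $\tfrac tT$), while your reciprocal telescoping is more robust---it needs only $e^x-1=\Theta(x)$ on a bounded range and would survive perturbations of the recursion---at the cost of looser constants.

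One caveat on claim 3, which you inherit from the paper rather than introduce: your (correct) identity $\exp\left(-\tfrac12 c\eta_{t:T}\right)=\eta_T/\eta_t$ yields $\exp(-c\eta_{t:T})=(\eta_T/\eta_t)^2=\Theta\left((t/T)^2\right)$, not $\Theta(t/T)$, so your final sentence does not actually deliver the claim as printed. The paper's own proof makes the same leap, asserting $\exp(-c\eta_{t:T})\eta_t=\eta_T$, which contradicts its own claim 1 by a factor of $2$ in the exponent; since the quantity used downstream in Theorem \ref{th:main2} is $\exp\left(-\tfrac12 c\eta_{(t+1):T}\right)=\Theta(t/T)$, the statement of claim 3 should read $\exp\left(-\tfrac12 c\eta_{t:T}\right)=\Theta(t/T)$, which is exactly what your derivation proves. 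You should flag this mismatch explicitly rather than assert the $c$-version follows ``up to absolute constants,'' since a square is not absorbed by constants.
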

\begin{proof}
For Claim 1, we note that for any $k\in \mathbb{N}$,
\[\begin{split}
\eta_T&=\exp\left(-\frac{1}{2}c\eta_{T-1}\right)\eta_{T-1}\\
&=\exp\left(-\frac{1}{2}c\sum_{t=k}^{T-1}\eta_t\right)\eta_k \mbox{ by iteration.}
\end{split}\]
For Claim 2, 
for the upper bound, we note that if $\eta_t\leq \frac{2}{ct}$, which is true when $t=1$,
\[\begin{split}
\eta_{t+1} &\leq \frac{2}{ct}\exp \left(-\frac{1}{t}\right) \mbox{ as $f(x)=x\exp(-\tfrac{1}{2}cx)$ is increasing on $[0,2/c]$}\\
&\leq \frac{2}{ct} \frac{t}{t+1} \mbox{ as $\exp(1/t)>1+1/t$}\\
&=\frac{2}{c(t+1)}.
\end{split}\]
For the lower bound, if $\eta_t\geq \frac{1}{ct}$, which is true when $t=1$,
\[\begin{split}
\eta_{t+1} &\geq \frac{1}{ct} \exp\left(-\frac{1}{2t}\right) \mbox{ as $f(x)=x\exp(-\tfrac{1}{2}cx)$ is increasing on $[0,2/c]$}\\
&\geq \frac{1}{ct} \frac{t}{t+1} \mbox{ as $\exp(\tfrac{1}{2t})\leq 1+\tfrac{1}{t}$ for $t\geq 1$}\\
&=\frac{1}{c(t+1)},
\end{split}\]


For Claim 3, since $\exp(-c\eta_{t:T})\eta_t=\eta_T$, 
$\exp(-c\eta_{t:T})=\eta_T/\eta_t$.
\end{proof}

\begin{proof}[Proof of Theorem \ref{th:main2}]
We first note that by the convexity of $\calC$,
\[\begin{split}
\|\theta_{t+1}-\theta^*\|^2 &\leq \|\theta_t - \eta_t\ghat(\theta_t,z_t)-\theta^*\|^2\\
&\leq \|\theta_t-\theta^*\|^2-2\langle \theta_t-\theta^*,\ghat(\theta_t,z_t)\rangle\eta_t+C_0^2\eta_t^2.
\end{split}\]

Next, we consider the following decomposition of $\ghat(\theta_t,z_t)$:
\begin{align*}
\ghat(\theta_t,s_t)=&\underbrace{[\ghat(\theta_t,z_t)-g(\theta_t,z_t)]}_{(a)}+\underbrace{[g(\theta_t,z_t)-g(\theta_{t+\tau_\epsilon},z_{t+\tau_\epsilon})]}_{(b)}+
\underbrace{[g(\theta_{t+\tau_\epsilon},z_{t+\tau_\epsilon})-g(\theta_{t},z_{t+\tau_\epsilon})]}_{(c)}\\
&+\underbrace{[g(\theta_{t},z_{t+\tau_\epsilon})-\bar g(\theta_t)]}_{(d)}+\underbrace{\bar g(\theta_t)}_{(e)}. 
\end{align*}
Next, we bound the inner product of each part with $\theta^*-\theta_t$ except part (b). (Part (b) will be treated separately)
For (e), since $\bar g$ is a c-contraction, 
\[
-\langle\bar{g}(\theta_t),\theta_t-\theta^*\rangle\leq -c\|\theta_t-\theta^*\|^2. 
\]
For (a),  
\[\begin{split}
-\langle \ghat(\theta_t,z_t)-g(\theta_t,z_t), \theta_t-\theta^*\rangle
&\leq \frac{1}{c}\|\ghat(\theta_t,z_t)-g(\theta_t,z_t)\|^2+\frac{1}{4}c \|\theta_t-\theta^*\|^2\\ 
&\leq\frac{1}{c}e_t^2+\frac{1}{4}c \|\theta_t-\theta^*\|^2.
\end{split}\]
For (c), by Lemma \ref{lem:lip} 
\[\|\theta_{t+\tau_\epsilon}-\theta_t\|\leq C_0\eta_{t:(t+\tau_\epsilon)}\leq C_0\tau_{\epsilon}\eta_t.\] 
Then, 
\[\begin{split}
&-\E_t\langle g(\theta_{t+\tau_\epsilon},z_{t+\tau_\epsilon})-g(\theta_{t},z_{t+\tau_\epsilon}),\theta_t-\theta^*\rangle\\
\leq& \frac1c\E_t\|g(\theta_{t+\tau_\epsilon},z_{t+\tau_\epsilon})-g(\theta_{t},z_{t+\tau_\epsilon})\|^2+\frac14c\|\theta_t-\theta^*\|^2\\
\leq& \frac1c G^2C_0^2\tau_\epsilon^2\eta_t^2 +\frac14c\|\theta_t-\theta^*\|^2,
\end{split}\]
For (d), by Lemma \ref{lm:mix},
\[\begin{split}
-\E_t\langle g(\theta_{t},z_{t+\tau_\epsilon})-\bar g(\theta_t),\theta_t-\theta^*\rangle
&\leq \frac1c\| \E_tg(\theta_{t},z_{t+\tau_\epsilon})-\nabla l(\theta_t)\|^2 +\frac14c\|\theta_t-\theta^*\|^2\\
&\leq \frac1c C_0^2\epsilon^2 +\frac14c\|\theta_t-\theta^*\|^2
\end{split}\]
Putting together the bounds for parts (a), (c), (d), (e), we have 
\[\begin{split}
\E_t\|\theta_{t+1}-\theta^*\|^2\leq& \left(1-\frac{1}{2}c\eta_t\right)\|\theta_{t}-\theta^*\|^2+\frac2c(\eta_tC_0^2 \epsilon^2
+G^2C_0^2\tau_\epsilon^2\eta_t^3+\eta_te_t^2)+C_0^2\eta_t^2\\
&-2\eta_t\E_t\langle g(\theta_t,z_t)-g(\theta_{t+\tau_\epsilon},z_{t+\tau_\epsilon}),\theta_t-\theta^*\rangle.
\end{split}\]
By iteration, we have
\begin{align*}
\E\|\theta_{T}-\theta^*\|^2\leq &\exp\left(-\frac{1}{2}c\eta_{0:T}\right)\|\theta_{0}-\theta^*\|^2\\
&+\frac2c \sum_{t=0}^{T-1}\exp\left(-\frac{1}{2}c\eta_{(t+1):T}\right)(\eta_t C_0^2\epsilon^2+\eta_t \E e_t^2+G^2C_0^2\tau_\epsilon^2\eta_t^3)\\
&+\sum_{t=0}^{T-1}\exp\left(-\frac{1}{2}c\eta_{(t+1):T}\right)C_0^2\eta_t^2\\
&\underbrace{-2\sum_{t=0}^{T-1} \exp\left(-\frac{1}{2}c\eta_{(t+1):T}\right)\eta_t \E_t\langle g(\theta_{t},z_{t})-g(\theta_{t+\tau_\epsilon},z_{t+\tau_\epsilon}),\theta_{t}-\theta^*\rangle}_{(f)}.
\end{align*}

Lastly, we develop a proper bound for (f). 
Note that the summation can be re-arranged as
\[\begin{split}
&-2\underbrace{\sum_{t=\tau_\epsilon}^{T-1}\left\langle \exp\left(-\frac{1}{2}c\eta_{(t+1):T}\right)\eta_t(\theta_t-\theta^*)- \exp\left(-\frac{1}{2}c\eta_{(t+1-\tau_\epsilon):T}\right)\eta_{t-\tau_\epsilon}(\theta_{t-\tau_\epsilon}-\theta^*),g(\theta_t,z_t)\right\rangle}_{(f1)}\\
&-2\underbrace{\sum_{t=0}^{\tau_\epsilon-1} \exp\left(-\frac{1}{2}c\eta_{(t+1):T}\right)\eta_t\langle g(\theta_{t},z_{t}),\theta_{t}-\theta^*\rangle}_{(f2)}\\
&+2\underbrace{\sum_{t=T-\tau_{\epsilon}}^{T-1}\exp\left(-\frac{1}{2}c\eta_{(t+1):T}\right)\eta_t\langle g(\theta_{t+\tau_{\epsilon}},z_{t+\tau_{\epsilon}}),\theta_{t}-\theta^*\rangle}_{(f3)}
\end{split}\]
Since
\[
\eta_t=\exp\left(-\frac{1}{2}c\eta_{(t+1-\tau_\epsilon):t}\right)\eta_{t-\tau_\epsilon},  
\]
for (f1), we have
\begin{align*}
&\left|-\left\langle \exp\left(-\frac{1}{2}c\eta_{(t+1):T}\right)\eta_t(\theta_t-\theta^*)- \exp\left(-\frac{1}{2}c\eta_{(t+1-\tau_\epsilon):T}\right)\eta_{t-\tau_\epsilon}(\theta_{t-\tau_\epsilon}-\theta^*),g(\theta_t,s_t)\right\rangle\right|\\
=&\left|-\exp\left(-\frac{1}{2}c\eta_{(t+1):T}\right)\eta_t\langle \theta_t-\theta_{t-\tau_\epsilon}, g(\theta_t,s_t)\rangle\right| \\
\leq& C_0^2\tau_\epsilon^2 \exp\left(-\frac{1}{2}c\eta_{(t+1):T}\right)\eta_t^2,
\end{align*}
where the last step follows from the fact that $\|g\|\leq C_0$ and 
\[\|\theta_t-\theta_{t-\tau_\epsilon}\|\leq  C_0\eta_{t-\tau_\epsilon:t}\leq  C_0\exp\left(\frac{1}{2}c\eta_{(t+1-\tau_\epsilon):t}\right)\eta_t \tau_\epsilon
\leq C_0\frac{t}{t-\tau_{\epsilon}+1}\eta_t \tau_\epsilon \leq  C_0\tau_{\epsilon}^2 \eta_t.\]
For (f2), since $\exp\left(-\frac{1}{2}c\eta_{t:T}\right)\eta_t=\eta_T$, we have
\[
\left|-\sum_{t=0}^{\tau_\epsilon-1} \exp\left(-\frac{1}{2}c\eta_{(t+1):T}\right)\eta_t\langle g(\theta_{t},z_{t}),\theta_{t}-\theta^*\rangle\right|
\leq 2C_0C_1\tau_{\epsilon}\eta_T
\]
Similarly, for (f3), we have
\[
\left|\sum_{t=T-\tau_{\epsilon}}^{T-1}\exp\left(-\frac{1}{2}c\eta_{(t+1):T}\right)\eta_t\langle g(\theta_{t+\tau_{\epsilon}},z_{t+\tau_{\epsilon}}),\theta_{t}-\theta^*\rangle\right|
\leq 2C_0C_1\tau_{\epsilon}\eta_T
\]
In summary,
\begin{align*}
&\E\|\theta_{T}-\theta^*\|^2\\
\leq &\exp\left(-\frac{1}{2}c\eta_{0:T}\right)\|\theta_{0}-\theta^*\|^2
+\frac2c \sum_{t=0}^{T-1}\exp\left(-\frac{1}{2}c\eta_{(t+1):T}\right)(\eta_tC_0^2 \epsilon^2+\eta_t \E e_t^2+G^2C_0^2\tau_\epsilon^2\eta_t^3)\\
&+\sum_{t=0}^{T-1}\exp\left(-\frac{1}{2}c\eta_{(t+1):T}\right)C_0^2\eta_t^2
+2\sum_{t=0}^{T-1}\exp\left(-\frac{1}{2}c\eta_{(t+1):T}\right)C_0^2\tau_\epsilon^2\eta_t^2
+8C_0C_1\tau_{\epsilon}\eta_T.
\end{align*}
Since $\exp\left(-\frac{1}{2}c\eta_{(t+1):T}\right)=\frac{t}{T}$, $\eta_t=\frac{2}{ct}$, we have
\begin{align*}
\E\|\theta_{T}-\theta^*\|^2\leq &
\frac1T\|\theta_{0}-\theta^*\|^2+\frac2c \sum_{t=1}^{T}\frac{t}{T}\left(\frac{2C_0}{ct} \epsilon^2+\frac{2}{ct} \E e_t^2+G^2C_0^2\frac{8}{c^3t^3}\tau_\epsilon^2\right)\\
&+\sum_{t=0}^{T-1}\frac{t}{T}C_0^2\frac{4}{c^2t^2}+2\sum_{t=0}^{T-1}\frac{t}{T}C_0^2\tau_{\epsilon}^2\frac{4}{c^2t^2}
+8C_0C_1\tau_{\epsilon}\frac{2}{cT}\\
=&\frac1T\|\theta_{0}-\theta^*\|^2
+\frac{1}{T}\sum_{t=1}^{T}\left(G^2C_0^2\frac{16}{c^4 t^2}\tau_{\epsilon}^2+C_0^2\frac{4}{c^2t}+C_0^2\tau_{\epsilon}^2\frac{8}{c^2t}\right)
+\frac{4C_0}{c^2}\epsilon^2\\
&+\frac{4}{c^2}\frac{1}{T}\sum_{t=1}^T \E e_t^2+\frac{1}{T}\frac{16C_0C_1\tau_{\epsilon}}{c}.
\end{align*}
Set $\epsilon=1/\sqrt{T}$. Then, $\tau_\epsilon=O(\tau\log T)$ and
\[
\E\|\theta_{T}-\theta^*\|^2= O\left(\frac{\tau^2(\log T)^3}{T}+\frac{1}{T}\sum_{t=1}^T \E e_t^2\right).
\]
Moreover, if $\tau=O(1)$ and $\E e_t^2=O(\frac{1}{t})$, we have
\[
\E\|\theta_{T}-\theta^*\|^2\leq O\left(\frac{(\log T)^3}{T}\right).
\]
\end{proof}

\section{Proof of Theorem \ref{thm:main3}} \label{sec:proof3}
Before we prove Theorem \ref{thm:main3}, we first prove an auxiliary lemma.
\begin{lemma} \label{lm:assumption}
Suppose Assumptions \ref{aspt:bound2} -- \ref{aspt:estimate2} hold, and $\pi(s|a)\leq C\mu^b(s)b(s|a)$ for some $C>0$. 
\begin{enumerate}
\item $z_t=(s_t,a_t,r_t,s_{t+1})$ is an ergodic sequence on  $\mathcal{S}\times \mathcal{A}\times \mathcal{S}$ with 
\[
\mu(s,a,s')=\mu^b(s)b(a|s)p(s'|s,a)p(r|s,a)
\]
and the  mixing time $\tau+1$. 
\item There exists a stochastic sequence $e_t$ with $\E e_t<\infty$
\[
\|\ghat (\theta_t, z_t)-g(\theta_t,z_t)\|_{\infty} \leq e_t.
\]
\item There exists $C_0>0$ such that
\[
\|\nabla l\|_{\infty},\|\ghat_t\|_{\infty},\|g\|_{\infty} \leq C_0.
\]
In addition, there exists $G>0$, such that $g(\theta,z)$ is $G$-Lipschitz continuous in $\theta$ on $\mathcal{C}$ for any $z\in \mathcal{S}\times \mathcal{A}\times \mathcal{S}$ 
\end{enumerate}
\end{lemma}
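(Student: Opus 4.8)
The plan is to verify that the problem-specific Assumptions \ref{aspt:bound2}--\ref{aspt:estimate2} on Algorithm \ref{alg:main} imply the three generic requirements of Section \ref{sec:SGD} — ergodicity with a finite mixing time, a controllable approximation error, and boundedness together with Lipschitz continuity of the update direction — so that Theorems \ref{thm:main} and \ref{th:main2} can be invoked. I would treat the three claims separately, since they rest on different assumptions and have different difficulty, handling the affine/product structure of $g$ once and reusing it.

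For Claim 1, the key observation is that the augmented process $z_t=(s_t,a_t,r_t,s_{t+1})$ is itself Markov and that its one-step transition depends on $z_t$ only through its last coordinate $s_{t+1}$: given $s_{t+1}$, the triple $(a_{t+1},r_{t+1},s_{t+2})$ is produced by the fixed kernel $b(a|s_{t+1})p(r|s_{t+1},a)p(s'|s_{t+1},a)$. Under Assumption \ref{aspt:ergodic2} the state chain $s_t$ is ergodic with stationary law $\mu^b$, so stationarity of $\mu^b$ combined with the conditional independence of $r$ and $s'$ given $(s,a)$ shows $\mu(s,a,r,s')=\mu^b(s)b(a|s)p(r|s,a)p(s'|s,a)$ is invariant for $z_t$. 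For the mixing time I would exploit the one-step lag: conditioning on $z_0$ only constrains $s_1$, after which $s_{\tau+1}$ is $\tau$ steps ahead and hence within total variation $1/4$ of $\mu^b$; since the remaining coordinates of $z_{\tau+1}$ are obtained from $s_{\tau+1}$ by a fixed stochastic kernel, the data-processing inequality for total variation yields $\|\Prob(z_{\tau+1}\in\cdot\mid z_0)-\mu\|_{TV}\le 1/4$, i.e.\ a mixing time of $\tau+1$.

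For Claim 3, I would note that in each of the update forms \eqref{eq:direct}, \eqref{eq:TD}, \eqref{eq:TD0} the movement $g(\theta,z)$ is \emph{affine} in $\theta$, being the product of the scalar residual $\delta(\theta,s)$ (affine in $\theta$) with a fixed vector depending only on $z$. Thus $g(\theta,z)=v(z)+M(z)\theta$ with $M(z)=w(z)\,\tilde{w}(z)^{T}$, possibly scaled by the weights $\pi/b$ or $\frac{1}{\mu^b}\frac{\pi}{b}$. Under Assumption \ref{aspt:bound2} the features $\fea,\rfea$ and rewards are bounded and $\theta\in\calC$ is bounded, while the hypothesis $\pi(a|s)\le C\mu^b(s)b(a|s)$ bounds the importance weight and, crucially, makes $\frac{1}{\mu^b(s)}\frac{\pi(a|s)}{b(a|s)}\le C$; hence $\sup_z\|M(z)\|$ and $\sup_{\theta\in\calC,z}\|g(\theta,z)\|_\infty$ are finite, giving $G=\sup_z\|M(z)\|$ and a uniform bound $C_0$. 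The estimated direction $\ghat_t$ only substitutes bounded probability estimates, so the same $C_0$ bounds $\|\ghat_t\|_\infty$, and $\|\nabla l\|_\infty\le C_0$ follows since by \eqref{eq:gradl} $\nabla l$ is a $\mu^b$-average of bounded direct-SGD gradients.

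Finally, for Claim 2 I would write $\ghat_t(\theta_t,z_t)-g(\theta_t,z_t)$ explicitly and observe that it is nonzero only through the replacement of $\xi^\pi,\phi^\pi$ (direct-/TD-SGD) or $1/\mu^b$ (TD(0)) by the sample averages of \eqref{eq:frequency}. Each estimated quantity is an affine or bilinear function of $\phat_t$ or $1/\hat\mu^b_t$ with coefficients controlled by the feature and reward bounds of Assumption \ref{aspt:bound2} and by $\theta\in\calC$, so a triangle-inequality expansion bounds the difference by a constant multiple of the estimation error $\tilde e_t$ of Assumption \ref{aspt:estimate2}; taking $e_t$ equal to this multiple proves the claim, and $\E e_t\le C(\E\tilde e_t^2)^{1/2}<\infty$ by Jensen's inequality since $\E\tilde e_t^2=O(1/t)$. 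I expect Claim 1 — specifically the clean argument that the augmented chain inherits a mixing time of exactly $\tau+1$ via the lag-one plus data-processing reasoning — to be the only step needing a genuine idea; Claims 2 and 3 are careful but essentially mechanical once the affine/outer-product structure of $g$ is identified.
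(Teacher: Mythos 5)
Your proposal is correct and follows essentially the same route as the paper: the lag-one conditioning argument for the augmented chain's mixing time (your data-processing phrasing is equivalent to the paper's explicit computation with the conditional function $h(s)=\E[f(s_0,a,r,s_1)\mid s_0=s]$), the case-by-case triangle-inequality bound reducing $\|\ghat_t-g\|$ to a constant multiple of $\tilde e_t$, and the affine/outer-product structure of $g$ in $\theta$ together with the bound $\pi(a|s)\le C\mu^b(s)b(a|s)$ controlling both the TD-SGD and TD(0) weights. No gaps.
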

\begin{proof}
For the first claim, 
fix any function $f$ on $\mathcal{Z}$ with $|f|\leq 1$.
Let
\[
h(s)=\E [f(s_0,a,r,s_1)|s_0=s]=\sum_{a,r,s'} f(s,a,r,s') b(a|s)p(s'|s,a)p(r|s,a).
\]
Note that $|h|\leq 1$ and
\[
\mu f=\sum_{s,a,s'} \mu^b(s) b(a|s)p(s'|s,a)p(r|s,a)f(s,a,r,s')=\sum_s \mu^b(s)h(s). 
\]
Next, 
\begin{align*}
|\E[f(z_{\tau+1})|z_0]-\mu f|&=|\E[f(z_{\tau+1})|s_1]-\mu f|\\
&=|\E[\E[f(z_{\tau+1})|s_{\tau+1}]|s_1]-\mu f|\\
&=|\E[h(s_{\tau+1})|s_1]- \mu^bh| \leq \frac{1}{4}.
\end{align*}

For the second claim, recall that
\[
\hat{\delta}_t(\theta, s)=\hat{\xi}^\pi(s)+(-\zeta+\gamma \hat{\phi}^\pi(s)-\fea(s))^T\theta.
\]
Then,
\[
|\hat{\delta}_t(\theta, s)|\leq \tilde C_0 + 3\tilde C_0\tilde C_1
\]
and
\[\begin{split}
&\hat{\delta}_t(\theta, s)-\delta_t(\theta, s)\\
=&\sum_{a,r} r (\phat_t(r|s,a)-p(r|s,a))\pi(a|s)+ \gamma \sum_{a,s'} (\phat_t(s'|s,a)-p(s'|s,a))\pi(a|s)\fea(s')^T\theta\\  
\leq& \tilde C_0\tilde e_t + \tilde C_0\tilde C_1 \tilde e_t = (\tilde C_0+\tilde C_0\tilde C_1) \tilde e_t.
\end{split}\]
Under direct-SGD,
\begin{align*}
\ghat_t(\theta,z)-g(\theta,z)=&\gamma \hat{\delta}_t(\theta, s) \sum_{s,a,s'} (p(s'|a,s)-\phat_t (s'|a,s))\pi(a|s)\fea(s')\\
&+ (\hat{\delta}_t(\theta, s)-\delta_t(\theta, s))(-\zeta+\gamma \phi^\pi(s)-\fea(s))\\
\leq& (\tilde C_0 + 3\tilde C_0\tilde C_1)\tilde C_0\tilde e_t+3\tilde C_0(\tilde C_0+\tilde C_0\tilde C_1)\tilde e_t
=\tilde C_0^2(4+6\tilde C_1)\tilde e_t.
\end{align*}
Under TD-SGD,
\begin{align*}
\ghat_t(\theta,z)-g(\theta,z)=&(\hat{\delta}_t(\theta, s)-\delta_t(\theta, s))\frac{\pi(a|s)}{b(a|s)}(-\zeta+\gamma \fea(s')-\fea(s))\\
\leq&3C\tilde C_0(\tilde C_0+\tilde C_0\tilde C_1)\tilde e_t.
\end{align*}
Under TD(0),
\begin{align*}
\ghat_t(\theta,z)-g(\theta,z)=&\left(\frac{1}{\mu^b(s)}-\frac{1}{\hat\mu^b(s)}\right)\frac{\pi(a|s)}{b(a|s)}(r+(-\zeta+\gamma\fea(s')-\fea(s))^T\theta)(\fea(s)+\zeta)\\
\leq&C(\tilde C_0+3\tilde C_0\tilde C_1)2\tilde C_0\tilde e_t.
\end{align*}

For the third claim, we first note that
\[
|\delta(\theta, s)|=|\xi^{\pi}(s)+(-\zeta+\gamma\phi^{\pi}(s)-\fea(s))^T\theta| \leq \tilde C_0+3\tilde C_0 \tilde C_1.
\]
Under direct-SGD,
\[
\|g(\theta,z)\|=|\delta(\theta, s)|\|-\zeta+\gamma \phi^\pi(s)-\fea(s)\| \leq (\tilde C_0+2\tilde C_0\tilde C_1)3\tilde C_0
\]
Similarly, 
$\|\ghat_t(\theta,z)\|\leq (M_0+2M_0C_1)3M_0$
and 
\[
\|\nabla l(\theta)\|\leq \left\|\sum_{s,a,s'}\mu^b(s)b(a|s)p(s'|s,a)g(\theta,(s,a,r,s'))\right\| \leq (\tilde C_0+2\tilde C_0 \tilde C_1)3\tilde C_0.
\]
Next, note that
\[\begin{split}
\|g(\theta_1,z)-g(\theta_2,z)\|
&=\|(\delta(\theta_1, s)-\delta(\theta_2, s))(-\zeta+\gamma \phi^\pi(s)-\fea(s))\|\\
&=|(-\zeta+\gamma\phi^{\pi}(s)-\fea(s))(\theta_1-\theta_2))|\cdot\|-\zeta+\gamma \phi^\pi(s)-\fea(s)\|\\
&\leq 9M_0^2\|\theta_1-\theta_2\|.
\end{split}\]
Under TD-SGD,
\[
\|g(\theta,z)\|=|\delta(\theta, s)|\frac{\pi(s|a)}{b(s|a)}\|-\zeta+\gamma \fea(s')-\fea(s)\| \leq (\tilde C_0+2\tilde C_0\tilde C_1)3C\tilde C_0
\]
Similarly, $\|\ghat_t(\theta,z)\|\leq (\tilde C_0+2\tilde C_0\tilde C_1)3C\tilde C_0$ and $\|\bar g(\theta)\| \leq (\tilde C_0+2\tilde C_0\tilde C_1)3C\tilde C_0$.\\
Next, note that
\[\begin{split}
\|g(\theta_1,z)-g(\theta_2,z)\|
&=\left\|(\delta(\theta_1, s)-\delta(\theta_2, s))\frac{\pi(a|s)}{b(a|s)}(-\zeta+\gamma \fea(s')-\fea(s))\right\|\\
&=\frac{\pi(a|s)}{b(a|s)}|(-\zeta+\gamma\fea(s')-\fea(s))(\theta_1-\theta_2))|\cdot\|-\zeta+\gamma \fea(s')-\fea(s)\|\\
&\leq 9C\tilde C_0^2\|\theta_1-\theta_2\|.
\end{split}\]
Under TD(0),
\[\begin{split}
\|g(\theta,z)\|&=\frac{\pi(s|a)}{\mu^b(s)b(s|a)}|(r+(-\zeta+\gamma\fea(s')-\fea(s))^T\theta)|\|\fea(s)+\zeta\|\\
&\leq C(\tilde C_0+3\tilde C_0\tilde C_1)2\tilde C_0
\end{split}\]
Similarly, $\|\ghat_t(\theta,z)\|\leq C(\tilde C_0+3\tilde C_0\tilde C_1)2\tilde C_0$ and $\|\bar g(\theta)\| \leq C(\tilde C_0+3\tilde C_0\tilde C_1)2\tilde C_0$.\\
Next, note that
\[\begin{split}
\|g(\theta_1,z)-g(\theta_2,z)\|
&=\frac{\pi(s|a)}{\mu^b(s)b(s|a)}|(-\zeta+\gamma \fea(s')-\fea(s))^T(\theta_1-\theta_2)|\|\fea(s)+\zeta\|\\
&\leq 6C\tilde C_0^2\|\theta_1-\theta_2\|.
\end{split}\]
\end{proof}

\begin{proof}[Proof of theorem \ref{thm:main3}]
With the linear parameterization of the value function, the loss function $l(\theta)$ defined in \eqref{eq:loss} is convex. 
In addition, from Lemma \ref{lm:assumption}, 
under Assumptions \ref{aspt:bound2} -- \ref{aspt:estimate2}, Assumptions \ref{aspt:ergodic} -- \ref{aspt:bound}
hold.
Thus, Theorem \ref{thm:main} holds with $\E e_t=O(1/\sqrt{t})$, i.e., 
\[\E l(\bar{\theta}_T)-l(\theta^*)=O\left(\log T/\sqrt{T}\right).\]
If $l(\theta)$ is strongly convex, $\bar g$ for the SGD-based updates is a c-contraction by Lemma \ref{lem:gen}.
Then, by Lemma \ref{lm:assumption}, Theorem \ref{th:main2} holds with $\E e_t^2=O(1/t)$, i.e.,
\[
\E\|\theta_T-\theta^*\|^2=O((\log T)^3/T).
\]
By strong convexity of $l$ and boundlessness of $\bar{g}$, we further have $\E l(\bar{\theta}_T)-l(\theta^*)=O\left((\log T)^3/T\right)$.
Similarly, for the TD(0) update, if $\bar g$ is a c-contraction, Theorem \ref{th:main2} holds with $\E[e_t^2]=O(1/t)$.
\end{proof}

\section{Proof of Theorem \ref{thm:main1}}
\begin{proof}
Note that because $l(\theta^*)=0$, 
\[
l(\theta)-l(\theta^*)=\sum \mu^b (s)\frac12 \left((-\rfea+\gamma \phi^{\pi}(s)-\fea(s))^T(\theta-\theta^*)\right)^2.
\]
Let $f(s)=(\theta-\theta^*)^T\fea(s)=V_\theta(s)-V^{\pi}(s)$ and $\chi=(\theta-\theta^*)^T \rfea=\bar r_{\theta}-\bar r^{\pi}$. 
Note that because
\[
P^\pi f(s)= \sum_{s^{\prime}} P^\pi(s,s^{\prime}) (\theta-\theta^*)^T\fea(s')=(\theta-\theta^*)^T\phi^{\pi}(s),
\]
\[
l(\theta)-l(\theta^*)=\frac12 \sum \mu^b (s) (-\chi+\gamma P^\pi f(s)-f(s))^2\geq \frac{1}{2C}
\sum \mu^\pi (s) (-\chi+\gamma P^\pi f(s)-f(s))^2.
\]

Let $\bar{f}=\sum \mu^\pi(s) f(s)$. Then,
\begin{align*}
&\sum \mu^\pi (s) (\gamma P^\pi f(s)-f(s)-\chi)^2\\
=&\sum \mu^\pi (s) (\gamma (P^\pi f(s)-\bar{f})-(f(s)-\bar{f})-((1-\gamma)\bar{f}+\chi))^2\\
=&\var_{\mu^\pi}{f}+\gamma^2\var_{\mu^\pi}(P^\pi f)-2\gamma\text{cov}_{\mu^\pi}(f, P^\pi f)+((1-\gamma)\bar{f}+\chi)^2\\
\geq& \var_{\mu^\pi}{f}+\gamma^2\var_{\mu^\pi}(P^\pi f)-2\gamma \sqrt{\var_{\mu^\pi}(f)\var_{\mu^\pi}(P^\pi f)}+((1-\gamma)\bar{f}+\chi)^2
\end{align*}
If we let $b=\sqrt{\var_{\mu^\pi}(f)}$,
since the function $h(x)=\gamma^2 x^2-2\gamma x b$ is decreasing on $[0, b/\gamma]$ and 
\[\var_{\mu^\pi}(P^\pi f)\leq (1-\lambda)^2\var_{\mu^\pi}(f)\] 
(because $\lambda$ is the spectral gap of $P^{\pi}$), 
\[
l(\theta)-l(\theta^*)\geq \frac{1}{2C}\left( \var_{\mu^\pi}(f)(1-\gamma(1-\lambda))^2+((1-\gamma)\bar{f}+\chi)^2\right).
\]
When $\gamma<1$, $\chi=0$ and
\[
l(\theta)-l(\theta^*)\geq \frac{1}{2C}\left( \var_{\mu^\pi}(f)(1-\gamma(1-\lambda))^2+(1-\gamma)^2\bar{f}^2\right).
\]
Then,
\[
\sum  \mu^\pi(s) (V^*(s)-V_{\theta}(s))^2=\var_{\mu^\pi} (f)+\bar{f}^2
\leq \frac{2C}{(1-\gamma(1-\lambda))^2}(l(\theta)-l(\theta^*)). 
\]
When $\gamma=1$, 
\[
l(\theta)-l(\theta^*)\geq \frac{1}{2C} \left( \var_{\mu^\pi}(f)\lambda^2+\chi^2\right).
\]
Then, 
\[\begin{split}
&\min_{a}\sum  \mu^\pi(s) (V^{\pi}(s)+a-V_{\theta}(s))^2+(\bar r^{\pi}-\bar r_\theta)^2\\
\leq& \sum  \mu^\pi(s) (V^{\pi}(s)- \mu^\pi V^{\pi}-V_{\theta}(s)+ \mu^\pi V_{\theta})^2+(\bar r^{\pi}-\bar r_\theta)^2\\
=&\var_{\mu^\pi}(f)+\eta^2
\leq \frac{2C}{\lambda^2}(l(\theta)-l(\theta^*)).
\end{split}\]
\end{proof}

\section{Proof of Theorem \ref{th:approx_iteration}}
\begin{proof}
To simplify the notation, we write $\pi=\pi_k$ and $\sigma=\pi_{k+1}$. 
Let 
\[D^\pi(s)=|V^\pi(s)-\Vhat^{\pi}(s)|.\]
We also define some notations for the transition probabilities.
Let $\tilde r(s,a)=\sum_r r p(r|s,a)$ and $\E^\pi r(s)=\sum_a \pi(a|s)r(s,a)$.
Let $P^{\sigma}_{k}(s,s')$ denote the probability of transitioning from $s$ to $s'$ after applying policy $\sigma$ for $k$ time units.
Let $P^{\sigma,\pi}_{k-1,1}(s,s')$ denote the probability of transitioning from $s$ to $s'$ after applying $\sigma$ for the first $k-1$ units of time and $\pi$ for the last  epoch.
Denote $V^{\sigma,\pi}_{m}$ as value function if we apply $\sigma$ for the first $m$ units of time and then $\pi$ thereafter, i.e., 
\[
V^{\sigma,\pi}_{m}(s)=\sum_{j=0}^k \gamma^j \sum_{s_j,a_j}P^{\sigma}_j(s,s_j)\pi(a_j|s_j) \bar r(s_j,a_j) +\gamma^m \sum_{s_m} V^\pi(s_m) P^{\sigma,\pi}_{m,1}(s,s_m). 
\]

Note that
\[
V^{\sigma,\pi}_{m}(s)=\sum_{a}\sigma(a|s)  \bar r(s,a)+\gamma\sum_{s'}V^{\sigma,\pi}_{m-1}(s')P_1^{\sigma}(s,s').
\]
We will next show that 
\begin{equation}\label{eq:claim}
V^{\sigma,\pi}_{m-1}(s)\leq V^{\sigma,\pi}_{m}(s)+\gamma^m \sum_{s'}D^\pi(s')(P^{\sigma,\pi}_{m-1,1}(s,s')+P^{\sigma}_{m}(s,s'))
\end{equation}
First,
\begin{align*}
V^{\pi}(s)&=\E^{\pi}r(s) +\gamma \sum_{s'} V^{\pi}(s') P^{\pi}(s,s')\\
&\leq \E^{\pi}r(s) +\gamma \sum_{s'}\Vhat^\pi(s') P^{\pi}(s,s')+\gamma \sum_{s'}D^\pi(s')P^\pi(s,s')\\
&\leq \E^{\sigma}r(s) +\gamma \sum_{s'} \Vhat^\pi(s') P^{\sigma}(s,s')+\gamma \sum_{s'}D^\pi(s')P^\pi(s,s')\\
& \text{by the optimality of $\sigma$ under $\Vhat^\pi$}\\
&\leq \E^{\sigma}r(s) +\gamma \sum_{s'} V^\pi(s') P^{\sigma}(s,s')+\gamma \sum_{s'}D^\pi(s')(P^\pi(s,s')+P^{\sigma}(s,s'))\\
&= V^{\sigma,\pi}_{1}(s)+\gamma \sum_{s'}D^\pi(s')(P^\pi(s,s')+P^{\sigma}(s,s')).
\end{align*}
Second, suppose the claim \eqref{eq:claim} holds for $m-1$.
\begin{align*}
V^{\sigma,\pi}_{m}(s)&=\E^{\sigma}r(s) +\gamma \sum_{s'} V_{m-1}^{\pi,\sigma}(s') P^{\sigma}(s,s') \\
&\leq \E^{\sigma}r(s) +\gamma \sum_{s'} \left[V_{m}^{\pi,\sigma}(s')+\gamma^{m}\sum_{s''}D^\pi(s'')(P^{\sigma,\pi}_{m-1,1}(s',s'')+P^{\sigma}_{m}(s',s''))\right]P^{\sigma}(s,s') \\
&=V^{\sigma,\pi}_{m+1}(s) +\gamma^{m+1} \sum_{s''}(P^{\sigma,\pi}_{m,1}(s,s'')+P^{\sigma}_{m+1}(s,s''))D^\pi(s'').
\end{align*}
We have thus proved \eqref{eq:claim}.

Since $V^{\sigma}(s)=\lim_{m\to\infty} V^{\sigma,\pi}_{m}(s)$, $V^{\sigma,\pi}_{1}(s)\leq V^{\sigma}(s)+\Delta^\pi(s)$, where
\[
\Delta^\pi(s)=\sum_{m=1}^\infty \gamma^{m} \sum_{s'}(P^{\sigma,\pi}_{m,1}(s,s')+P^{\sigma}_{m+1}(s,s'))D^\pi(s').
\]
Then,
\begin{align*}
&V^*(s)-V^{\sigma}(s)\leq V^{\pi^*,\pi^*}_{1}-V^{\sigma,\pi}_{1}(s)+\Delta^{\pi}(s) \\
=&\E^{\pi^*}r(s) + \gamma \sum_{s'} V^{\pi^*}(s') P^{\pi^*}(s,s')
-\left(\E^{\sigma}r(s) + \gamma \sum_{s'} V^\pi(s') P^{\sigma}(s,s')\right)+\Delta^{\pi}(s)\\
\leq& \E^{\pi^*}r(s) +\gamma \sum_{s'} V^{\pi^*}(s') P^{\pi^*}(s,s')-\E^{\sigma}r(s) 
-\gamma \sum_{s'} \Vhat^\pi(s') P^{\sigma}(s,s') \\
&+\gamma \sum_{s'}D^\pi(s') P^{\pi}(s,s')+\Delta^{\pi}(s)\\
\leq& \E^{\pi^*}r(s) + \gamma  \sum_{s'} V^{\pi^*}(s') P^{\pi^*}(s,s')
-\E^{\pi^*}r(s) - \gamma  \sum_{s'} \Vhat^\pi(s') P^{\pi^*}(s,s')\\
&+\gamma \sum_{s'}D^\pi(s') P^{\pi}(s,s')+\Delta^{\pi}(s) \quad \text{By the optimality of $\sigma$ under $\Vhat^\pi$}\\
\leq& \gamma \sum_{s'} (V^{\pi^*}(s')-V^{\pi}(s')) P^{\pi^*}(s,s')+\gamma \sum_{s'}D^\pi(s') P^{\pi^*}(s,s')+\gamma \sum_{s'}D^\pi(s') P^{\pi}(s,s')+\Delta^{\pi}(s).
\end{align*}
For $\gamma<1$, because $\mu^{\pi^*}(s')=\sum_s \mu^{\pi^*}(s)P^{\pi^*}(s,s')$,
\[\begin{split}
\sum_{s} \mu^{\pi^*}(s)[V^*(s)-V^{\sigma}(s)]\leq& \gamma \sum_{s} \mu^{\pi^*}(s)[V^{\pi^*}(s)-V^{\pi}(s)]
+\gamma \sum_{s}\mu^{\pi^*}(s)D^\pi(s)\\
&+\gamma\sum_{s}\sum_{s'}\mu^{\pi^*}(s)P^{\pi}(s,s')D^\pi(s')
+ \sum_s\mu^{\pi^*}(s)\Delta^{\pi}(s).
\end{split}\]
Lastly, since $\mu^{\pi^*}\leq C\mu^{\sigma}\leq C^2\mu^{\pi}$, we have
\[
\E\sum_{s}\mu^{\pi^*}(s)D^\pi(s)\leq C^2\E\sum_{s}\mu^{\pi}(s)D^\pi(s)\leq C^2\epsilon_k
\] 
and
\[\begin{split}
\E\sum_{s'}\sum_{s}\mu^{\pi^*}(s)P^{\pi}(s,s')D^\pi(s')
&\leq C^2\E\sum_{s'}\sum_{s}\mu^{\pi}(s)P^{\pi}(s,s')D^\pi(s')\\
&= C^2\E\sum_{s'}\mu^{\pi}(s')D^\pi(s') \leq C^2\epsilon_k.
\end{split}\]
Moreover,
\[\begin{split}
\E\sum_s\mu^{\pi^*}(s)\Delta^{\pi}(s)=&\E\sum_{s}\mu^{\pi^*}(s) \sum_{m=1}^\infty \gamma^m\sum_{s'}P^{\sigma,\pi}_{m,1}(s,s')D^\pi(s')\\
&+\E\sum_{s}\mu^{\pi^*}(s) \sum_{m=1}^\infty \gamma^m\sum_{s'}P^{\sigma}_{m+1}(s,s')D^\pi(s'),
\end{split}\]
where
\begin{align*}
\E\sum_{s}\mu^{\pi^*}(s)\sum_{s'}P^{\sigma,\pi}_{m,1}(s,s')D^\pi(s') &\leq C\E\sum_{s'}\sum_{s}\mu^{\sigma}(s) P^{\sigma,\pi}_{m,1}(s,s')D^\pi(s')\\
&\leq C\E\sum_{s'}\sum_{s}\mu^{\sigma}(s) P^{\pi}(s,s')D^\pi(s')\\
&\leq C^2\E\sum_{s'}\sum_{s}\mu^{\pi}(s) P^{\pi}(s,s')D^\pi(s')\\
&=C^2\E\sum_{s'}\mu^{\pi}(s')D^\pi(s')\leq C^2 \epsilon_k
\end{align*}
and
\begin{align*}
\E\sum_{s}\mu^{\pi^*}(s)\sum_{s'}P^{\sigma}_{m+1}(s,s')D^\pi(s') &\leq C\E\sum_{s'}\sum_{s}\mu^{\sigma}(s) P^{\sigma}_{m+1}(s,s')D^\pi(s')\\
&=C\E\sum_{s'}\mu^{\pi}(s')D^\pi(s')\leq C^2 \epsilon_k. 
\end{align*}
Then,
\[
\E\sum_{s} \mu^{\pi^*}(s)[V^*(s)-V^{\sigma}(s)]\leq \gamma \E\sum_{s}\mu^{\pi^*}(s) [V^{\pi^*}(s)-V^{\pi}(s)]
+\left(2\gamma+\frac{2\gamma}{1-\gamma}\right)C^2\epsilon_k.
\]
The final claim can be obtained through Gronwall's inequality. 
\end{proof}

\section{Proof of Corollary \ref{cor:complex}}
\begin{proof}
Let $\epsilon$ denote the target accuracy level. In particular, we would like to find an appropriate sample size $T$ (number of iterations in policy evaluation) and number of policy updates $K$ (number of iterations in policy iteration), such that
\begin{equation} \label{eq:accuracy}
\E \sum_{s} \mu^*(s)(V^{*}(s)-V^{\pi_{K}}(s))=\epsilon.
\end{equation}

From Theorem \ref{th:approx_iteration}, we have
\[
\E \sum_{s} \mu^*(s)(V^{*}(s)-V^{\pi_{K}}(s))\leq \gamma^K\E\sum_s\mu^*(s)(V^{\pi^*}(s)-V^{\pi_{0}}(s))+\frac{\sum_{k=1}^K\gamma^{K-k}C^2\epsilon_k}{1-\gamma}
\]
where
\[
\epsilon_k = \E \sum_s \mu^{\pi_k}(s)|V^{\pi_k}(s)-\Vhat^{\pi_k}(s)|.
\]
Then, to achieve an $\epsilon$ accuracy as defined in \eqref{eq:accuracy}, 
we need $\gamma^K=C\epsilon$ and $\epsilon_k=C\epsilon(1-\gamma)^2$.
This implies that we require 
\[
K=O\left(\frac{\log\epsilon}{\log\gamma}\right)=O\left(\frac{\log(1/\epsilon)}{1-\gamma}\right).
\]
In addition, under Assumptions \ref{aspt:bound2} -- \ref{aspt:ergodic2}, Theorems \ref{thm:main} and \ref{thm:main1} indicate that
$\epsilon_k^2=O(\log T/\sqrt{T})$. Thus, we require
\[
T=O \left(\frac{\log(1/\epsilon)\log(1/(1-\gamma))}{(1-\gamma)^8\epsilon^4}\right).
\]
In this case, the total complexity is
\[
TK = O \left(\frac{\log(1/\epsilon)^2\log(1/(1-\gamma))}{(1-\gamma)^9\epsilon^4}\right).
\]
If $\bar g$ is a c-contraction, Theorems \ref{th:main2} indicates that
$\epsilon_k^2=O((\log T)^2/T)$. Then,
\[
T=O\left(\frac{\log(1/\epsilon)\log(1/(1-\gamma))}{(1-\gamma)^4\epsilon^2}\right).
\]
In this case, we achieve an improved complexity of
\[
TK = O\left(\frac{\log(1/\epsilon)^2\log(1/(1-\gamma))}{(1-\gamma)^5\epsilon^2}\right).
\]
\end{proof}
\end{appendix}
\end{document}